\newif\ifcleanversion
    \newcommand{\revadd}[1]{#1}
    \newcommand{\revdel}[1]{}
    \newcommand{\revadd}[1]{\textcolor{red}{#1}}
    \newcommand{\revdel}[1]{\textcolor{red}{\sout{#1}}}
\crefname{hypothesis}{Hypothesis}{Hypotheses}
\crefname{fact}{Fact}{Facts}
\title{Correcting Auto-Differentiation in Neural-ODE Training\thanks{Submitted to the editors on June 9th, 2025.
\funding{Q.L. acknowledges support from Vilas Early Career award and ONR-N00014-21-1-214. The research of Y.X., S.C., and Q.L. is supported in part by NSF-CAREER-1750488 and Office of the Vice Chancellor for Research and Graduate Education at the University of Wisconsin Madison with funding from the Wisconsin Alumni Research Foundation. The work of Y.X., S.C. and Q.L. is supported in part by NSF via grant DMS-2023239. }}}
\author{Yewei Xu\thanks{Department of Mathematics, University of Wisconsin-Madison, Madison WI 53706 (\email{xu464@wisc.edu}).}
\and Shi Chen\thanks{Department of Mathematics, Massachusetts Institute of Technology, Cambridge MA 02139
  (\email{schen636@mit.edu}, \url{https://simonchenthu.github.io/}).} 
\and Qin Li\thanks{Department of Mathematics, University of Wisconsin-Madison, Madison WI 53706
  (\email{qinli@math.wisc.edu}, \url{https://people.math.wisc.edu/\string~qinli/}).}
}
\newcommand{\NN}{\mathsf{NN}}
\newcommand{\rp}{\mathsf{p}}
\newcommand{\sfq}{\mathsf{q}}
\newcommand{\rT}{\mathsf{T}}
\newcommand{\rz}{\mathsf{z}}
\newcommand{\calE}{\mathcal{E}}
\newcommand{\new}{\mathrm{new}}
\newcommand{\old}{\mathrm{old}}
\newcommand{\rxi}{\mathsf{\xi}}
\newcommand{\rtheta}{\mathsf{\theta}}
\newcommand{\rd}{\mathrm{d}}
\newcommand{\calO}{\mathcal{O}}
\def\@suppseccntformat#1{\csname the#1\endcsname\quad}
\newcommand{\supplement}{%
  \clearpage
  \begin{center}
    \LARGE\bfseries Supplementary Material
  \end{center}
  \vspace{1em}

  \setcounter{section}{0}
  \setcounter{subsection}{0}
  \setcounter{subsubsection}{0}
  \setcounter{equation}{0}
  \setcounter{figure}{0}
  \setcounter{table}{0}
  \@ifundefined{c@theorem}{}{\setcounter{theorem}{0}}
  \@ifundefined{c@lemma}{}{\setcounter{lemma}{0}}
  \@ifundefined{c@proposition}{}{\setcounter{proposition}{0}}
  \@ifundefined{c@corollary}{}{\setcounter{corollary}{0}}
  \@ifundefined{c@definition}{}{\setcounter{definition}{0}}
  \@ifundefined{c@remark}{}{\setcounter{remark}{0}}
  \@ifundefined{c@assumption}{}{\setcounter{assumption}{0}}
  \@ifundefined{c@claim}{}{\setcounter{claim}{0}}
  \@ifundefined{c@fact}{}{\setcounter{fact}{0}}
  \@ifundefined{c@hypothesis}{}{\setcounter{hypothesis}{0}}

  \renewcommand\thesection{SM\arabic{section}}
  \renewcommand\thesubsection{SM\arabic{section}.\arabic{subsection}}
  \renewcommand\thesubsubsection{SM\arabic{section}.\arabic{subsection}.\arabic{subsubsection}}

  \let\@seccntformat\@suppseccntformat

  \renewcommand{\section}{\@startsection{section}{1}{\z@}%
    {-3.5ex \@plus -1ex \@minus -.2ex}%
    {2.3ex \@plus.2ex}%
    {\normalfont\normalsize\bfseries}}

  \renewcommand{\subsection}{\@startsection{subsection}{2}{\z@}%
    {-3.25ex \@plus -1ex \@minus -.2ex}%
    {1.5ex \@plus .2ex}%
    {\normalfont\normalsize\bfseries}}

  \renewcommand{\subsubsection}{\@startsection{subsubsection}{3}{\z@}%
    {-3.25ex \@plus -1ex \@minus -.2ex}%
    {1.5ex \@plus .2ex}%
    {\normalfont\normalsize\itshape}}

  \renewcommand{\theequation}{SM(\arabic{equation})}
  \renewcommand{\thefigure}{SM\arabic{figure}}
  \renewcommand{\thetable}{SM\arabic{table}}
  \renewcommand{\thetheorem}{SM\arabic{section}.\arabic{theorem}}
  \renewcommand{\thelemma}{SM\arabic{section}.\arabic{lemma}}
  \renewcommand{\theproposition}{SM\arabic{section}.\arabic{proposition}}
  \renewcommand{\thecorollary}{SM\arabic{section}.\arabic{corollary}}
  \renewcommand{\thedefinition}{SM\arabic{section}.\arabic{definition}}
  \renewcommand{\theremark}{SM\arabic{section}.\arabic{remark}}
  \renewcommand{\theassumption}{SM\arabic{section}.\arabic{assumption}}
  \renewcommand{\theclaim}{SM\arabic{section}.\arabic{claim}}
  \renewcommand{\thefact}{SM\arabic{section}.\arabic{fact}}
  \renewcommand{\thehypothesis}{SM\arabic{section}.\arabic{hypothesis}}
}
\begin{document}

\maketitle
\begin{abstract}
Does the use of auto-differentiation yield reasonable updates for deep neural networks (DNNs)? Specifically, when DNNs are designed to adhere to neural ODE architectures, can we trust the gradients provided by auto-differentiation? Through mathematical analysis and numerical evidence, we demonstrate that when neural networks employ high-order methods, such as Linear Multistep Methods (LMM) or Explicit Runge-Kutta Methods (ERK), to approximate the underlying ODE flows, brute-force auto-differentiation often introduces artificial oscillations in the gradients that prevent convergence. In the case of Leapfrog and 2-stage ERK, we propose simple post-processing techniques that effectively \revdel{eliminates} \revadd{eliminate} these oscillations, correct the gradient computation and thus returns the accurate updates.
\end{abstract}

\begin{keywords}
Neural ODE, Auto-Differentiation, Linear Multistep Method, Explicit Runge-Kutta
\end{keywords}

\begin{MSCcodes}
65D25, 65L06, 90C31
\end{MSCcodes}

\section{Introduction}

\begin{figure*}[ht]
\begin{center}
    \subfigure[]{\label{fig:failure_auto_1_leapfrog}
    \includegraphics[width=0.475\textwidth]{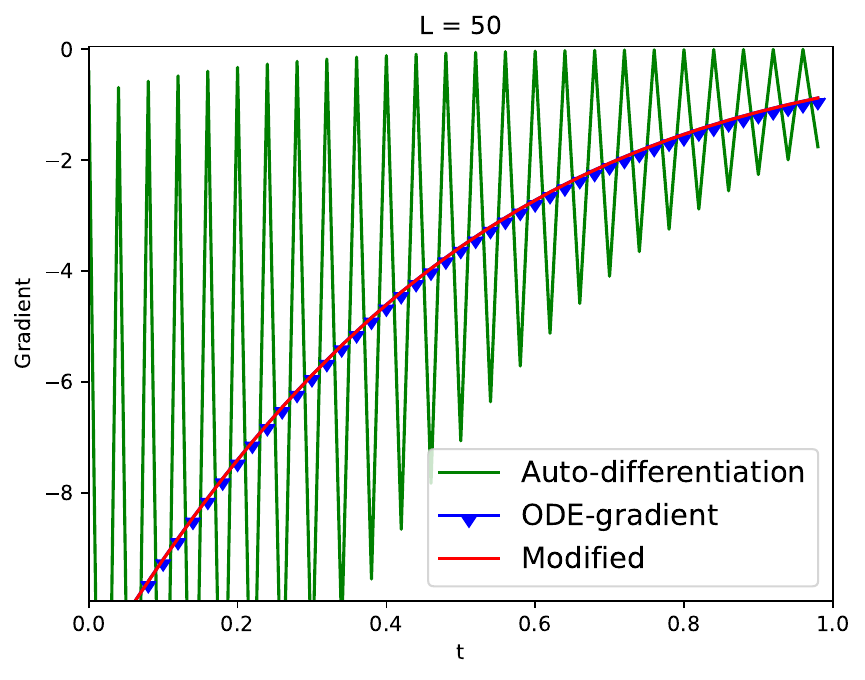}
    }
    \subfigure[]{\label{fig:failure_auto_1_midpoint}
    \includegraphics[width=0.475\textwidth]{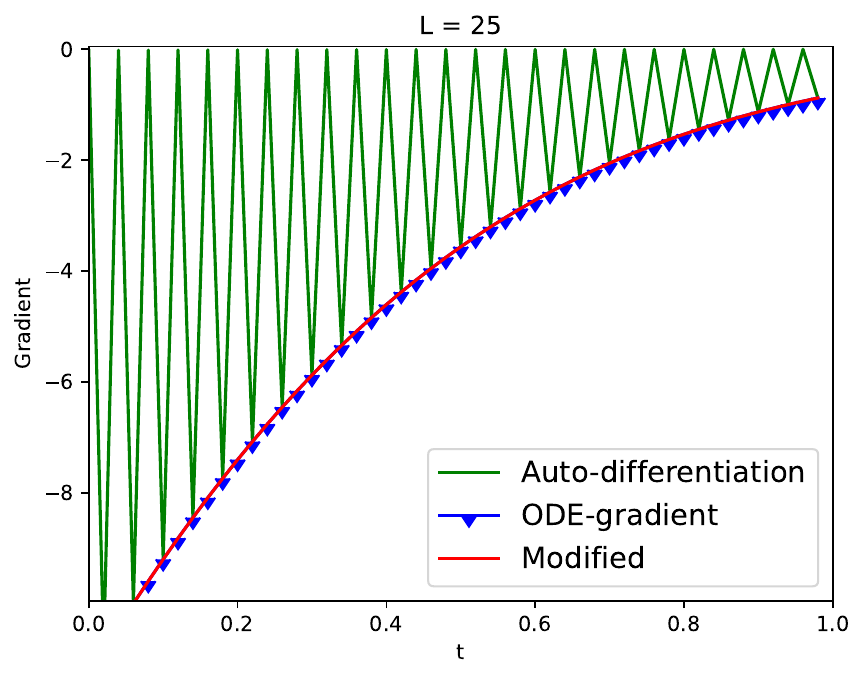}
    } 
    \subfigure[]{\label{fig:failure_auto_1_ralston}
    \includegraphics[width=0.475\textwidth]{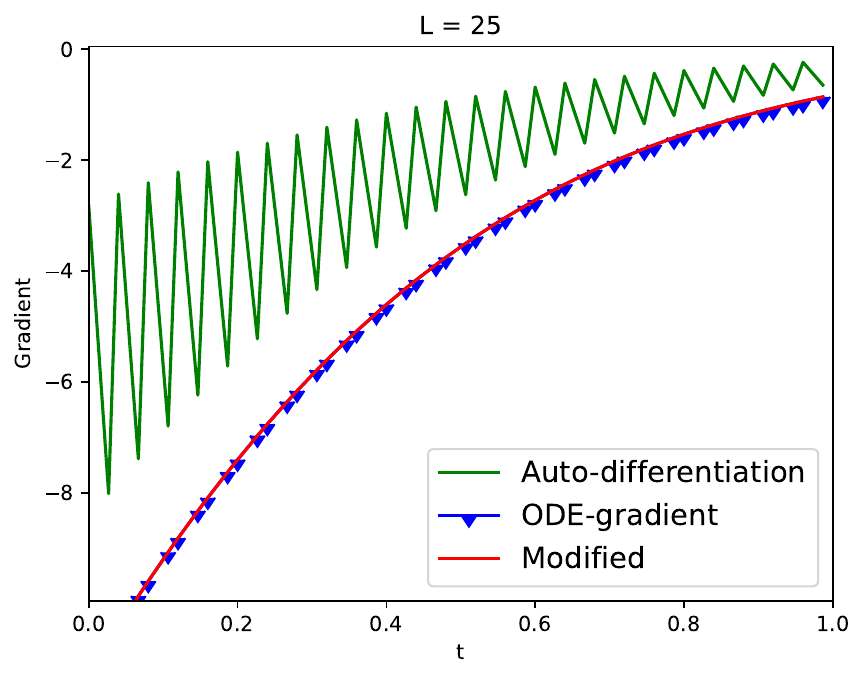}
    }
    \subfigure[]{\label{fig:failure_auto_1_nystrom}
    \includegraphics[width=0.475\textwidth]{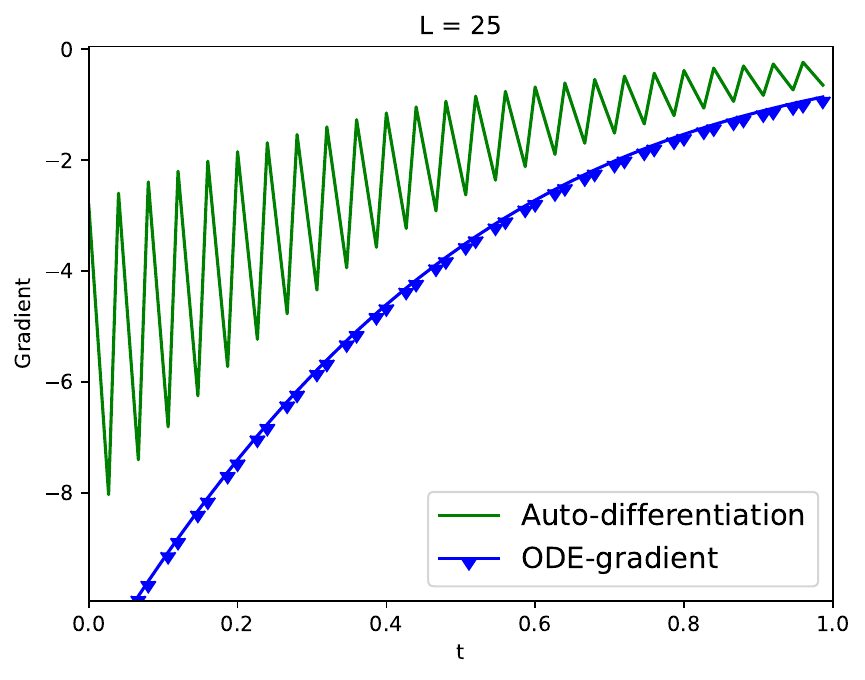}
    }
\caption{The four panels show gradients computed for four different discretization schemes -- Leapfrog (a), Midpoint (b), Ralston (c) and \revdel{Nystrom} \revadd{Nystr\"{o}m} (d). In all four cases, the gradients computed using auto-differentiation (green \revadd{oscillatory solid lines}) generate artificial severe oscillations and deviate from the true gradient of the underlying ODE (blue \revadd{curves with triangular markers}). In this paper, we propose a correction method (red \revadd{curves overlapping with the blue curves marked by triangles} in all but the last panel) that, through a very simple algebraic manipulation of auto-differentiation, returns gradients that agree with the underlying ODE updates.}
\label{fig:failure_auto}
\end{center}
\end{figure*}

Can we trust auto-differentiation for computing gradients in the training of neural networks (NNs)? When considering NNs that represent neural ODEs, this general question simplifies to evaluating whether the auto-differentiation returns a good update to the ODE parameters.

Neural ODE~\cite{ChRuBeDu:2018neural} is a class of prominent deep neural network models and has been investigated widely for various tasks~\cite{RuChDu:2019latent,DuDoTe:2019augmented,GhKeBi:2019anode,PaNaReMoLa:2021normalizing,RaMaMaWaZuSuSkRaEd:2020universal,BrNoKo:2020machine,Choi2023,NEURIPS2020_4db73860,scholz2023latent,wang2021forecasting,Zhou_Li_2021,Chen2022,10.1145/3534678.3539245,Chen2020Symplectic,pmlr-v161-cobb21a,10.5555/3504035.3504378,8909789}. 
This architecture exploits some ODE structures and views the NN layers as the discrete-in-time representation of the ODE flows. 
Since its introduction, various perspectives have been taken to examine the performance of auto-differentiation on ODE-nets. 
As a black-box tool, auto-differentiation is easy-to-use and usually returns a reliable computation of gradients. 
Nevertheless, already in~\cite{ChRuBeDu:2018neural}, the authors argued that auto-differentiation requires unnecessary computation, but instead, they proposed a cheaper direct simulation of an adjoint solver for assembling the gradient.

We take on a different perspective by shifting our focus from computational efficiency to accuracy. 
Suppose NN is built as a discretization of forward Euler type for the underlying ODE~\footnote{Such models are also known as Deep Residual Networks (Deep ResNets)~\cite{E17}.}, it was shown in earlier results~\cite{Lu19:Meanfield} that auto-differentiation seems to return \revdel{reasonable} gradients \revadd{that are numerically stable and often lead to accurate updates}. 
For high-order discretized representation of the underlying flow, we find the story is vastly different. 
Indeed, as demonstrated in Figure~\ref{fig:failure_auto}, four very simple high-order discretization were employed to translate an underlying ODE flow to an NN architecture. 
They are Linear Multistep Methods (LMM), Midpoint, Ralston and \revdel{Nystrom} \revadd{Nystr\"{o}m} respectively, with the latter three fall in the category of Explicit Runge-Kutta Methods (ERK). 
For all of them, direct application of auto-differentiation returns gradients that suffer from severe artificial oscillations, and deviate from the true gradients computed from the underlying ODEs. 
Moreover, among the three ERK methods, midpoint method captures the correct gradient information at half-integer points, and returns zeros for integer points. 
This means only half-integer points get updated in the training process. 
In comparison, Ralston and \revdel{Nystrom} \revadd{Nystr\"{o}m} give gradients that are completely wrong.

This strange behavior prompts the following questions:

\medskip

\begin{itemize}
\item \emph{What is the mechanism that drives the artificial oscillations?}
\item \emph{What makes some ERK methods accurate only at specific points?}
\item \emph{Can we remove these oscillations?}
\end{itemize}
  
\medskip

In this paper, we answer all these questions. We will examine the problem through the disparity between the ``discretize-then-optimize" (DTO) and ``optimize-then-discretize" (OTD)~\cite{BeHeLoEsWe19, OnRu20:DO, LiWa19:NonComm, HiRo12:DisOpCo, ReDoWa10:OpSoPDE, LlJe24:Di}, and provide a mathematical explanation to these oscillations. 
For solving differential-equation-constrained optimization, it has been well-known that the order of operations matters in the updates. 
In the DTO setting, discretization is conducted first to translate an ODE to an algebraic problem, upon which optimization is performed. 
The OTD approach swaps the order of these operations, and performs optimization directly on the equation level in the continuous framework, before discretizing the optimization solver for an algebraic update. 
Auto-differentiation is a procedure that completely operates on an algebraic system, and thus naturally falls into the category of DTO.

In theory, DTO and OTD both aim at providing gradients and updates for the same system, so ideally the order should not matter. 
This is not true in general. 
Whether the ``discretize'' step and ``optimize'' step commute is highly dependent on the specific discretization scheme deployed. 
\revadd{Related inconsistencies have also appeared in recent machine-learning contexts.
For instance, physics-informed neural networks (PINNs) often optimize a continuous PDE residual while using discrete solvers, and symplectic or structure-preserving neural ODEs exhibit similar issues when differentiating through continuous-time adjoints~\cite{MMY21,LZWLWLSL25,ZLM25,RKGS25}.
These mismatches are typically addressed by enforcing differentiation at the discrete level, either through discrete-loss formulations or adjoint schemes consistent with the solver.}

\revadd{These discrepancies have also been systematically analyzed in the context of neural ODEs.
The original neural ODE framework adopted the OTD viewpoint via continuous adjoints~\cite{ChRuBeDu:2018neural}, 
while subsequent works explored discrete or solver-consistent adjoints to improve numerical accuracy and efficiency~\cite{GhKeBi:2019anode, OnRu20:DO, FJNO20, KJDMR21}.
Although these studies revealed that the two formulations may diverge depending on the discretization and stiffness of the dynamics, a precise characterization of the mechanism behind such discrepancies has remained limited.}

\revdel{Indeed, to} \revadd{To} assemble a gradient \revadd{in our ODE-Net setting}, a solution to a forward equation and a solution to an adjoint are both required, with the forward equation solving the original equation forward in time, and the adjoint equation propagating the mismatch back to the initial time. 
\revdel{These two equations need to be discretized in a manner that is compatible with each other, so to ensure solving forward and adjoint in discrete level separately, agrees with discretizing the entire forward and adjoint combo as one unit.} \revadd{These two equations must be discretized in a mutually consistent manner, so that solving the forward and adjoint equations separately at the discrete level yields the same result as discretizing the coupled forward–adjoint system as a whole.}

The failure of auto-differentiation in the setting of leapfrog is exactly rooted in such incompatibility. 
The ``leap" step generates a two-step dependence for updating states, and, in nature, is a procedure that averages out the impact of history. 
Yet, the objective function only sees the value of the very final stage of state and is not averaging in nature. 
Such incompatibility eventually leads to the oscillatory pattern shown in Figure~\ref{fig:failure_auto_1_leapfrog}. 
This does not mean, however, that auto-differentiation is completely irrelevant. Gradient information is computed, but incorrectly assembled. 
This understanding above brings about a potential \emph{post-processing} strategy. 
By building the ``leap" structure back in the gradient computation through averaging, it is still possible to integrate the ``averaging" feature of leapfrog, and to extract out the true gradients. 
This is indeed what we propose. 
By deploying a very simple post-processing linear operation, we obtain a much more accurate gradient, with the oscillation completely eliminated, see red curves in Figure~\ref{fig:failure_auto_1_leapfrog}.
This post-processing technique brings auto-differentiation back to relevance. A similar strategy is designed for ERK-2 stage as well. 

\revadd{We should stress that the failure of auto-differentiation has been observed, and further categorized, for example, in~\cite{HuMeMoChHoHa23}, in which the authors list three different phenomena where auto-differentiation results mismatch the real differentiation. When deployed specifically for neural ODE and ODE-nets training, the failure of auto-differentiation falls in their ``Pitfall II: the chosen abstraction has unexpected derivatives.'' Our work specifically identifies how this abstraction mismatch materializes in Leapfrog and two-stage ERK ODE-nets, derives the precise mechanism behind the spurious oscillations, and proposes a minimal post-processing correction that restores the true gradient. 
This concrete characterization and remedy extend the abstract discussion in~\cite{HuMeMoChHoHa23} to a practical, provably correct setting for ODE-based neural networks.} \revdel{offer a straightforward approach to address these challenges for ODE-nets.}

The paper is organized as follows. In Section~\ref{sec:prelim}, we review the concepts of neural ODEs and ODE-nets, and present the computation on the adjoint and the gradient of ODE-nets (Proposition~\ref{prop:grad_dis} and~\ref{prop:p-adjoint-ERK}). 
These explicit calculations allow us to provide an explanation for the oscillations in auto-differentiation, as presented in Section~\ref{sec:oscillation_autodiff}. 
In Section~\ref{sec:correction}, we propose, for Leapfrog and 2-stage ERK methods respectively, a post-processing technique that corrects the auto-differentiation gradient. 
We show that this correction attains second order accuracy; see Theorem~\ref{theorem:LMMmain} and~\ref{theorem:ERK-correction}.
Finally, in Section~\ref{sec:numerical_examples}, we close with several numerical examples, verifying the second order accuracy of our proposed correction, and demonstrating that our correction improves the performance of ODE-nets.

\section{Preliminaries}\label{sec:prelim}

\subsection{Neural ODEs and ODE-nets}\label{sec:prelim-neural-ODE-def}

Throughout the paper, we term the formulation in the continuous setting neural ODE, and ODE-nets are the associated NN structures that can be viewed as discretization of an neural ODE.

In a general form, neural ODE is written as:
\begin{equation}\label{eqn:ODE}
\frac{\rd z(t;x)}{\rd t} = f(z(t;x),\theta(t))\,,\quad\text{with}\quad z(0;x)=x\,.
\end{equation}
\revadd{Here $z(t;x)\in\mathbb{R}^d$ denotes the hidden state of dimension~$d$, and $\theta(t)\in\mathbb{R}^n$ is a time-dependent external weight function of dimension~$n$.}
It evolves initial value $z(0;x) =x$ using the activation function $f:\mathbb{R}^d\times\mathbb{R}^n\to\mathbb{R}^d$ up to time $t=1$. 
\revdel{The activation function $f$ is parameterized by an external weight function $\theta(t)\in\mathbb{R}^n$ and the final state of the ODE is $z(1;x)$.}
Often a measuring function $g$ is added at the very end that converts the final state to the output, so the whole neural-ODE structure maps the input initial data $x$ to the output $y = g(z(t=1;x))$.

ODE-nets translate this neural ODE to an NN structure through different versions of discretization. 
The most common one is obtained through the forward Euler method. 
Denote $L$ the depth of the NN, then step-size is $h=\frac{1}{L}$, and for $l=0,1,\cdots, L-1$:
\begin{equation}\label{eqn:forwardEuler}
\rz_{l+1} = \rz_{l} +hf(\rz_l, \theta_{l})\,,\quad\text{with}\quad \rz_0=x\,.
\end{equation}
All $\rz_l$ above have $x$-dependence through initial data. 
For conciseness, we omit this dependence in our notations. 
This formulation is obtained by approximating $\frac{\rd z(t_l;x)}{\rd t}$ using $\frac{\rz_{l+1} -\rz_{l}}{h}$, and it attains a first-order accuracy in the sense that, for the same initial data $x$, running~\eqref{eqn:forwardEuler} agrees with running~\eqref{eqn:ODE} with $\mathcal{O}(h)$ error:
\[
\text{if}\quad \theta_{l}=\theta(lh)\,,\quad\text{then}\quad \rz_L - z(t=1)=\mathcal{O}(h)\,.
\]

Higher-order methods are also available. 
One such example is the Leapfrog method, a classical example from the class of methods called the Linear Multistep Method~\cite{Is:2009first}. 
It has the following form. 
For $l=0, 1, \dots, L-2$, omitting the initial data $x$-dependence:
\begin{equation}\label{eqn:leapfrog}
\begin{cases}
\rz_{l+2} = \rz_{l} + 2hf(\rz_{l+1},\theta_{l+1})\,,\\
\rz_0=x\;,\; \rz_1=\rz_0+hf(\rz_0,\theta_0)\,.
\end{cases}
\end{equation}
The scheme leapfrogs from $l$ to $l+2$ using updates obtained at the $(l+1)$-th step. 
One crucial feature of the method is that, due to the \revdel{inavailability} \revadd{unavailability} of $z_{-1}$, the first step $z_1$ is obtained using a simple forward Euler update as in~\eqref{eqn:forwardEuler}. 
\revadd{Note that this initialization does not affect the overall second-order accuracy of the Leapfrog scheme, as confirmed in Lemma~\ref{lem:Delta_z}.}

ERK is another class of methods that provide high-order accuracy. 
Setting $\rz_0  = x$, a general formulation for an $\nu$-stage ERK method computes $\rz_{l+1}$ from $\rz_{l}$ using an $\nu$-stage updating formula. 
More specifically, the time domain $[t_l,t_{l+1}]$ is decomposed into multiple stages of $t_{l,i} = (l+c_i)h$ with $0=c_1 \leq c_2 \leq \cdots \leq c_\nu < 1$ standing for the time of intermediate stages. 
Denote $\rxi_{l,i}$ the solution at stage $i$ in $l$-th step, we can stack them up in a vector form $\rxi_l = [\rxi_{l,1}^\top \,, \rxi_{l,2}^\top \,, \ldots \,,\ \rxi_{l,\nu}^\top]^\top$ and similarly denote 
$f_l = [
f^\top(\rxi_{l,1}, \theta_{l, c_1}) \,, f^\top(\rxi_{l,2}, \theta_{l, c_2}) \,, \ldots \,, f^\top(\rxi_{l,\nu}, \theta_{l, c_\nu})]^\top$, then the general ERK presents:
\begin{equation}\label{eqn:ERK-def}
\begin{aligned}
& \rxi_l = \mathsf{e}_\nu \otimes \rz_l +h (\mathsf{A} \otimes \mathsf{I}_d)\cdot f_l\,, \\
& \text{and}\quad \rz_{l+1}=\rz_l+h (\mathsf{b}^\top \otimes \mathsf{I}_d) \cdot f_l\,,
\end{aligned}
\end{equation}
where $\mathsf{e}_\nu$ is a column vector of $\nu$ element with each element being $1$, $\mathsf{I}_d$ is the $d \times d
$ identity matrix, $\otimes$ is the Kronecker product notation, and $\mathsf{A}$ and $\mathsf{b}$ contain coefficients that are pre-defined in a Butcher's tableau:
\begin{equation}\label{eqn:coeff-mat-ERK-def}
\mathsf{A} = \begin{bmatrix}
0 & \\
a_{2,1} & 0 \\
a_{3,1} & a_{3,2} & 0 & \dots \\
\vdots \\
a_{\nu,1} & a_{\nu,2} & a_{\nu, 3} & \dots & a_{\nu,\nu-1} & 0 
\end{bmatrix}\,, \quad \mathsf{b} = \begin{bmatrix}
b_1 \\ b_2 \\ \vdots \\ b_\nu
\end{bmatrix} \, .
\end{equation}
These coefficients need to satisfy certain conditions~\cite{Is:2009first} to ensure high order of accuracy. Three classical examples of ERK are
\begin{itemize}
    \item{Midpoint method}
    \begin{equation}\label{eqn:midpoint}
    \begin{cases}
\rz_{l+\frac{1}{2}}=\rz_l+\frac{h}{2}f(\rz_l,\theta_l)\,,\\
\rz_{l+1} = \rz_{l} + hf(\rz_{l+\frac{1}{2}},\theta_{l,\frac{1}{2}})\,.
\end{cases}
\end{equation}
    \item{Ralston method}
    \begin{equation}\label{eqn:ralston}
    \begin{cases}
\rz_{l+\frac{2}{3}}=\rz_l+\frac{2h}{3}f(\rz_l,\theta_l)\,,\\
\rz_{l+1} = \rz_{l} + \frac{h}{4}f(\rz_{l},\theta_l) + \frac{3h}{4}f(\rz_{l+\frac{2}{3}}, \theta_{l,\frac{2}{3}})\,.
\end{cases}
\end{equation}
\item{\revdel{Nystrom} \revadd{Nystr\"{o}m} method} is a 3-stage ERK and uses the following set of coefficients
\begin{equation}\label{eqn:Nystrom-def}
\begin{aligned}
& a_{2,1} = a_{3,2} = c_{2} = c_{3} = \frac{2}{3}, a_{3,1} = c_1 = 0, \\
& b_1 = \frac{1}{4}, b_2 = b_3 = \frac{3}{8}\,.
\end{aligned}
\end{equation}
\end{itemize}

Among the methods mentioned above, Leapfrog, midpoint and Ralston are all second order accurate, meaning
\begin{equation}\label{eqn:2nd_order_z}
\text{if}\quad \theta_{l,c_j}=\theta((l+c_j)h)\quad\text{then}\quad\rz_L - z(t = 1) = \mathcal{O}(h^2)\quad\,.
\end{equation}
Similarly, \revdel{Nystrom} \revadd{Nystr\"{o}m} method can be shown to be third order accurate.

High order methods promote computational efficiency. Setting $\tau$ the error tolerance. To achieve this error tolerance, one needs $h\lesssim \tau$ if a first order accurate method is deployed, and this makes $L\approx \frac{1}{h} \propto\frac{1}{{\tau}}$ layers. In comparison, if a second order method is deployed, only $L\approx \frac{1}{\sqrt{\tau}}$ layers is necessary. Since $\frac{1}{\sqrt{\tau}}\ll\frac{1}{{\tau}}$ for small $\tau$, a second order method can achieve the same level of accuracy with a much smaller NN.
\revadd{Some recent works explicitly exploit this higher-order effect for numerical savings. For instance, Runge-Kutta convolutional networks (RKCNNs)~\cite{ZCF23} incorporate multi-stage Runge-Kutta updates within convolutional layers and empirically demonstrate an improved accuracy at comparable computational cost. Similar studies are complete on second-order neural ODEs (SONODEs) in~\cite{NBDSL20} where higher-order dynamical couplings enable better stability and convergence.}

\subsection{Training of neural ODEs and ODE-nets}

Neural ODEs and ODE-nets output results that are approximate to each other when the list of coefficients $\Theta=\{\theta_{l,c_j}\}$ agree with the continuous counterparts. In practice, during the training process, we use the labeled data to find $\theta(t)$ in~\eqref{eqn:ODE} or $\Theta=\{\theta_{l,c_j}\}$ in~\eqref{eqn:leapfrog} or~\eqref{eqn:ERK-def}. In this section, we discuss the training process. Throughout the section, we denote $(x,y) \in \mathbb{R}^d \times \mathbb{R}^{\revadd{m}}$ the training dataset \revadd{(here $m$ is the output dimension of the model)}, and $\mu$ the \revdel{distribution of the training data} \revadd{weight defining the weighted $\ell^2$ norm 
$\|v\|_\mu^2 := \langle v, v \rangle_\mu$, 
where $\langle \cdot , \cdot \rangle_\mu$ is the corresponding weighted inner product}.

To train neural ODE is to find $\theta(t)$ in~\eqref{eqn:ODE}. 
We may define the loss function as follows:
\begin{equation} \label{eqn:def.Etilde}
\begin{aligned}
& \calE(\theta) := \frac{1}{2}\|g(z^{\theta}(1;x))-y\|^2_\mu \, , \\
& \mbox{where $z^{\theta}(t;x)$ solves \eqref{eqn:ODE}.}
\end{aligned}
\end{equation}
Knowing that $\theta(t)$ is a function, $\calE$ maps a function of time to a number, and thus a functional. If gradient-descent is deployed, we have the updates for $\theta$:
\[
\theta^\new(t)=\theta^\old(t)-\eta\left.\frac{\delta\calE}{\delta\theta}\right|_{\theta^\old(t)}
\]
where $\left.\frac{\delta\calE}{\delta\theta}\right|_{\theta^\old(t)}$ is the Fr\'echet derivative of the functional $\calE$ on function $\theta(t)$ evaluated at $\theta^\old$, and $\eta$ is the step-size for training. Through the standard calculus-of-variation strategy, we can compute this derivative explicitly:
\begin{proposition}
Let the functional $\calE(\theta(t))$ \revdel{of $\theta(t)$} \revadd{be} defined \revadd{as} in~\eqref{eqn:def.Etilde}\revdel{, the}\revadd{. The} functional derivative is
\begin{equation}\label{eqn:gradient_functional}
\left.\frac{\delta \calE}{\delta \theta}\right|_{\theta(t)}(t) = p^\top(t) \partial_\theta f(z(t), \theta(t))\,,
\end{equation}
where $\partial_\rtheta f$ is the Jacobian of $f$ on $\theta$ and has the size of $\mathbb{R}^{d\times n}$, and $p\in\mathbb{R}^d$ solves the adjoint equation:
\begin{equation}\label{eqn:adjoint}
-\partial_tp^\top = p^\top\partial_zf(z(t),\theta(t))\,,
\end{equation} with the final condition
\begin{equation}\label{eqn:final_gradient}
p^\top(t=1) = \revadd{\langle} (g(z^\theta(1;x)) - y) \revadd{,} \nabla g(z^\theta(1;x)) \revadd{\rangle_{\mu}}\,.
\end{equation}1
\end{proposition}

It is worth noting that the functional derivative~\eqref{eqn:gradient_functional} requires solutions from both the forward equation ($z(t)$) and that of the adjoint equation ($p(t)$). The adjoint equation is solved backward in time, with the final condition~\eqref{eqn:final_gradient} coding the mismatch. In a sense, the adjoint equation tracks back the dependence of the mismatch to the configuration of $\theta(t)$.

Similarly, we can train ODE-nets, and this amounts to finding $\Theta$ using the data pair $(x,y)$ distributed according to $\mu$. We define the loss function as follows, and it is the counterpart of~\eqref{eqn:def.Etilde}:
\begin{equation}
\label{eqn:def.E}
\begin{aligned}
& E(\Theta) = \frac{1}{2}\|g(\rz_L^\Theta(x))-y\|_\mu^2\,, \\
& \mbox{where $\rz_l^{\Theta}(x)$ solves \eqref{eqn:leapfrog} or~\eqref{eqn:ERK-def}.}
\end{aligned}
\end{equation}
\revdel{Gradient-decent}{Gradient-descent} for this problem leads to an update of:
\[
\Theta^\new=\Theta^\old - \eta \nabla_{\Theta} E(\Theta^\old)\,.
\]
Typically we rely on auto-differentiation for the computation of $\nabla_{\Theta}E$. For the specific ODE-net structure discussed in this paper, we can lean on the vector-Jacobian product (VJP) formula used in reverse-mode auto-differentiation and have the leisure to spell out  $\nabla_{\Theta} E(\Theta)$ explicitly.
\revadd{The following proposition provides an explicit discrete expression for the gradient with respect to each parameter $\rtheta_l$, 
showing how the auto-differentiation computation unfolds layer by layer.}
\begin{proposition}\label{prop:grad_dis}
Let $E$ be defined as in~\eqref{eqn:def.E}, then the gradient $\nabla_\Theta E(\Theta) = \{\partial_{\theta_l} E\}$ and can be computed using the following:
\begin{itemize}
\item ODE-net defined through Leapfrog as in \eqref{eqn:leapfrog} has the following gradient component:
\begin{equation}\label{eqn:gradient}
L\partial_{\theta_l} E = \rp_l^\top \partial_{\rtheta} f(\rz_l, \rtheta_l)\,,\quad\forall l = 0,\cdots,L-1\,,
\end{equation}
where $\partial_\rtheta f$ is the Jacobian and $\rp$ is the \revdel{cotangent} \revadd{adjoint} vector that solves
\begin{subequations}\label{eqn:adjoint_dis}
\begin{align}
\rp_{L-1}^\top &= 2 \revadd{\langle} (g(\rz_L) - y) \revadd{,} \nabla g(\rz_L) \revadd{\rangle_{\mu}} \,, \label{eqn:adjoint_1}\\    
\rp_{L-2}^\top &= 2h \rp_{L-1}^\top \partial_z f(\rz_{L-1}, \rtheta_{L-1})\,,\label{eqn:adjoint_2}\\
\rp_l^\top &= \rp_{l+2}^\top + 2h \rp_{l+1}^\top\partial_z f(\rz_{l+1}, \rtheta_{l+1}) \notag \\
& \forall l=L-3,L-4,\dotsc,1\,, \label{eqn:adjoint_3} \\   
\rp_0^\top &= \frac{1}{2}\rp_{2}^\top + h \rp_{1}^\top\partial_z f(\rz_{1}, \rtheta_{1})\,.
\end{align}
\end{subequations}
\item ODE-net defined through 2-stage ERK as in \eqref{eqn:ERK-def} with $\nu = 2$ has the following gradient component:
\begin{subequations}\label{eqn:gradient-2-stage-ERK}
\begin{align}
& L \partial_{\rtheta_l} E = (1 - \frac{1}{2\alpha})\rp_{l+1}^\top \partial_\theta f(\rz_l, \rtheta_l) \notag \\
& \ \ \ \ \ \ \ + \frac{h}{2}\rp_{l+1}^\top \partial_z f(\rz_{l+\alpha}, \rtheta_{l+\alpha}) \partial_\theta f(\rz_l, \rtheta_l) \,,\label{eqn:gradient-2-stage-ERK_integer} \\
& L \partial_{\rtheta_{l+\alpha}} E = \frac{1}{2\alpha}\rp_{l+1}^\top \partial_\theta f(\rz_{l+\alpha}, \rtheta_{l+\alpha}) \label{eqn:gradient-2-stage-ERK_fraction} \,,
\end{align}
\end{subequations}
where $\rp$ is the \revdel{cotangent} \revadd{adjoint} vector that solves, for all $l$:
\begin{equation}\label{eqn:p-adjoint_2-stage-ERK}
\begin{aligned}
& \rp_l^\top  = \rp_{l+1}^\top + h \rp_{l+1}^\top \left(\left(1 - \frac{1}{2\alpha}\right) \partial_z f(\rz_l, \rtheta_l)\right. \\
& \ \ \ \ \ \ + \left.\frac{1}{2\alpha}\partial_z f(\rz_{l+\alpha}, \rtheta_{l+\alpha}) \right) \\
& \ \ \ \ \ \ + \frac{h^2}{2}\rp_{l+1}^\top \partial_z f(\rz_{l+\alpha}, \rtheta_{l+\alpha})\partial_z f(\rz_l, \rtheta_l)  \ , \\
&\text{with } \rp_L^\top = \revadd{\langle} (g(\rz_L) - y) \revadd{,} \nabla g(\rz_L) \revadd{\rangle_{\mu}} \, .
\end{aligned}
\end{equation}
Here the coefficients are chosen to be $b_1 = 1 - \frac{1}{2\alpha}$, $b_2 = \frac{1}{2\alpha}$, $c_1 = 0, a_{1,1} = c_2 = \alpha$ for some fixed $\alpha \in (0,1)$. $\alpha = \frac{1}{2}$ and $\frac{2}{3}$ respectively in the case of midpoint and Ralston.
\end{itemize}
\end{proposition}
The proof is deferred to Supplementary Sections~\ref{sec:appendix_proof_grad_dis_leapfrog} and~\ref{sec:appendix_proof_grad_dis_2_stage_ERK}. The Lagrangian multiplier is the major tool.

It is worth noting the similarities and differences between~\eqref{eqn:gradient_functional} for updating neural ODEs, and its counterparts~\eqref{eqn:gradient} and~\eqref{eqn:gradient-2-stage-ERK} for ODE-nets. Both~\eqref{eqn:gradient} and~\eqref{eqn:gradient-2-stage-ERK} assemble gradients as products of the ODE-net solution $\{\rz_l\}$ and the \revdel{cotangent} \revadd{adjoint} vector $\{\rp_l\}$, in a way very similar to that in~\eqref{eqn:gradient_functional}. It will also be shown below that~\eqref{eqn:adjoint_dis} and~\eqref{eqn:p-adjoint_2-stage-ERK} provide accurate approximations to~\eqref{eqn:adjoint}. These accurate ingredients do not guarantee an accurate final estimate of the gradient. For Leapfrog and 2-stage ERK respectively, it is the inaccurate final data preparation (in~\eqref{eqn:adjoint_1}) and the inaccurate coefficient (in~\eqref{eqn:gradient-2-stage-ERK}) that ultimately bring error to the computation of the gradient.

For higher-order ERK methods, the gradients become more intricate, but their overall structure remains consistent. 
\revdel{Moreover, the cotangent vector has an explicit form:} \revadd{The next proposition makes this structure explicit: it shows that even for high-order ERK schemes, the gradient can still be written in terms of the adjoint variable~$\rp$ that solves the discrete adjoint equation.}
\begin{proposition}\label{prop:p-adjoint-ERK}
Let the function $E(\Theta)$ defined as in~\eqref{eqn:def.E}, with $\rz_l$ solving high order ERK using~\eqref{eqn:ERK-def}. Then $\nabla_\Theta E$ is assembled using $\partial_\theta f$ and $\rp$, with $\rp$ being the \revdel{cotangent} \revadd{adjoint} vector that satisfies:
\begin{equation}\label{eqn:p-adjoint-ERK-general}
\begin{aligned}
& \rp_l^\top = \rp_{l+1}^\top \\
& + h \rp_{l+1}^\top (\mathsf{b}^\top \otimes \mathsf{I}_d) \partial_z f_l \cdot (\mathsf{I}_{\nu d} - h (\mathsf{A} \otimes \mathsf{I}_d)\partial_z f_l)^{-1} \cdot (\mathsf{e}_\nu \otimes \mathsf{I}_d) \\
& \text{with }  \rp_L^\top = \revadd{\langle} (g(\rz_L) - y) \revadd{,} \nabla g(\rz_L) \revadd{\rangle_{\mu}} \,.
\end{aligned}
\end{equation}
The specific assembling coefficients depend on $\mathsf{A}$ and $\mathsf{b}$.
\end{proposition}
We leave the proof to Supplementary Section~\ref{sec:appendix_proof_adjoint_ERK}.

\subsection{Oscillations in Auto-Differentiation}\label{sec:oscillation_autodiff}
We delve in for reasons to explain the oscillations observed in Figure~\ref{fig:failure_auto}. Since all derivations above are explicit, we can obtain a very clean comparison of gradients in the continuous and discrete settings. This amounts to a detailed reading of Proposition~\ref{prop:grad_dis}. Despite being long and tedious, this proposition reveals a lot of important information about the updates.

To start, we examine Leapfrog. Comparing~\eqref{eqn:gradient_functional} and~\eqref{eqn:gradient}, we immediately observe some nice features. $\partial_{\theta_l}E$ honors the structure of $\frac{\delta\calE}{\delta\theta}(lh)$. Both of them see the gradient as a product of $p$, the \revdel{cotangent} \revadd{adjoint} vector that runs from final time to $t=0$, and the Jacobian $\partial_\theta f$. Moreover, the \revdel{cotangent} \revadd{adjoint} vector of $\rp$ aligns very well with that of~\eqref{eqn:adjoint}. In particular,~\eqref{eqn:adjoint_3} is exactly a Leapfrog discretization of~\eqref{eqn:adjoint} for all $l\leq L-3$.

However, one strong drawback of the formula hinges on the definition of $\rp_{L-1}$ an $\rp_{L-2}$. Intuitively, for a good match, both $\rp_{L-1}$ and $\rp_{L-2}$ should approximately resemble $p(1)$. Comparing~\eqref{eqn:adjoint_1} and\revdel{~\eqref{eqn:adjoint}}\revadd{~\eqref{eqn:final_gradient}}, however, we see that $\rp_{L-1}$ nearly doubles the value of $p(1)$ while $\rp_{L-2}$ is $\calO(h)$. This immediately introduces $\calO(1)$ contrast at the beginning of the evolution of $\rp$. This high contrast propagates back in time using the Leapfrog formula~\eqref{eqn:adjoint_3}, bringing high oscillations throughout. This was seen in Figure~\ref{fig:failure_auto_1_leapfrog}. Moreover, refining the discretization by enlarging $L$ and reducing $h$ is not helpful. The $\calO(1)$ oscillation is encoded in the definition of $\rp_{L-1}$ and $\rp_{L-2}$, and it does not disappear in the $h\to0$ limit. \revadd{The entire derivations are completely exact and honor the computation of auto-differentiation that returns the honest derivatives for the discrete system. These oscillations are not artifacts induced by auto-differentiation itself, but rather the inconsistency embedded in the discrete system.}

We now examine ODE-Net generated by ERK-2 stage method. The situation here is even more intricate. This time, one can show that the \revdel{cotangent} \revadd{adjoint} vector $\rp_l$~\eqref{eqn:p-adjoint_2-stage-ERK} is actually a second order approximation of $p$\revdel{:} \revadd{and thus remains faithful to the continuous adjoint $p(t)$.}
\begin{proposition}\label{prop:2-stage-ERK-p-error}
Let $\rp_l$ be defined as in~\eqref{eqn:p-adjoint_2-stage-ERK}, and $p(t)$ be defined as in~\eqref{eqn:adjoint}.
Then \begin{equation}
|\rp_l - p(lh)| = \mathcal{O}(h^2) \, .
\end{equation}
\end{proposition}
The $\mathcal{O}$ notation contains dependence on $\mu$, the
bounds for $g$ and $\nabla g$, bounds on $\theta$ and its derivatives
up to the second order, the regularity constants for $f$, and the constant $\alpha$.

The proof for this proposition is straightforward, as presented in Supplementary Section~\ref{appendix_proof_apriori-2-stage-ERK}.
Essentially, one needs to define an auxiliary variable, termed $\hat{\rp}_l$, that solves the same equation~\eqref{eqn:p-adjoint_2-stage-ERK} with $\rz_l$ and $\rz_{l+\alpha}$ replaced by $z(lh)$ and $z((l+\alpha)h)$ respectively, and deploys the triangular inequality.

This proposition, together with~\eqref{eqn:2nd_order_z}, suggest that both the forward solver and the adjoint solvers are well-prepared. However, these ingredients are not correctly assembled. $\partial_{\theta_l}E$, according to~\eqref{eqn:gradient_functional}, is a simple product of $p$ and $\partial_{\theta}f$. In comparison, in~\eqref{eqn:gradient-2-stage-ERK}, the product of $\rp$ and $\partial_{\theta}f$ takes on special coefficients: At the integer point $\theta_l$,~\eqref{eqn:gradient-2-stage-ERK_integer} returns a product of $\rp$ and $\partial_\theta f$ with a factor of $1-\frac{1}{2\alpha}$. This factor is not $1$ for any choice of $\alpha \in (0,1)$, making it impossible to match $\partial_{\theta_l}E$ and $\frac{\delta\calE}{\delta\theta}(lh)$. At middle stage for $\theta_{l+\alpha}$, the factor becomes $\frac{1}{2\alpha}$, and this would agree with coefficient $1$ only if $\alpha = \frac{1}{2}$, the case for the Midpoint method. As a result, the Midpoint update agrees with that of the continuous setting at half-integer stages but not at integer stages, and Ralston is completely wrong for all stages. These arguments confirm our finding in Figure~\ref{fig:failure_auto_1_midpoint} and~\ref{fig:failure_auto_1_ralston}. 

Higher order ERK suffers from the same challenge. Though the assembling coefficient vary, the typical situation nevertheless gives incorrect match to its continuous versions, and provides high oscillations. For example, in the case of \revdel{Nystrom} \revadd{Nystr\"{o}m}~\eqref{eqn:Nystrom-def},
$$\begin{aligned}
\partial_{\theta_l}E & \sim \frac{1}{4} \rp_{l+1}^\top \partial_\theta f(\rz_l, \rtheta_l) \,, \\
\text{and}\quad\partial_{\theta_{l,2/3}}E & \sim \frac{3}{4}\rp_{l+1}^\top \partial_\theta f(\rz_l, \rtheta_l) \, .
\end{aligned}$$
The factors $\frac{1}{4}$ and $\frac{3}{4}$ are the reasons of the oscillations and reduction of true gradient, as shown in Figure~\ref{fig:failure_auto_1_nystrom}.

\section{Modified auto-differentiation for ODE-net}\label{sec:correction}

The discussion above pinpoints the root of the failure in the computation of the gradients, also shines lights on possible fixes. Considering that auto-differentiation is widely used and easy to implement, serving as a blackbox tool, we would like to develop methods that are non-intrusive, and optimally extract information that is already in the auto-differentiation computation.

The methods provided below require some generic assumptions for the fix to hold.
\begin{assumption}\label{assumption:main}
The activation function $f(z, \theta)$ satisfies:
a) Regularity: $f$ is third order continuously differentiable with derivatives bounded by $C_a$; 
b) Quadratic-linear growth: For all $z \in \mathbb{R}^d, \theta \in \mathbb{R}^{n}$, $|f| \leq C_b (|\theta|^2+1)(|z| + 1)$ for some $C_b$.
The output function $g(z)$ to possess:
c) Regularity: $g(z)$ and its gradient $\nabla g(z)$ are Lipschitz continuous with Lipschitz constants bounded by $C_c$.
The training data distribution $\mu$ has:
d) Compact Support: There exists some $C_d > 0$ such that $\text{supp}(\mu)$, the support of $\mu$, lies within a ball centered at the origin with radius $C_d$.
\end{assumption}
These assumptions are rather mild and are typically satisfied. For example, the commonly used activation function
$f(z;\sigma,W,b) = \sigma \cdot \tanh(W z + b)$ and the output function $g(z) := z$ \revdel{satisfiy} \revadd{satisfy} all the conditions above.

\subsection{Modified Leapfrog ODE-net}\label{sec:LMM}

We first discuss the fix for updating coefficients in Leapfrog. The observation in Section~\ref{sec:oscillation_autodiff} suggests that the updating procedure of $\rp$ in the decreasing order of $l$ honors the underlying ODE~\eqref{eqn:adjoint}, but its final state preparation introduces high oscillations. From here, it is expected that a certain type of averaging would smooth out the oscillation while continue honoring the underlying flow updates. To this end, we define a modification matrix
$\rT\in\mathbb{R}^{nL\times nL}$ composed of diagonal blocks, each a
multiple of identity matrix $I\in\mathbb{R}^{n\times n}$. Specifically, we have $\rT = \{\rT_{i,j}\}_{i,j=0}^{L-1}$ with $\rT_{0,0}=I$, $\rT_{0,1} =
\frac{3}{4}I$, $\rT_{0,3}=-\frac{1}{4}I$, $\rT_{1,0}=\frac{1}{2}I$ and

\begin{equation}\label{def:T_matrix}
\rT_{i,j} = \left\{
\begin{array}{llll}
\frac{1}{4}I & \text{for } 2 \leq i \leq L-1, j = i-1 \\
& \text{ and } 1 \leq i \leq L-2, j = i + 1 \ ; \\
\frac{1}{2}I & \text{for } 1 \leq i = j \leq L - 1 \,.
\end{array}\right.
\end{equation}
All blocks not specified are simple $0$s. We then define the modified gradient:
\begin{equation}\label{eqn:modified_gradient}
\begin{aligned}
\bar\nabla_\Theta E & = \rT\nabla_\Theta E\quad \\
\text{or equivalently}\quad
\bar{\partial}_{\theta_i} E & = \sum_j \rT_{i,j}\partial_{\theta_j}E\,.
\end{aligned}
\end{equation}
The $[\frac{1}{4},\frac{1}{2},\frac{1}{4}]$ pattern suggests that the modified gradient is a weighted average over the three neighboring stages of the gradient $\nabla_\Theta E$. Noting that $\nabla_\Theta E$ is the output of auto-differentiation, so our fix manipulates this quantity as a whole, and serves as a post-processing step. Considering the highly oscillatory pattern of the gradient, this strategy of smoothing neighboring grids should help averaging out the oscillation, and provides more accurate gradients. This is justified in the following theorem:
\begin{theorem}\label{theorem:LMMmain}
Let $\theta(t)$ be a second order continuously differentiable function
with bounded second derivative, and denote
$\Theta = \{\theta_l\}_{l=0}^L$ with $\theta_l =
\theta(t_l)\in\mathbb{R}^n$. Then under mild conditions on $f$
(discussed below), the modified gradient is a second order
approximation to the Fr\'echet derivative evaluated at discrete time
steps. That is, for any $l$,
\[
\left|(L \bar\nabla_\Theta E)_l - \left.\frac{\delta \calE}{\delta {\theta}}\right|_{\theta(t)}(lh)\right| = \mathcal{O}(h^2)\,.
\]
\end{theorem}
The $\mathcal{O}$ notation contains dependence on $C_a, C_b, C_c, C_d$ and bounds on $\theta$ and its derivatives up to the second order.
Here $C_a, C_b, C_c, C_d$ are defined as in Assumption~\ref{assumption:main}.

The core of the proof lies in the realization that ``averaging'' the gradient is effectively equivalent (up to $\mathcal{O}(h^2)$ error) to ``averaging'' the \revdel{cotangent} \revadd{adjoint} vector $\rp$. The averaging for the gradient is already denoted by $\rT$ defined in~\eqref{def:T_matrix}, and we are to define a new matrix $\tilde{\rT}$ (see definition in~\eqref{def:T_tilde_matrix}) that acts on $\{\rp\}$. Two auxiliary quantities will be defined. Firstly, we define $\hat{\rp} = \{\hat{\rp}_l\}_{l=0}^{L-1}$, a semi-continuous object that uses the same formula~\eqref{eqn:adjoint_dis} with $\rz_l$ replaced by $z(lh)$. We then define
an averaged version:
\begin{equation}\label{eqn:def_hat_q}
\hat{\sfq} = \tilde{\rT}\hat{\rp}\,,\quad\text{or equivalently}\quad \hat{\sfq}_l = \sum_{k = 0}^{L-1}\tilde{\rT}_{l,k}\hat{\rp}_k \,.
\end{equation}
Accordingly, the gradients defined using these two auxiliary quantities are:
\begin{equation}\label{eqn:def-grad-p-q-leapfrog}
\begin{aligned}
(L\nabla_{\Theta}^\rp E)_l & := \hat{\rp}_l^\top \partial_\theta f(z(lh), {\rtheta}_l)\,, \\
(L\nabla_{\Theta}^\sfq E)_l & := \hat{\sfq}_l^\top \partial_\theta f(z(lh), {\rtheta}_l)\,,
\end{aligned}
\end{equation}
for $l = 0, 1, 2, \dots, L-1$. The proof follows a triangle inequality:

\begin{proof}[Proof of Theorem~\ref{theorem:LMMmain}]
We first decompose, using triangle inequality, for all $l$:
\begin{equation}\label{eqn:LMMmain-proof-main-estimate}
\begin{aligned}
& \ \ \ \ \ \ \left| (L \rT\nabla_\Theta E)_l - \left.\frac{\delta \calE}{\delta {\theta}}\right|_{\theta(t)}(lh)\right| \\
& \leq \underbrace{\left| (L \rT \nabla_\Theta E)_l - (L \rT \nabla_\Theta^\rp E)_l \right|}_{\text{Term I}} + \underbrace{\left|( L \rT \nabla_\Theta^\rp E)_l - (L\nabla_\Theta^\sfq E)_l \right|}_{\text{Term II}} \\
& \ \ \ \ \ \ + \underbrace{\left|(L\nabla_\Theta^\sfq E)_l - \left.\frac{\delta \calE}{\delta {\theta}}\right|_{\theta(t)}(lh)\right|}_{\text{Term III}}\,.
\end{aligned}  
\end{equation}
The theorem is proved if all three terms are $\mathcal{O}(h^2)$. This can be proved to hold. We discuss the proof strategy in Appendix~\ref{sec:appendix_strategy_proof_LMM}, and leave details to supplementary materials in Corollary~\ref{cor:term1}, Lemma~\ref{lem:averaging-on-grad} and Corollary~\ref{cor:term3} respectively.
\end{proof}

\subsection{Modified 2-stage ERK ODE-nets}\label{sec:ERK}
We now switch to the discussion of 2-stage ERK ODE-nets. Similar to deploying a post-processing to correct the derivatives provided by auto-differentiation for Leapfrog, we can also design a post-processing to fix the gradient of coefficients in a 2-stage ERK ODE-net.

The challenge here is different from that of Leapfrog. As observed in Section~\ref{sec:oscillation_autodiff}, in this situation, the updates of $\rp$ still resembles the structure of ODE for $p(t)$, and oscillations are introduced mainly because of the wrong coefficients in the assembling stage. As a consequence, one only needs to adjust assembling coefficients. It is easily done on the integer points. For $l = 0, 1, \dots, L-1$, we propose
\begin{equation}\label{eqn:2-stage-ERK-mod}
\overline{\partial}_{\rtheta_{l}} E = \rp_l^\top \partial_\theta f(\rz_l, \rtheta_l)\,.
\end{equation}
At the noninteger point, we first define the interpolation:
\begin{equation}\label{eqn:q-def-2-stage-ERK}
\sfq_{l + \alpha}^\top := (1 - \alpha)\rp_l^\top + \alpha \rp_{l+1}^\top \, ,
\end{equation}
and then set:
\begin{equation}\label{eqn:2-stage-ERK-mod_alpha}
\overline{\partial}_{\rtheta_{l + \alpha}} E = \sfq_{l + \alpha}^\top \partial_\theta f(\rz_{l + \alpha}, \rtheta_{l + \alpha}) \,.
\end{equation}

We should note that $\{\rp_l\}$ \revdel{is an} \revadd{defined by~\eqref{eqn:p-adjoint_2-stage-ERK} coincide with the} output of auto-differentiation \revadd{$\{\partial_{\rz_l} E\}$}, as justified by the following proposition:
\begin{proposition}\label{prop:autodiff-p-2-stage-ERK}
For all $l = 0,1, \dots, L$, the solution $\rp_l$ to ~\eqref{eqn:p-adjoint_2-stage-ERK} satisfies: $\{\partial_{\rz_l} E\}$.
\end{proposition}
The proof is straightforward calculation and is in Supplementary Section~\ref{appendix_proof-autodiff-p-2-stage-ERK}. 

This proposition suggests that by running auto-differentiation for $E$ on $\rz_l$, we obtain $\{\rp_l\}$ that can be directly assembled in our modified gradient~\eqref{eqn:2-stage-ERK-mod} and~\eqref{eqn:2-stage-ERK-mod_alpha}. Therefore, our method is still a post-processing strategy. Since the assembling coefficients are correct, we expect this modified gradient is a better estimate. This is justified in the following theorem.
\begin{theorem}\label{theorem:ERK-correction}
Under the same conditions as in Theorem~\ref{theorem:LMMmain}, the proposed fix~\eqref{eqn:2-stage-ERK-mod}-\eqref{eqn:2-stage-ERK-mod_alpha}, as a post-processing of auto-differentiation results, provide a second order approximation to the gradient defined on the continuous setting:
\begin{subequations}\label{eqn:ERK-correction-error}
\begin{align}
\left\vert
\overline{\partial}_{\rtheta_{l+\alpha}} E
- \frac{\delta \calE}{\delta \theta}\bigg\vert_{\theta(t)}((l+\alpha)h) 
\right\vert & = \mathcal{O}(h^2)\,, \\
\left\vert \overline{\partial}_{\rtheta_{l}} E
-  \frac{\delta \calE}{\delta \theta}\bigg\vert_{\theta(t)}(lh) \right\vert & = \mathcal{O}(h^2) \  .
\end{align}
\end{subequations}
\end{theorem}
The $\mathcal{O}$ notation contains dependence on $C_a, C_b, C_c, C_d$, bounds on $\theta$ and its derivatives up to the second order, and the specific parameter $\alpha$ that determine the precise structure of the ERK method. The constants here are defined as in Assumption~\ref{assumption:main}.

We leave the discussion of the proof strategy to Appendix~\ref{sec:appendix_strategy_proof_ERK}, with details provided in Supplementary Section~\ref{sec:appendix_2-stage-ERK}. The major component of the proof is Taylor approximation both for the forward and the backward equation to the second order.

\section{Numerical experiments}\label{sec:numerical_examples}

We present numerical experiments to verify our observations, and validate the correction method we proposed.
The implementation details and additional figures are deferred to Appendix~\ref{sec:appendix_exp-details} and Supplementary Section~\ref{sec:appendix_additional-figures} respectively.

Four discretization schemes are studied, and they are Leapfrog~\eqref{eqn:leapfrog}, an example of LLM, Midpoint and Ralston~\eqref{eqn:midpoint}-\eqref{eqn:ralston}, as examples of ERK-2 stage, and \revdel{Nystrom} \revadd{Nystr\"{o}m}~\eqref{eqn:Nystrom-def} as an example of ERK-3 stage respectively. For the first three schemes, we provide corrections.

\subsection{Gradient computation}\label{sec:numerical_examples-grad-comp}

\begin{figure}[ht]
\begin{center}
    \includegraphics[width=0.475\textwidth]{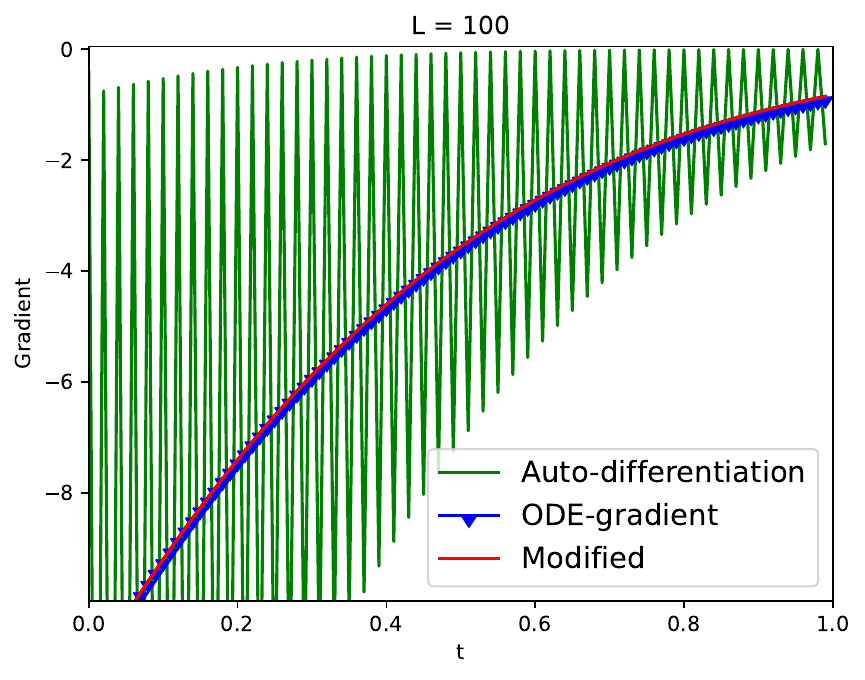}
    \includegraphics[width=0.475\textwidth]{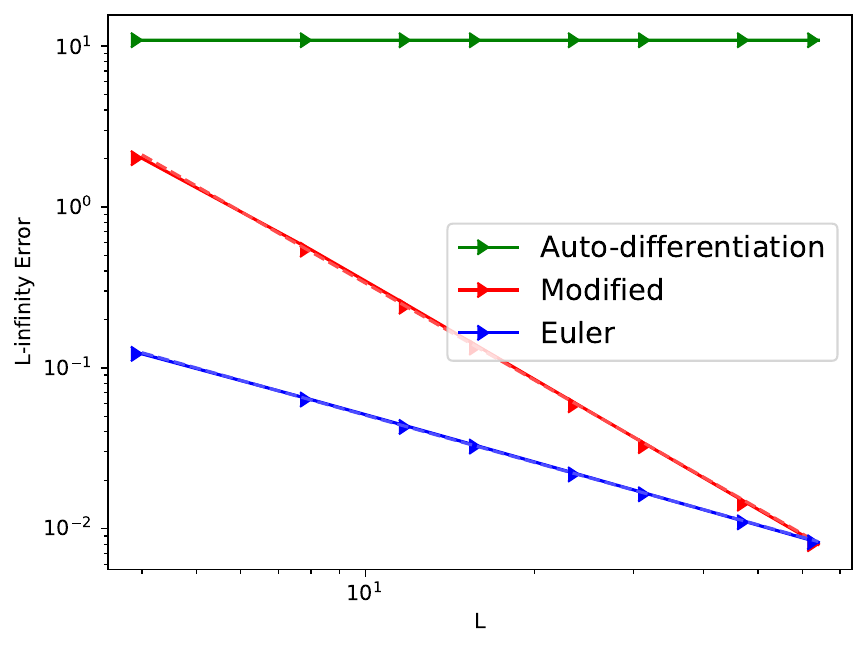}
\caption{Gradient computation of the Leapfrog Method. 
The left panel compares $L \nabla_\Theta E$ (green) with $\NN_\Theta$ formulated in~\eqref{eqn:leapfrog}
      with $\frac{\delta\calE}{\delta\theta}$ (blue). The auto-differentiation outputs $L \nabla_\Theta E$ that
      oscillates around the true gradient. Our modification (red) yields gradients that match the
      ground-truth. In the right panel, we show the decay of error as $h$ shrinks. Since the oscillations do not diminish as $h\to 0$, the $L_\infty$ norm of the
      gradient difference stays as a constant. The convergence rates for the modified gradient and
      that computed from forward Euler are $2.00$ and $0.98$ respectively, confirming the theorem.}
\label{fig:failure_auto_leapfrog}
\end{center}
\end{figure}

In Figure~\ref{fig:failure_auto_leapfrog},~\ref{fig:failure_auto_midpoint}, and~\ref{fig:failure_auto_Ralston}, we plot the performance of auto-differentiation result and the our proposed correction for Leapfrog, Midpoint and Ralston respectively. In each plot, the left panel shows comparison of true gradient (blue), gradient as an output of auto-differentiation (green) and our modified correction (red), following~\eqref{eqn:modified_gradient} and~\eqref{eqn:2-stage-ERK-mod}. The right panel demonstrates the convergence rate with respect to $h$. 
For all three methods, the output of auto-differentiation are highly oscillatory and non-converging: In $L_\infty$ norm, the error stays as a constant for all $h$. The modified gradients (red) visually show no difference from the true gradient, and the convergence rate with respect to $h$ are $h^{2.00}$, $h^{1.70}$, and $h^{1.84}$ respectively, confirming the second order accuracy predicted in Theorem~\ref{theorem:LMMmain} and~\ref{theorem:ERK-correction}. As a comparison, we also plot out the convergence using the standard forward Euler discretization. The rate stays around $h^{~0.98}$, and thus confirms it as a first order method.

\subsection{Curve-learning}\label{sec:numerical_examples-curve-learning}

\begin{figure}[ht]
\begin{center}
    \includegraphics[width=0.75\textwidth]{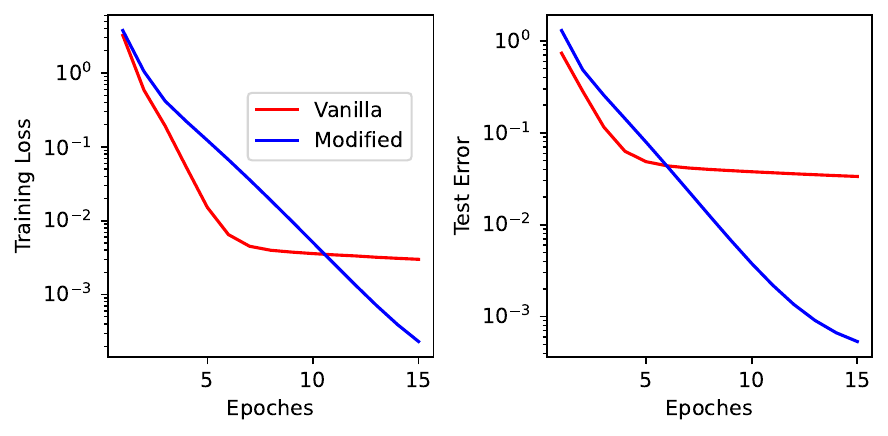} \\
    \includegraphics[width=0.75\textwidth]{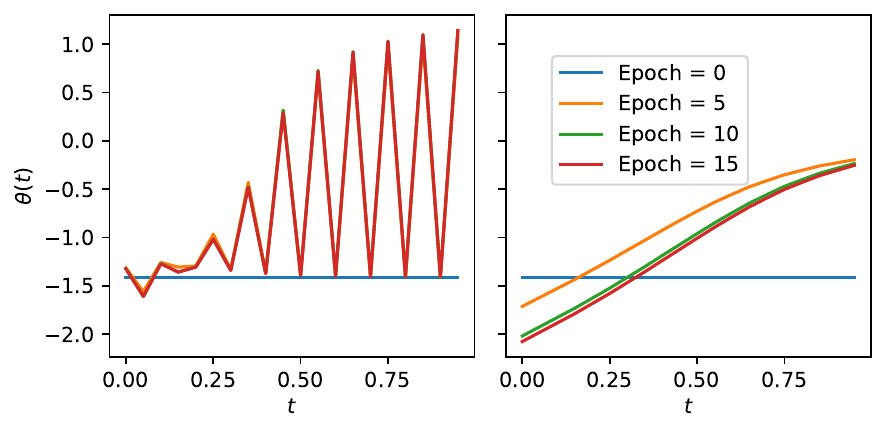} \\
    \includegraphics[width=0.6\textwidth]{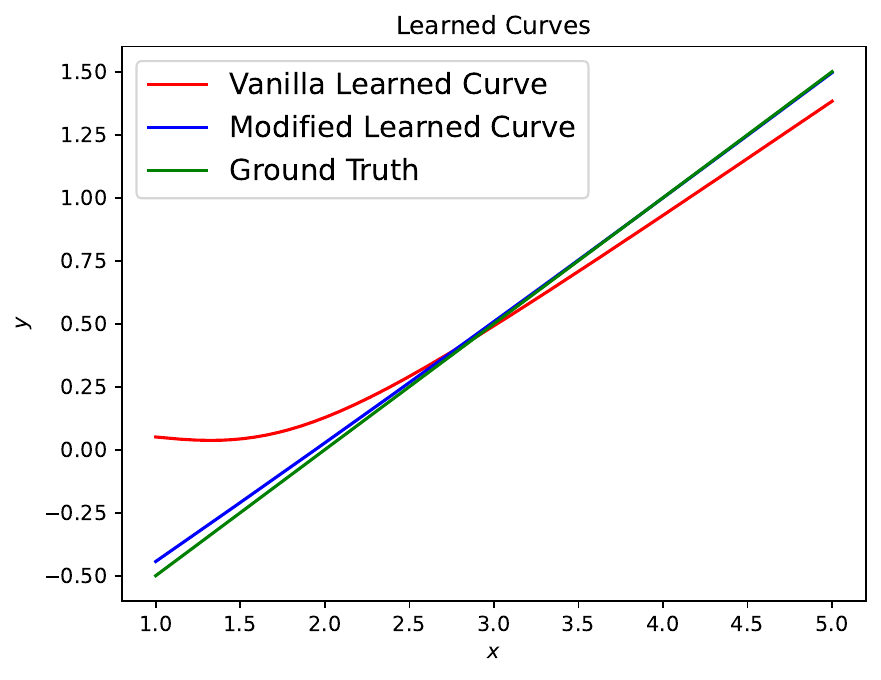}
\caption{
For Leapfrog method. 
The top row shows the training loss and test error over epochs. 
The middle row compares $\Theta$ outputs from vanilla \revadd{(unmodified)} auto-differentiation and our modified gradients\revdel{. The}\revadd{, 
and the} bottom plot presents the final reconstruction. 
\revadd{Quantitatively, the corrected gradients achieve much lower final errors:
training loss decreases from $3.0\times10^{-3}$ to $2.3\times10^{-4}$, 
and test error from $3.3\times10^{-2}$ to $5.4\times10^{-4}$. 
These improvements confirm the effectiveness of the proposed correction.}
}
\label{fig:Leapfrog_Correction}
\end{center}
\end{figure}

\begin{figure}[ht]
\begin{center}
\includegraphics[width=0.8\textwidth]{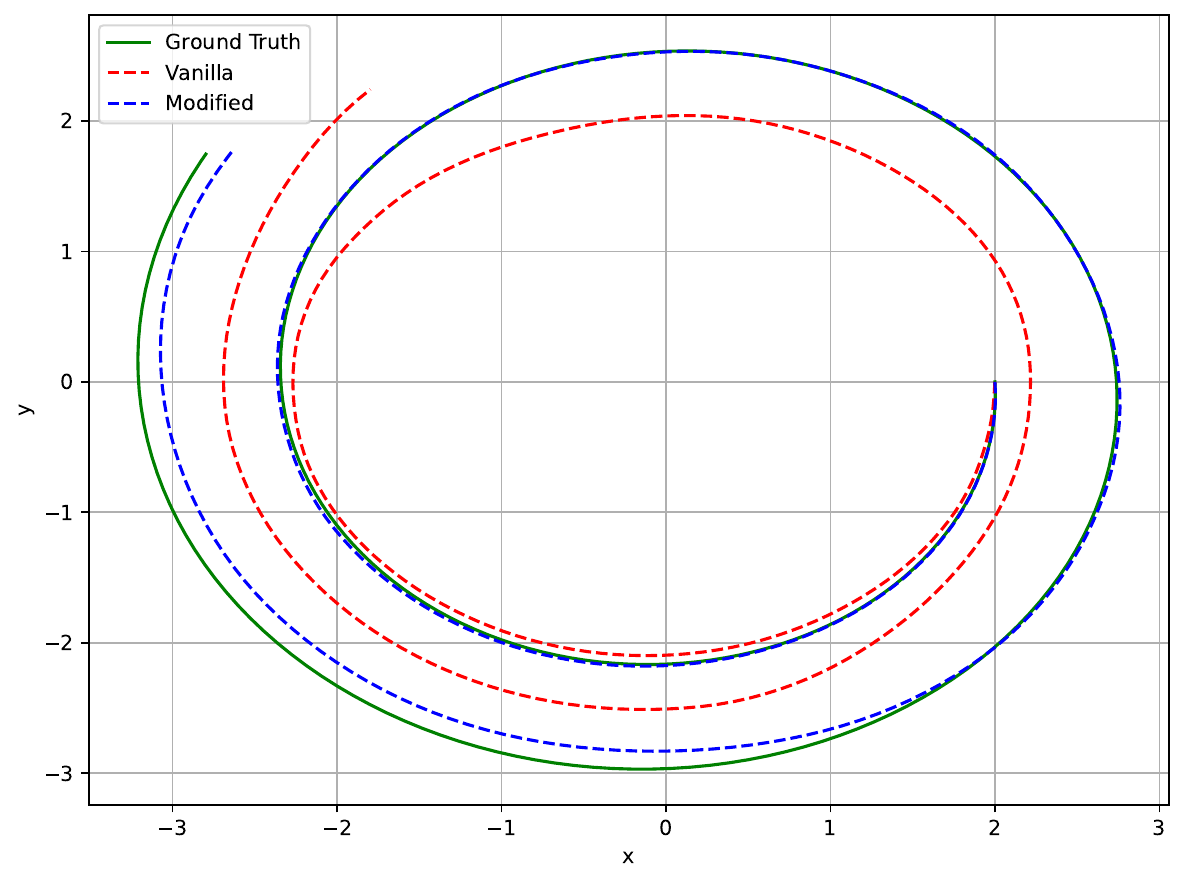}
\caption{For Leapfrog Method. The figure compares reconstructions of a spiral learned using auto-differentiation gradients (red) and our modified gradients (blue). Our modification achieves results that align more closely with the ground truth (green).
\revadd{Quantitatively, the trajectory Root Mean Square Error (RMSE) drops from $0.75$ to $0.07$, 
the final-point error from $1.06$ to $0.13$, 
and the maximum deviation from $1.07$ to $0.15$. 
These results confirm that the corrected gradients yield trajectories that 
closely follow the ground-truth dynamics.}}
\label{fig:Leapfrog_Spiral}
\end{center}
\end{figure}

To further demonstrate the effectiveness of our proposed corrections, we present the training results using
auto-differentiation gradients and our modified gradients.
For a fair comparison, in each experiment, ODE-net training procedures using auto-differentiation gradients were compared to those using our modified gradients under identical initialization, optimization settings, and step
sizes.

We first experimented on learning a linear function.
The improvement for Leapfrog, Midpoint and Ralston are respectively shown in Figure~\ref{fig:Leapfrog_Correction},~\ref{fig:Midpoint_Correction}, and~\ref{fig:Ralston_Correction}. 
The top rows show both auto-differentiation updates (in red) and our modified gradients (in blue) decay in error over epochs.
In these examples, our test error clearly shows better performance compared to auto-differentiation. 
The second rows depict the computed $\Theta$. 
While auto-differentiation leads to strong oscillatory patterns, our modified gradients provide smoother parameters and better convergence in \revdel{epoches} \revadd{epochs}. 
The last rows present the final function reconstructions.
For a linear function, our modified gradients show consistent improvement toward the ground truth (in green).

We also tested learning a spiral in $\mathbb{R}^2$.
The results for Leapfrog, Midpoint, and Ralston methods are shown in Figures~\ref{fig:Leapfrog_Spiral}, \ref{fig:Midpoint_Spiral}, and~\ref{fig:Ralston_Spiral}, respectively.
In each plot, the reconstructed trajectory using auto-differentiation is shown in red, while the reconstruction using the modified gradient is shown in blue.
Our modification produces results that more closely match the ground truth, shown in green.

\section{Conclusions}
\label{sec:conclusions}

In this paper, we investigated the oscillation phenomena that occur when training Leapfrog and Runge-Kutta ODE-Net models, and proposed methods to correct them.
Our work advances the mathematical understanding of the comparison between the Optimize-Then-Discretize and Discretize-Then-Optimize approaches, and also contributes to a better understanding of auto-differentiation as a tool in machine learning.

Our findings suggest that careful treatment of discretization and optimization order is crucial for improving stability and accuracy in training physics-inspired neural networks.
\revadd{The proposed post-processing framework in Section~\ref{sec:ERK} can in principle be extended to higher-order ERK schemes, including the Nystr\"{o}m scheme illustrated in Figure~\ref{fig:failure_auto_1_nystrom}. 
Under higher regularity assumptions, the auto-differentiated adjoint variables remain high-order approximations of the continuous adjoint, and the main modification lies in constructing higher-order interpolants for intermediate states. 
While the derivations are algebraically involved, the extension is conceptually straightforward.}
Future work may explore extending our approach to more general stiff systems or higher-order methods, and further improving gradient approximations in large-scale settings.

\appendix

\section{Proof Strategy of Theorems~\ref{theorem:LMMmain} and~\ref{theorem:ERK-correction}}\label{sec:appendix_strategy_proof}

The proof for the two major theorems (Theorems~\ref{theorem:LMMmain} and~\ref{theorem:ERK-correction}) can be technical, but the strategies used for the proof are relatively straightforward. We summarize them below, and leave details to the supplementary materials.

\subsection{Proof Strategy for Theorem~\ref{theorem:LMMmain}}\label{sec:appendix_strategy_proof_LMM}

This is the theorem that states the averaged (post-processed) autodifferentiation of the neural ODEs ran by Leapfrog approximates the true functional gradient. As discussed in Section~\ref{sec:LMM}, the proof of Theorem~\ref{theorem:LMMmain} hinges on controlling the three terms in~\eqref{eqn:LMMmain-proof-main-estimate}, using a-priori estimates established earlier.

\textbf{Term I} is addressed in Corollary~\ref{cor:term1}, where we compare the discrete gradient $\nabla_\Theta^\rp E$ (defined in~\eqref{eqn:def-grad-p-q-leapfrog}) with the exact gradient $\nabla_\Theta E$ (from~\eqref{eqn:gradient}). 
These expressions differ only by replacing $\rp_l$ with $\hat{\rp}_l$ and $\rz_l$ with $z(lh)$, and this discrepancy is shown to be $\mathcal{O}(h^2)$, based on the second-order accuracy and boundedness of the Leapfrog integrator, as established in Lemmas~\ref{lem:Delta_z} and~\ref{lem:Delta_p}. The result also relies on the boundedness of $\rp_l$, $\hat{\rp}_l$, and the second-order regularity of $f$.

\textbf{Term II} is analyzed in Lemma~\ref{lem:averaging-on-grad}, where we compare $L \rT \nabla_\Theta^\rp E$ with $L \nabla_\Theta^\sfq E$. A direct computation using the structure of the averaging matrix $\rT$ (defined in~\eqref{def:T_matrix}) shows that reordering the averaging introduces only an $\mathcal{O}(h^2)$ error. The proof uses the boundedness of $\hat{\rp}_l$, the third-order regularity of $f$, and second-order bounds on $\theta$.

\textbf{Term III} is handled in Corollary~\ref{cor:term3}, which compares $\nabla_\Theta^\sfq E$ with the functional gradient $\left.\frac{\delta \calE}{\delta \theta}\right|_{\theta(t)}(t)$ (defined in~\eqref{eqn:gradient_functional}). 
The argument is based on Lemma~\ref{lem:averaging-on-p}, which shows that the averaged quantity $\hat{\sfq}_l$ closely approximates $p(lh)$. Although $\hat{\rp}_l$ deviates from the Leapfrog update near initialization, averaging corrects this effect. The proof also uses the boundedness of $z$ and the first-order regularity of $f$.

Combining these three estimates yields the desired $\mathcal{O}(h^2)$ bound in~\eqref{eqn:LMMmain-proof-main-estimate}, completing the proof.

\subsection{Proof Strategy for Theorem~\ref{theorem:ERK-correction}}\label{sec:appendix_strategy_proof_ERK}

As discussed in Section~\ref{sec:ERK}, the proof of Theorem~\ref{theorem:ERK-correction} compares the discrete gradient in the 2-stage ERK scheme (defined in~\eqref{eqn:2-stage-ERK-mod}) with the exact functional gradient~\eqref{eqn:gradient_functional}, both at integer time points and intermediate stages.

The discrepancy is estimated via triangle inequalities and Taylor expansions, and shown to be $\mathcal{O}(h^2)$ under suitable regularity assumptions.

The argument relies on two components:

\begin{itemize}
    \item \textbf{A-priori bounds} on $p(t)$, $z(t)$, $\rz_l$, and $\hat{\rp}_l$, established in Lemma~\ref{lem:2-stage-ERK-bound}.
    
    \item \textbf{Local consistency estimates}:
    \begin{itemize}
        \item Lemma~\ref{lem:2-stage-ERK-Delta_z} and Corollary~\ref{cor:2-stage-ERK-f-theta-error} show that $\rz_l$ and $\rz_{l+\alpha}$ approximate $z(lh)$ and $z((l+\alpha)h)$ up to $\mathcal{O}(h^2)$, due to the second-order accuracy of the 2-stage ERK method.
        
        \item Proposition~\ref{prop:2-stage-ERK-p-error} ensures that $\rp_l$ approximates $p(lh)$ up to $\mathcal{O}(h^2)$, with supporting bounds from Lemmas~\ref{lem:2-stage-ERK-delta-p-init} and~\ref{lem:2-stage-ERK-delta-p-hat}.
        
        \item Corollary~\ref{cor:middle_stage_q} controls the middle-stage interpolation $\sfq_{l+\alpha}$, whose accuracy follows from the linear interpolation structure and Proposition~\ref{prop:2-stage-ERK-p-error}.
    \end{itemize}
\end{itemize}

Together, these results yield the final $\mathcal{O}(h^2)$ error estimate, completing the proof.

\section{Experimental details}\label{sec:appendix_exp-details}

In this section, we describe more details about our numerical experiments in Section~\ref{sec:numerical_examples}.
All ODE-nets, including Leapfrog, Midpoint, Ralston, and \revdel{Nystrom} \revadd{Nystr\"{o}m} are implemented using PyTorch's \texttt{torch.nn} module, with gradient directly provided by Pytorch's autodifferentiation. 

\subsection{Details of gradient computation experiment}\label{sec:appendix_exp-details-gradient-computation}

The gradient computation experiments in Section~\ref{sec:numerical_examples-grad-comp} considers a simple case, where for all of Leapfrog (Figure~\ref{fig:failure_auto_1_leapfrog} and Figure~\ref{fig:failure_auto_leapfrog}), Midpoint (Figure~\ref{fig:failure_auto_1_midpoint} and Figure~\ref{fig:failure_auto_midpoint}), Ralston (Figure~\ref{fig:failure_auto_1_ralston} and Figure~\ref{fig:failure_auto_Ralston}) and \revdel{Nystrom} \revadd{Nystr\"{o}m}(Figure~\ref{fig:failure_auto_1_nystrom}) we used the initial condition $x = 3$, label $y = 24$, output function $g$ being the identity function, and the loss function be the mean square loss~\eqref{eqn:def.E}.
The initial parameter curve $\theta : [0,1] \to \mathbb{R}^3$ is defined to be $\theta(t) := (\frac{t+2}{4},0,1)$, and the activation function is $f(z,\theta) := \theta_3\tanh(z \theta_1 + \theta_2)$, where $\theta_1, \theta_2, \theta_3$ denote the first, second and third coordinate of $\theta$ respectively.
For simplicity, we only compute and plot the first coordinate of the gradients.
We demonstrate the convergence of the $L_\infty$ error of our modified gradients using different $L$-s ranging from $4$ to $64$.
The gradients in the Euler ODE-Net model are computed using the closed form expression in \revdel{\cite{DiChLiWr:2022overparameterization}}\revadd{\cite[Appendix C.4]{DiChLiWr:2022overparameterization}}.
In addition, the ``ODE-gradients,'' which are the values of the functional derivatives $\left.\frac{\delta \calE}{\delta \theta}\right|_{\theta(t)}(t)$ at the corresponding time steps, are computed using~\eqref{eqn:gradient_functional}, where for $z(t)$ and $p(t)$ we solve them using Scipy's \texttt{scipy.integrate.solve\_ivp} function.

\subsection{Details of curve learning experiment}\label{sec:appendix_exp-details-curve-learning}

The first curve learning experiment in Section~\ref{sec:numerical_examples-curve-learning} aims at learning a linear function on $\mathbb{R}\to\mathbb{R}$ defined as $x \mapsto \frac{x}{2}-1$ using Leapfrog (Figure~\ref{fig:Leapfrog_Correction}), Midpoint (Figure~\ref{fig:Midpoint_Correction}) and Ralston (Figure~\ref{fig:Ralston_Correction}) ODE-net models.
We take the activation function to be $f(z,\theta) := \theta_3\tanh(z \theta_1 + \theta_2)$, where $\theta_1, \theta_2, \theta_3$ denote the first, second and third coordinate of $\theta$ respectively.
We also fix the output function $g$ being the identity function, and the loss function be the mean square loss~\eqref{eqn:def.E}.
We initiate all models at $\theta(t) := (-1.11, 0.33 , 1.41)$ and plotted the third coordinate of the parameter curves learned using both vanilla and modified gradient at training epochs from $0$ to $15$.
The training dataset consists of $128$ evenly sampled points for $x$ ranging from $2$ to $4$, and the test dataset consists of $63$ evenly sampled points for $x$ ranging from $1$ to $5$.
The numbers are chosen so that we would be able to test both the interpolation and extrapolation capabilities of the trained models.
We set $L = 20$ for Leapfrog and $L = 10$ for Midpoint and Ralston, so all of them will have $20$ stages in total.
For a fair comparison, we use PyTorch's \texttt{torch.optim.SGD} with identical rescaling methods applied to both inputs and labels across all experiments. 
We fix the random seed, use $64$ as the batch size, set the learning rate to $0.1$, and run the training process for $15$ epochs.

The second curve-learning experiments in Section~\ref{sec:numerical_examples-curve-learning} learns a spiral in  $\mathbb{R}^d$, whose dynamic is defined by
\begin{equation}\label{eq:spiral}
\frac{d}{dt} \begin{bmatrix} x \\ y \end{bmatrix} =
\begin{bmatrix}
0.1 & 2.0 \\
-2.0 & 0.1
\end{bmatrix}
\begin{bmatrix} x \\ y \end{bmatrix},
\quad \text{with } \begin{bmatrix} x(0) \\ y(0) \end{bmatrix} = \begin{bmatrix} 2 \\ 0 \end{bmatrix}
\end{equation}
using Leapfrog (Figure~\ref{fig:Leapfrog_Spiral}), Midpoint (Figure~\ref{fig:Midpoint_Spiral}) and Ralston (Figure~\ref{fig:Ralston_Spiral}) ODE-net models.
We use the $(x(t),y(t))$ pairs as the inputs and the corresponding time-derivatives $(\frac{d x}{dt}(t),\frac{d y}{dt}(t))$ as the labels.
In each experiments, we use a randomly initiated linear mapping to bring the inputs to $\mathbb{R}^4$, train our ODE-net models in $\mathbb{R}^4$, and use another randomly initiated linear mapping to project the outputs down to $\mathbb{R}^d$.
The two linear mappings are fixed and not trainable.
We still use $\tanh$ as the activation function, and the mean square loss as the loss function.
We set $L = 40$ for Leapfrog and $L = 20$ for Midpoint and Ralston, so all of them will have $40$ stages in total.
We initiate all ODE-net layers to be the same randomly sampled parameters.
The training dataset is generated by running~\eqref{eq:spiral} from time $0$ to $5$ using Scipy's \texttt{scipy.integrate.solve\_ivp} function with a step size of $0.01$, so that at each time step, the state and the corresponding time derivatives yield a data/label pair. 
We run~\eqref{eq:spiral} on $[0,5]$ again with the learned models to generate the reconstructions.
All optimizations are done with PyTorch's \texttt{torch.optim.SGD} optimizer, fixed random seed, batch size of 64, learning rate of 0.02, and trained for 200 epochs.

\section*{Acknowledgments}

We acknowledge the use of GPT-4o to assist with editing and polishing the authors' written text.

\supplement
\section{Proof of Proposition~\ref{prop:grad_dis} and~\ref{prop:p-adjoint-ERK}}\label{sec:appendix_proof_grad_dis}

This section is dedicated to the proof of Proposition~\ref{prop:grad_dis} and Proposition~\ref{prop:p-adjoint-ERK}. In particular, Section~\ref{sec:appendix_proof_grad_dis_leapfrog} and Section~\ref{sec:appendix_proof_grad_dis_2_stage_ERK} deal with the cases of Leapfrog and 2-stage ERK respectively, and Section~\ref{sec:appendix_proof_adjoint_ERK} collects proof for Proposition~\ref{prop:p-adjoint-ERK}.

\subsection{Proof of~\eqref{eqn:gradient}}\label{sec:appendix_proof_grad_dis_leapfrog}

\begin{proof}

To compute the gradient of $E$ on $\Theta$, recall the objective function defined in~\eqref{eqn:def.E} has the constraint~\eqref{eqn:leapfrog}, so we deploy a standard Lagrangian approach in conducting the minimizing process:
\begin{equation}\label{eqn:Lag_LF}
\begin{aligned}
\mathcal{L}(\Theta, \rz; \rp) & := \frac{1}{2} \|g(\rz_L) - y\|_{\mu}^2 - \rp_{\text{init}}^\top (\rz_0 - x) - \rp_0^\top (\rz_1 - \rz_0 - h f(\rz_0, \rtheta_0)) \\
& \ \ \ \ \ \ - \sum_{l=1}^{L-1} \rp_l^\top (\frac{\rz_{l+1}}{2} - \frac{\rz_{l-1}}{2} - h f(\rz_l, \rtheta_l)) , 
\end{aligned}\end{equation}
where $\rp_{\text{init}}$ is the adjoint for the initial data constraint, and $\rp_l$ is the adjoint for the update formula \eqref{eqn:leapfrog}.

On the solution manifold $\rz(\Theta)$ that solves~\eqref{eqn:leapfrog}, the three Lagrange terms are dropped, and
\[
\mathcal{L}(\Theta, \rz(\Theta);\rp) =E(\Theta)\,,
\]
therefore
\begin{equation}\label{eqn:relation_E_L}
\nabla_{\Theta}E = \frac{\partial\mathcal{L}}{\partial\Theta}+ \frac{\partial\mathcal{L}}{\partial\rz}\frac{\partial\rz}{\partial\Theta}\,.
\end{equation}
If there is any $\rp$ such that $\frac{\partial\mathcal{L}}{\partial\rz}=0$, then $\nabla_\Theta E$ can be reduced to $\frac{\partial\mathcal{L}}{\partial\Theta}$, which is further characterized, according to~\eqref{eqn:Lag_LF}:
\begin{equation}
\partial_{\theta_l}E = \frac{\partial \mathcal{L}}{\partial \rtheta_l} = \rp_l^\top \partial_\theta f(\rz_l, \rtheta_l) \, .
\end{equation}

To make $\partial_{\rz}\mathcal{L} = 0$, we compute \begin{subequations}\label{eqn:leapfrog-lagrangian-grad}
\begin{align}
\frac{\partial \mathcal{L}}{\partial \rz_L} & = \revadd{\langle} (g(\rz_L) - y) \revadd{,} \nabla g(\rz_L) \revadd{\rangle_{\mu}}  -  \frac{\rp_{L-1}^\top}{2} \, , \\
\frac{\partial \mathcal{L}}{\partial \rz_{L-1}} & = h \rp_{L-1}^\top \partial_z f(\rz_{L-1}, \rtheta_{L-1}) - \frac{\rp_{L-2}^\top}{2} \, , \\
\frac{\partial \mathcal{L}}{\partial \rz_l} & = \frac{\rp_{l+1}^\top}{2} + h \rp_l^\top \partial_z f(\rz_l, \rtheta_l) - \frac{\rp_{l-1}^\top}{2} \, ,\\
\frac{\partial \mathcal{L}}{\partial \rz_{1}} & = -\rp_0^\top + h \rp_1^\top \partial_z f(\rz_1, \rtheta_1) + \frac{\rp_2^\top}{2}\,. 
\end{align}
\end{subequations}
Setting them to zero, we have \begin{equation}
\rp_{L-1}^\top = 2 \revadd{\langle} (g(\rz_L) - y) \revadd{,} \nabla g(\rz_L) \revadd{\rangle_{\mu}} \, ,
\end{equation}
\begin{equation}
\rp_{L-2}^\top = 2h \rp_{L-1}^\top \partial_z f(\rz_{L-1}, \rtheta_{L-1}) \, ,
\end{equation}
\begin{equation}
\rp_{l-1}^\top = \rp_{l+1}^\top + 2 h \rp_l^\top \partial_z f(\rz_l, \rtheta_l) \, ,
\end{equation} and \begin{equation}
\rp_0^\top = \frac{\rp_2^\top}{2} + h \rp_1^\top \partial_z f(\rz_1, \rtheta_1) \, .
\end{equation}
This completes the proof for the Leapfrog case of the proposition.

\end{proof}

\subsection{Proof of~\eqref{eqn:gradient-2-stage-ERK}}\label{sec:appendix_proof_grad_dis_2_stage_ERK}

\begin{proof}
To compute the gradient of $E$ on $\Theta$, we recall~\eqref{eqn:def.E} is a constrained-optimization. For this type of optimization, a standard technique is to deploy the Lagrange multiplier:
\begin{equation}\label{eqn:def-lagrangian-erk-2stage}
\begin{aligned}
\mathcal{L}(\Theta, \rz;\rp) & := \frac{1}{2} \|g(\rz_L) - y\|_{\mu}^2 - \rp_0^\top (\rz_0 - x) - \sum_{l=0}^{L-1}\rp_{l+\alpha}^\top (\rz_{l+\alpha} - \rz_l - h \alpha f(\rz_{l}, \rtheta_{l})) \\
& \ \ \ \ \ \ - \sum_{l=0}^{L-1}\rp_{l+1}^\top (\rz_{l+1} - \rz_l - h(1 - \frac{1}{2\alpha})f(\rz_l, \rtheta_l) - \frac{h}{2\alpha}f(\rz_{l+\alpha}, \rtheta_{l+\alpha})) \,,
\end{aligned}
\end{equation}
where $\rp_0$ is the Lagrange multiplier for the initial data constraint, and $\rp_{l+\alpha}$ and $\rp_{l}$ are multipliers for equation updates given in~\eqref{eqn:ERK-def}.

We note that on the solution manifold $\rz(\Theta)$ that solves~\eqref{eqn:ERK-def}, the three Lagrange terms are dropped, and
\[
\mathcal{L}(\Theta, \rz(\Theta);\rp) =E(\Theta)\,,
\]
therefore \eqref{eqn:relation_E_L} again holds true.
If we can luckily find a choice of $\rp$ so that $\frac{\partial\mathcal{L}}{\partial\rz}=0$, then $\nabla_\Theta E$ is reduced to $\frac{\partial\mathcal{L}}{\partial\Theta}$, that can be easily computed, as below:
\begin{equation}
    \begin{aligned}
        \partial_{\theta_l}E=\frac{\partial \mathcal{L}}{\partial \rtheta_l} & = h(1 - \frac{1}{2\alpha})\rp_{l+1}^\top \partial_\theta f(\rz_l, \rtheta_l) + h \alpha \rp_{l+\alpha}^\top \partial_\theta f(\rz_l, \rtheta_l) \, 
\\
\partial_{\theta_{l+\alpha}}E=\frac{\partial \mathcal{L}}{\partial \rtheta_{l+\alpha}} & = \frac{h}{2\alpha}\rp_{l+1}^\top \partial_\theta f(\rz_{l+\alpha}, \rtheta_{l+\alpha}) \, .
    \end{aligned}
\end{equation}

To make $\partial_{\rz}\mathcal{L}=0$, we realize:
\begin{subequations}\label{eqn:ERK-lagrangian-2stage-grad}
\begin{align}
\frac{\partial \mathcal{L}}{\partial \rz_L} & = \revadd{\langle} (g(\rz_L) - y) \revadd{,} \nabla g(\rz_L) \revadd{\rangle_{\mu}} - \rp_L^\top \, ,\\
\frac{\partial \mathcal{L}}{\partial \rz_l} & = \rp_{l+1}^\top + h(1 - \frac{1}{2\alpha})\rp_{l+1}^\top \partial_z f(\rz_l, \rtheta_l) - \rp_{l}^\top + \rp_{l+\alpha}^\top + h \alpha \rp_{l+\alpha}^\top \partial_z f(\rz_l, \rtheta_l) \, ,
\\
\frac{\partial \mathcal{L}}{\partial \rz_{l+\alpha}} & = \frac{h}{2\alpha}\rp_{l+1}^\top \partial_z f(\rz_{l+\alpha}, \rtheta_{l+\alpha}) - \rp_{l+\alpha}^\top \,.
\end{align}
\end{subequations}
Setting them to be zero, we have \begin{equation}
\rp_{L}^\top = \revadd{\langle} (g(\rz_L) - y) \revadd{,} \nabla g(\rz_L) \revadd{\rangle_{\mu}} \,,
\end{equation}
and
\begin{equation}\label{eqn:p_l_alpha_erk}
\rp_{l+\alpha}^\top = \frac{h}{2\alpha}\rp_{l+1}^\top \partial_z f(\rz_{l+\alpha}, \rtheta_{l+\alpha}) \ ,
\end{equation} 
and
\begin{equation}\label{eqn:p_l_erk}
\rp_{l}^\top = \rp_{l+1}^\top + h(1 - \frac{1}{2\alpha})\rp_{l+1}^\top \partial_z f(\rz_l, \rtheta_l) + \rp_{l+\alpha}^\top + h \alpha \rp_{l+\alpha}^\top \partial_z f(\rz_l, \rtheta_l) \,.
\end{equation}
Combining~\eqref{eqn:p_l_alpha_erk}-\eqref{eqn:p_l_erk}, we recover~\eqref{eqn:p-adjoint_2-stage-ERK}. Combining~\eqref{eqn:p_l_alpha_erk} and~\eqref{eqn:relation_E_L}, and setting $\partial_{\rz}\mathcal{L}=0$, we obtain~\eqref{eqn:gradient-2-stage-ERK}.

\end{proof}

\subsection{Proof of Proposition~\ref{prop:p-adjoint-ERK}}\label{sec:appendix_proof_adjoint_ERK}

Now we are ready to give the proof of Proposition~\ref{prop:p-adjoint-ERK}.

\begin{proof}
To compute the gradient of $E$ on $\Theta$, we recall that the objective function~\eqref{eqn:def.E} is constrained by~\eqref{eqn:ERK-def}, so the same Lagrangian approach is utilized. Recall the shorthand notation: $\rxi_l = [\rxi_{l,1}^\top \,, \rxi_{l,2}^\top \,, \ldots \,,\ \rxi_{l,\nu}^\top]^\top$ and $f_l$ being $f$ evaluated at $\{\rxi_{l,i}, \theta_{l, c_i}\}_{i=1}^\nu$, we write:
\begin{equation}
\begin{aligned}
\mathcal{L}(\Theta, \rz, \rxi; \rp, \sfq) & := \frac{1}{2} \|g(\rz_L) - y\|_{\mu}^2  - \sum_{l=0}^{L-1}\rp_{l+1}^\top (\rz_{l+1} - \rz_l - h (\mathsf{b}^\top \otimes \mathsf{I}_d) f_l) \\
& \ \ \ \ \ \ - \rp_0^\top (\rz_0 - x) - \sum_{l=0}^{L-1}\sfq_{l}^\top (\rxi_l - (\mathsf{e}_\nu \otimes \rz_l) - h (\mathsf{A} \otimes \mathsf{I}_d) f_l) \,,
\end{aligned}
\end{equation}
where $\rp_0$ is the Lagrange multiplier for the initial data constraint, and $\rp_{l+1}$ and $\sfq_l$ are multipliers for the update formula~\eqref{eqn:ERK-def}.

Similar to the previous sections, to determine the form of the \revdel{cotangent} \revadd{adjoint} vector, we ought to set $\partial_\rz \mathcal{L} = 0$ and $\partial_\rxi \mathcal{L} = 0$. Therefore
\begin{subequations}\label{eqn:ERK-lagrangian-grad}
\begin{align}
\frac{\partial \mathcal{L}}{\partial \rz_L} & = \revadd{\langle} (g(\rz_L) - y) \revadd{,} \nabla g(\rz_L) \revadd{\rangle_{\mu}} - \rp_L^\top \, ,\\
\frac{\partial \mathcal{L}}{\partial \rz_l} & = \rp_{l+1}^\top - \rp_l^\top + \sfq_l^\top (\mathsf{e}_\nu \otimes \mathsf{I}_d) \\
\frac{\partial \mathcal{L}}{\partial \rxi_l} & = h \rp_{l+1}^\top (\mathsf{b}^\top \otimes \mathsf{I}_d) \partial_z f_l + h \sfq_{l}^\top (\mathsf{A} \otimes \mathsf{I}_d) \partial_z f_l - \sfq_l^\top \,.
\end{align}
\end{subequations}
Setting them to zero, we have 
\begin{equation}\label{eqn:p-adjoint-initial-ERK-def}
\rp_L^\top  = \revadd{\langle} (g(\rz_L) - y) \revadd{,} \nabla g(\rz_L) \revadd{\rangle_{\mu}} \, ,
\end{equation}
\begin{equation}\label{eqn:p-adjoint-p-int-def}
\rp_l^\top  = \rp_{l+1}^\top + \sfq_l^\top (\mathsf{e}_\nu \otimes \mathsf{I}_d) \, ,
\end{equation} and \begin{equation}\label{eqn:p-adjoint-q-def}
\sfq_l^\top(\mathsf{I}_{\nu d} - h (\mathsf{A} \otimes \mathsf{I}_d)\partial_z f_l)  = h \rp_{l+1}^\top (\mathsf{b}^\top \otimes \mathsf{I}_d)\partial_z f_l\,.  
\end{equation}
The statement of the proposition holds by combining~\eqref{eqn:p-adjoint-p-int-def} and~\eqref{eqn:p-adjoint-q-def}.
\end{proof}

\section{Proof of Theorem~\ref{theorem:LMMmain}}\label{sec:appendix_proof_Leapfrog}

Here we present the details of the proof of Theorem~\ref{theorem:LMMmain}. To recall the notations, we denote the solution to the ODE~\eqref{eqn:ODE} $z(t)$, and the solution to the adjoint ODE~\eqref{eqn:adjoint} $p(t)$. The ODE-nets using~\eqref{eqn:leapfrog} provides ${\rz}_l$, and the auto-differentiation \revdel{cotangent} \revadd{adjoint} vector $\rp_l$ satisfies~\eqref{eqn:adjoint_dis}.

\subsection{A-priori estimate}\label{sec:a_priori}

A few a-priori estimates are collected here.

\begin{lemma}\label{lem:leapfrog-bound}
$z(t)$, $p(t)$, $\rz_l$, $\rp_l$ and $\hat{\rp}_l$ are all bounded quantities, and the upper bound is independent of $L$. The bound is determined by $C_a, C_b, C_c, C_d$, and the upper bound of $\theta$.
\end{lemma}
This boundedness result comes from an analog to \revdel{~\cite{DiChLiWr:2022overparameterization}}\revadd{~\cite[Lemmas 14 and 24]{DiChLiWr:2022overparameterization}} and we cite them here for the completeness of the paper. The proof extends in a straightforward way and is omitted from here. We emphasize that the boundedness is independent of the density of the discretization.

\begin{lemma}\label{lem:Delta_z}
Let $z(t)$ solve~\eqref{eqn:ODE}, and let $\rz_l$ solve~\eqref{eqn:leapfrog}, then for $l = 0,1,2,\dots,L$:
\begin{equation}
\vert  z(lh) - \rz_l \vert = \mathcal{O}(h^2)\,,
\end{equation}
where the $\mathcal{O}$ notation contains dependence of $C_a, C_b, C_d$, the regularity of $\theta$ up to its second order derivative, and boundedness of $z(t)$.
\end{lemma}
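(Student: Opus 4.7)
The plan is to establish the $\mathcal{O}(h^2)$ global error via the classical two-ingredient argument for Leapfrog-type schemes: a local truncation error of size $\mathcal{O}(h^3)$ per step, combined with a discrete stability estimate that absorbs the accumulation over $L=1/h$ steps into an overall $\mathcal{O}(h^2)$ bound.

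First, I would Taylor expand $z$ symmetrically around $t_{l+1}$. The even-order terms cancel in the centered difference $z(t_{l+2}) - z(t_l)$, leaving $2h\,z'(t_{l+1}) + \mathcal{O}(h^3)$, which matches the Leapfrog increment to leading order since $z' = f(z,\theta)$. The residual $\tau_l$ thus satisfies $|\tau_l|=\mathcal{O}(h^3)$, with constant depending on $\|z'''\|_\infty$; this quantity is controlled by differentiating the ODE twice and invoking the paper's regularity assumptions on $f$ (up to third order) and $\theta$ (up to second order), together with the a priori boundedness of $z$ supplied by the preceding lemma. For the starting step, a single Taylor expansion gives $z(h) = x + hf(x,\theta(0)) + \tfrac{h^2}{2}z''(0) + \mathcal{O}(h^3)$, while $\rz_1 = x + hf(x,\rtheta_0)$, so $|\rz_1 - z(h)| = \mathcal{O}(h^2)$ under the same hypotheses.

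Next, I would subtract the Leapfrog update from this Taylor identity to form the error recursion for $e_l := \rz_l - z(lh)$:
\[
e_{l+2} - e_l = 2h\bigl[f(\rz_{l+1},\rtheta_{l+1}) - f(z(t_{l+1}),\rtheta_{l+1})\bigr] + \tau_l.
\]
Lipschitz continuity of $f$ in $z$, which follows from bounded $\partial_z f$ on the bounded range of $\rz_l$ and $z(t)$, then yields $|e_{l+2}| \le |e_l| + 2hL_f|e_{l+1}| + Ch^3$. To collapse this two-step recursion into a scalar Gronwall I would introduce the envelope $M_l := \max_{0\le j\le l}|e_j|$; monotonicity together with the above bound gives $M_{l+2} \le (1+2hL_f)M_{l+1} + Ch^3$, and iterating at most $L$ times produces $M_L \lesssim e^{2L_f}(M_1 + Lh^3) = \mathcal{O}(h^2)$, using $M_0=0$ and $M_1=\mathcal{O}(h^2)$ from the initial-step analysis.

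The main obstacle is the two-step coupling between odd and even indices, which is the origin of Leapfrog's well-known parasitic mode and which a plain single-index Gronwall argument does not accommodate. The $\max$-envelope trick above is the cleanest way to dominate both sub-sequences simultaneously without having to diagonalize the $2\times 2$ Leapfrog amplification matrix; the alternative of tracking $(e_l,e_{l+1})^\top$ as a vector and bounding a product of slowly varying $2\times 2$ propagation matrices works equally well but requires more bookkeeping for the same conclusion.
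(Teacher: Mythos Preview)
Your proposal is correct and follows essentially the same route as the paper: Taylor-expand to get an $\mathcal{O}(h^3)$ local truncation error, verify the $\mathcal{O}(h^2)$ startup error at $l=1$, form the two-step error recursion $|e_{l+2}|\le |e_l|+Ch|e_{l+1}|+\mathcal{O}(h^3)$ via the Lipschitz bound on $f$, and accumulate over $L=1/h$ steps. The paper dispatches the two-step recursion with a one-line ``by induction,'' whereas your max-envelope $M_l$ makes the discrete Gronwall step explicit; both reach the same $\mathcal{O}(h^2)$ conclusion.
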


\begin{proof}
This is a standard numerical ODE result. In particular, we denote $\Delta_l^\rz = z(l h) - {\rz}_l$ for all $l$, then recall that $z(0) = x = {\rz}_0$, $\Delta_0^\rz = 0$. Similarly for the first step:
$$\begin{aligned}
z(h) - {\rz}_1 & = z(h) - ({\rz}_0 + h f({\rz}_0, \rtheta_0)) \\
& = (z(0) + h f(z(0), \theta(0)) + \mathcal{O}(h^2)) - (z(0)+ h f(z(0), \theta(0))) \\
& = \mathcal{O}(h^2)\,,
\end{aligned}$$
meaning $\vert \Delta_1^\rz \vert = \mathcal{O}(h^2)$. For $l \geq 1$, we use Taylor expansion and triangle inequality:
$$\begin{aligned}
\vert \Delta_{l+1}^\rz \vert & = \vert z((l+1)h) - {\rz}_{l+1} \vert \\
& = \vert (z((l-1)h) + 2 h f(z(lh), \rtheta_l) + \mathcal{O}(h^3)) - ({\rz}_{l-1} + 2h f({\rz}_l, \rtheta_l)) \vert \\
& \leq \vert z((l-1)h) - {\rz}_{l-1} \vert  + 2h \vert f(z(lh), \rtheta_l) - f({\rz}_l, \rtheta_l)) \vert + \mathcal{O}(h^3) \\
& \leq \vert \Delta_{l-1}^\rz \vert + C h \vert \Delta_l^\rz \vert + \mathcal{O}(h^3)\,.
\end{aligned}$$
where the $\mathcal{O}$ notation include the regularity of the $f$ and $\theta$ up to their second order derivative, and $C$ depends on the first order derivative of $f$. By induction, in each iteration, $\mathcal{O}(h^3)$ error is added to $\Delta^\rz_{l+1}$, and collectively in $L = 1/h$ steps, $|\Delta_l^\rz| = \calO(h^2)$ for $l = 0, 1, 2, \dots, L$.
\end{proof}

Next, we verify the error of the $\hat{\rp}_l$ as an approximator to $\rp_l$.

\begin{lemma}\label{lem:Delta_p}
Let $\rp_l$ solve~\eqref{eqn:adjoint_dis} and let $\hat{\rp}_l$ solve~\eqref{eqn:adjoint_dis} with $\rz_l$ replaced by $z(lh)$, then for all $l \in \{0,\dots, L-1\}$:
\begin{equation}
\vert \hat{\rp}_l - {\rp}_l \vert = \mathcal{O}(h^2)\,.
\end{equation}
where the $\mathcal{O}$ notation includes $C_a, C_b, C_c$, the boundedness of $\theta$, the regularity of $\theta$ up to its second order derivative, and boundedness of $z(t)$ and $p(t)$.
\end{lemma}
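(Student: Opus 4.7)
The plan is to bound $\Delta_l^\rp := \hat{\rp}_l - \rp_l$ stage-by-stage, backward from $l=L-1$ to $l=0$, and to collapse the resulting two-step backward recurrence into a one-step discrete Gronwall inequality that stays quadratic over $\calO(1/h)$ iterations. The raw ingredients are: the uniform a-priori bounds on $\rp_l$, $\hat{\rp}_l$, and $z(\cdot)$ from the first lemma of this subsection; the Lipschitz regularity of $g$, $\nabla g$, and $\partial_z f$ (which follows from the assumptions on $g$ and the $C^3$ assumption on $f$); and Lemma~\ref{lem:Delta_z}, which provides $|z(lh) - \rz_l| = \calO(h^2)$.

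First, I would dispatch the terminal stages. At $l = L-1$, $\hat{\rp}_{L-1}^\top$ and $\rp_{L-1}^\top$ differ only through $g$ and $\nabla g$ evaluated at $z(1)$ versus $\rz_L$, so Lipschitz continuity together with Lemma~\ref{lem:Delta_z} immediately gives $|\Delta_{L-1}^\rp| = \calO(h^2)$. At $l = L-2$, the extra prefactor $2h$ together with the terminal estimate just obtained and the Lipschitz regularity of $\partial_z f$ yields $|\Delta_{L-2}^\rp| = \calO(h^3)$, which is in particular $\calO(h^2)$. For the main range $1 \leq l \leq L-3$, I subtract the two versions of~\eqref{eqn:adjoint_3} and split the difference of the nonlinear terms via the standard add-and-subtract trick into a piece proportional to $|\Delta_{l+1}^\rp|$ (controlled by the uniform bound on $\partial_z f$) and a piece proportional to $|z((l+1)h) - \rz_{l+1}|$ (controlled by Lemma~\ref{lem:Delta_z} and the boundedness of $\rp_{l+1}$). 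This yields the backward recurrence
\[
|\Delta_l^\rp| \leq |\Delta_{l+2}^\rp| + 2hC\,|\Delta_{l+1}^\rp| + C'h^3 .
\]

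The main obstacle is propagating this two-step backward recurrence over $L = \calO(1/h)$ iterations without degrading from the targeted $\calO(h^2)$ bound; the danger is that a careless bound that treats the $|\Delta_{l+2}^\rp|$ term and the $hC|\Delta_{l+1}^\rp|$ term separately can produce a spurious amplification factor. My trick is to introduce the combined quantity $E_l := |\Delta_l^\rp| + |\Delta_{l+1}^\rp|$; adding $|\Delta_{l+1}^\rp|$ to both sides of the recurrence and regrouping collapses it into the one-step inequality $E_l \leq (1 + 2hC)\,E_{l+1} + C'h^3$, which is in standard discrete Gronwall form. Iterating backward from $E_{L-2} = \calO(h^2)$, the amplification factor $(1+2hC)^{L}$ is bounded by $e^{2C}$, and the geometric sum of the $C'h^3$ inputs over $L = 1/h$ steps contributes another $\calO(h^2)$, so $E_l = \calO(h^2)$ uniformly for $l = 1, \ldots, L-1$, hence $|\Delta_l^\rp| = \calO(h^2)$. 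The remaining case $l = 0$ is handled in a single step: the defining relation $\rp_0^\top = \tfrac{1}{2}\rp_2^\top + h\,\rp_1^\top\partial_z f(\rz_1, \rtheta_1)$ combined with the same Lipschitz split gives $|\Delta_0^\rp| \leq \tfrac{1}{2}|\Delta_2^\rp| + Ch|\Delta_1^\rp| + Ch^3 = \calO(h^2)$, completing the proof.
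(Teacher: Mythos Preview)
Your proposal is correct and follows essentially the same approach as the paper: establish $|\Delta_{L-1}^\rp|,|\Delta_{L-2}^\rp|=\calO(h^2)$ at the terminal stages, derive the backward two-step recurrence $|\Delta_{l}^\rp|\leq |\Delta_{l+2}^\rp|+Ch|\Delta_{l+1}^\rp|+\calO(h^3)$, and propagate it over $L=1/h$ iterations. Your $E_l:=|\Delta_l^\rp|+|\Delta_{l+1}^\rp|$ device making the discrete Gronwall step explicit, and your separate treatment of $l=0$ via~\eqref{eqn:adjoint_dis}d, are in fact more careful than the paper, which simply writes ``collectively by iteration using $L=1/h$'' and folds the $l=0$ case into the generic recurrence.
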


\begin{proof}
It is expected due to the stability of the updating formula of $\hat{\rp}_l$. To proceed, we denote $\Delta_l^\rp = \hat{\rp}_l - \rp_l$ for all $l$, then at final time:
$$\begin{aligned}
\vert \Delta_{L-1}^\rp \vert & = \vert \hat{\rp}_{L-1}^\top - {\rp}_{L-1}^\top\vert \\
& = 2 \vert \revadd{\langle} (g(z(1)) - y(x)) \revadd{,} \nabla g(z(1)) \revadd{\rangle_{\mu}} - \revadd{\langle} (g({\rz}_L) - y(x)) \revadd{,} \nabla g({\rz}_L) \revadd{\rangle_{\mu}} \vert \\
& = \mathcal{O}(h^2)\,,
\end{aligned}$$
where we used Lemma~\ref{lem:Delta_z}, the Lipschitz continuity assumption on $g$ and $\nabla g$, and the boundedness of $\rz_L$.
$$\begin{aligned}
\vert \Delta_{L-2}^\rp \vert & = \vert \hat{\rp}_{L-2}^\top - {\rp}_{L-2}^\top \vert \\
& = 2h \vert \hat{\rp}_{L-1}^\top \partial_z f(z((L-1)h), {\rtheta}_{L-1}) - {\rp}_{L-1}^\top \partial_z f({\rz}_{L-1}, {\rtheta}_{L-1}) \vert \\
& \leq 2h \vert \hat{\rp}_{L-1}^\top \vert \cdot \vert \partial_z f(z((L-1)h), {\rtheta}_{L-1}) - \partial_z f({\rz}_{L-1}, {\rtheta}_{L-1}) \vert \\
& \ \ \ \ + 2h\vert \hat{\rp}_{L-1}^\top -{\rp}_{L-1}^\top \vert \cdot \vert \partial_z f(\rz_{L-1}, {\rtheta}_{L-1}) \vert \\
& \leq C_1 h \vert \Delta_{L-1}^\rz \vert + C_2 h \vert \Delta_{L-1}^\rp \vert \\
& = \mathcal{O}(h^2)\,,
\end{aligned}$$
where we have used the triangle inequality, and the regularity of $f$ up to its second order derivative. 
The last equation comes from Lemma~\ref{lem:Delta_z}. For $l=1,2,\dots,L-2$ we call~\eqref{eqn:adjoint_dis} again:
$$\begin{aligned}
\vert \Delta_{l-1}^{\rp} \vert & = \vert  \hat{\rp}_{l-1}^\top - \rp_{l-1}^\top  \vert \\
& = \vert  (\hat{\rp}_{l+1}^\top + 2h \hat{\rp}_{l}^\top \partial_z f(z(lh), {\rtheta}_l)) - (\rp_{l+1}^\top + 2h \rp_{l}^\top \partial_z f(\rz_l, {\rtheta}_l)) \vert \\
& \leq \vert \hat{\rp}_{l+1}^\top - \rp_{l+1}^\top \vert + 2h \vert \hat{\rp}_{l}^\top \vert  \vert \partial_z f(z(lh), {\rtheta}_l) - \partial_z f(\rz_l, {\rtheta}_l) \vert + 2h \vert \hat{\rp}_{l}^\top - \rp_{l}^\top\vert  \vert \partial_z f(\rz_l, {\rtheta}_l) \vert \\
& \leq \vert \Delta_{l+1}^\rp \vert + C_3 h \vert \Delta_{l}^\rp \vert + C_4 h \vert \Delta_{l}^\rz \vert \\
& = \vert \Delta_{l+1}^\rp \vert + C_3 h \vert \Delta_{l}^\rp \vert + \mathcal{O}(h^3)\, ,
\end{aligned}$$
where we have used the triangle inequality, the boundedness of $\hat{\rp}_{l}$ and $\rp_{l}$ and the regularity of $f$ up to its second order derivative. 
Noticing that from $l+1$ and $l$-th step to that $l-1$ we gain an error of $\mathcal{O}(h^3)$. Collectively by iteration using $L = 1/h$, we conclude the proof of the lemma.
\end{proof}

Finally we define the middle agent $\hat{\sfq}$ properly:
\begin{lemma}\label{lem:averaging-on-p}
Let $p(t)$ solve~\eqref{eqn:adjoint} and denote $p(lh) = p(t=lh)$. Define a modification matrix $\tilde{\rT}\in\mathbb{R}^{dL\times dL}$ to be a blockwise diagonal matrix so that $\tilde{\rT} = \{\tilde{\rT}_{i,j}\}_{i,j=0}^{L-1}$ with $\rT_{i,j}$ being its $(i,j)$-th block of size $d\times d$, set to be
\begin{equation}\label{def:T_tilde_matrix}
\tilde{\rT} := \begin{bmatrix}
I & \frac{3}{4}I & 0 & -\frac{1}{4}I & 0 & \cdots \\
\frac{1}{2}I & \frac{1}{2}I & \frac{1}{4}I & 0 & 0 & \cdots \\
0 & \frac{1}{4}I & \frac{1}{2}I & \frac{1}{4}I & 0 & \cdots \\
                                \cdots & \cdots & \ddots & \ddots & \ddots & \cdots \\
\cdots & \cdots & 0 & \frac{1}{4}I & \frac{1}{2}I & \frac{1}{4}I \\
\cdots & \cdots & \cdots & 0 & \frac{1}{4}I & \frac{1}{2}I \\
\end{bmatrix} \ ,
\end{equation}
where $I$ is the $d \times d$ dimensional identity matrix, and define $\hat{\sfq}$ as in~\eqref{eqn:def_hat_q}.
Then
\begin{equation}
\vert p(lh) - \hat{\sfq}_l \vert = \mathcal{O}(h^2)\,,\quad 0\leq l\leq L-1\,,
\end{equation}
where $\mathcal{O}$ depends on $C_a, C_b, C_c, C_d$, and the boundedness of $\theta$ up to its second order derivatives.
\end{lemma}
\begin{remark}
We recall $\rT$ defined in~\eqref{def:T_matrix} can also be formulated into the same form as $\tilde{\rT}$ defined in~\eqref{def:T_tilde_matrix}, with the identity matrix $I$ being of size $n\times n$.
\end{remark}
\begin{proof}
Denote $\Delta_l^\sfq = p(lh) - \hat{\sfq}_l$ for all $l$.
For $\hat{\sfq}_{L-1}$, noticing that
\[
    \hat{\rp}_{L-1}^\top = 2 \revadd{\langle} (g(z(1)) - y(x)) \revadd{,} \nabla g(z(1)) \revadd{\rangle_{\mu}} = 2p^\top(1) \,,
\]
and
\[
    \hat{\rp}_{L-2}^\top = 2h \hat{\rp}_{L-1}^\top \partial_z f(z(1-h), \rtheta_{L-1}) \,,
\]
we have
\allowdisplaybreaks\begin{align*}
\vert \Delta_{L-1}^\sfq \vert & = \vert \hat{\sfq}_{L-1}^\top - p^\top(1-h) \vert \\
& = \left\vert \frac{1}{4}\hat{\rp}_{L-2}^\top + \frac{1}{2}\hat{\rp}_{L-1}^\top - p^\top(1-h)  \right\vert \\
& = \left\vert \frac{h}{2}\hat{\rp}_{L-1}^\top \partial_z f(z(1-h), {\rtheta}_{L-1}) + \frac{1}{2}\hat{\rp}_{L-1}^\top - p^\top(1-h) \right\vert \\
& = \left\vert \frac{h}{2}\hat{\rp}_{L-1}^\top \partial_z f(z(1-h), {\rtheta}_{L-1}) + \left(\frac{1}{2}\hat{\rp}_{L-1}^\top - p^\top(1)\right) - h p^\top(1) \partial_z f(z(1), {\rtheta}_L) \right\vert \\
& \ \ \ \ \ \  + \calO(h^2)\\
& = \frac{h}{2} \vert\hat{\rp}_{L-1}\vert \cdot \vert \partial_z f(z((L-1)h), {\rtheta}_{L-1}) - \partial_z f(z(1), {\rtheta}_L) \vert + \calO(h^2)\\
& =\calO(h^2)\,,
\end{align*}
where we used a Taylor expansion to obtain the fourth equality.
The $\calO$ notation contains dependence on the boundedness of $\hat{\rp}_{L-1}$, the regularity of $\rtheta$ up to its first order derivative, and the regularity of $f$ up to its second order derivative.
To evaluate $\Delta^{\sfq}_{L-2}$, we call the definition of $\hat{\sfq}$, and according to straightforward derivation:
$$\begin{aligned}
\vert \Delta_{L-2}^\sfq \vert & = \vert \hat{\sfq}_{L-2}^\top - p^\top(1-2h) \vert \\
& = \left\vert \frac{1}{4}\hat{\rp}_{L-3}^\top + \frac{1}{2}\hat{\rp}_{L-2}^\top + \frac{1}{4}\hat{\rp}_{L-1}^\top - p^\top(1-2h) \right\vert \\
& = \left\vert \frac{\hat{\rp}_{L-2}^\top}{2}\left(h \partial_z f(z(1-2h), {\rtheta}_{L-2}) + I\right) + \frac{1}{2}\hat{\rp}_{L-1}^\top - p^\top(1-2h) \right\vert \\
& = \big\vert h \hat{\rp}_{L-1}^\top \partial_z f(z(1-h), {\rtheta}_{L-1})(h \partial_z f(z(1-2h), {\rtheta}_{L-2}) + I) \\
& \ \ \ \ + \left(\frac{1}{2}\hat{\rp}_{L-1}^\top - p^\top(1)\right) - 2hp^\top(1) \partial_z f(z(1), {\rtheta}_L) \big\vert + \calO(h^2)\\
& \leq 2h \vert  p(1) \vert \cdot \vert \partial_z f(z(1-h), {\rtheta}_{L-1}) - \partial_z f(z(1), {\rtheta}_L) \vert + \calO(h^2)\\
& = \calO(h^2)\,,
\end{aligned}$$
where the forth equation uses Taylor expansion of $p$ around $t=1$.
The $\calO$ notation contains dependence on the boundedness of $p$ and $z$, the regularity of $\theta$ up to its first order derivative, and the regularity of $f$ up to its second order derivative.
To control $\Delta_l^\sfq$ for $l\leq L-2$, instead of following the definition of $\hat{\sfq}_l$, we notice that $\hat{\sfq}_l$ approximately satisfies the Leapfrog updates for~\eqref{eqn:adjoint}. In particular,
$$\begin{aligned}
& \ \ \ \ \ \ \vert \hat{\sfq}_{l-1}^\top - (\hat{\sfq}_{l+1}^\top + 2h \hat{\sfq}_{l}^\top \partial_z f(z(lh), {\rtheta}_l)) \vert \\
& = \left\vert \frac{\hat{\rp}_{l-2}^\top + 2 \hat{\rp}_{l-1}^\top + \hat{\rp}_{l}^\top}{4} - \frac{\hat{\rp}_{l}^\top + 2 \hat{\rp}_{l+1}^\top + \hat{\rp}_{l+2}^\top}{4} - h\frac{\hat{\rp}_{l-1}^\top + 2 \hat{\rp}_{l}^\top + \hat{\rp}_{l+1}^\top}{2} \partial_z f(z(lh), {\rtheta}_l)) \right\vert \\
& = \left\vert \frac{\hat{\rp}_{l-2}^\top - \hat{\rp}_{l}^\top}{4} + \frac{\hat{\rp}_{l}^\top - \hat{\rp}_{l+2}^\top}{4} + \frac{\hat{\rp}_{l-1}^\top - \hat{\rp}_{l+1}^\top}{2} - h\frac{\hat{\rp}_{l-1}^\top + 2 \hat{\rp}_{l}^\top + \hat{\rp}_{l+1}^\top}{2} \partial_z f(z(lh), {\rtheta}_l)) \right\vert \\
& = \frac{h}{2} \vert \hat{\rp}_{l-1}^\top \partial_z f(z((l-1)h),{\rtheta}_{l-1}) + \hat{\rp}_{l+1}^\top \partial_z f(z((l+1)h),{\rtheta}_{l+1}) \\
& \qquad\qquad\qquad\qquad\qquad\qquad - \hat{\rp}_{l-1}^\top \partial_z f(z(lh),{\rtheta}_{l}) - \hat{\rp}_{l+1}^\top \partial_z f(z(lh),{\rtheta}_{l}) \vert \\
& \leq \frac{h}{2} \vert \hat{\rp}_{l+1} \vert \cdot \vert \partial_z f(z((l-1)h),{\rtheta}_{l-1}) + \partial_z f(z((l+1)h),{\rtheta}_{l+1}) - 2\partial_z f(z(lh),{\rtheta}_{l}) \vert \\
& \ \ \ \ + h^2 \vert \hat{\rp}_{l}  \vert \cdot \vert \partial_z f(z(lh),{\rtheta}_{l}) \cdot (\partial_z f(z((l-1)h),{\rtheta}_{l-1}) - \partial_z f(z(lh),{\rtheta}_{l})) \vert \\
& = \calO(h^3)\,.
\end{aligned}$$
An analog of proof to Lemma~\ref{lem:Delta_z} would show that a Leapfrog method for~\eqref{eqn:adjoint} accumulates $\calO(h^3)$ per iteration, and the inequality above also shows that the updates of $\hat{\sfq}$ accumulates $\calO(h^3)$ per iteration, so together $\Delta^{\sfq}_l$ collects $\calO(h^3)$ when updated from $l$ and $l+1$ to $l$. 
Upon error collection over $L = 1/h$ layers, we have $|\Delta_l^\sfq| = \calO(h^2)$ for all $1\leq l\leq L-2$.
The $\calO$ notation contains dependence on the boundedness of $\hat{\rp}_l$ and $z$, the regularity of $\theta$ up to its second order derivative, and the regularity of $f$ up to its third order derivative.

To finally prove $|\Delta_0^\sfq| = \calO(h^2)$, we apply~\eqref{eqn:def_hat_q} again:
$$\begin{aligned}
& \ \ \ \ \ \ \vert \hat{\sfq}_{0}^\top - (\hat{\sfq}_{2}^\top + 2h \hat{\sfq}_{2}^\top \partial_z f(z(2h), {\rtheta}_2)) \vert \\
& = \left\vert \frac{4\hat{\rp}_{0}^\top + 3 \hat{\rp}_{1}^\top - \hat{\rp}_{3}^\top}{4} - \frac{\hat{\rp}_{1}^\top + 2 \hat{\rp}_{2}^\top + \hat{\rp}_{3}^\top}{4}(I + 2h \partial_z f(z(2h), {\rtheta}_2)) \right\vert \\
& = \left\vert \frac{4\hat{\rp}_{0}^\top + 3 \hat{\rp}_{1}^\top - \hat{\rp}_{3}^\top}{4} - \frac{\hat{\rp}_{1}^\top +  \hat{\rp}_{2}^\top(I - h \partial_z f(z(2h), {\rtheta}_2))}{2}(I + 2h \partial_z f(z(2h), {\rtheta}_2)) \right\vert \\
& = \bigg\vert \frac{4\hat{\rp}_{0}^\top + 3 \hat{\rp}_{1}^\top - \hat{\rp}_{3}^\top}{4} - \frac{1}{2}\hat{\rp}_1^\top (I + 2h \partial_z f(z(2h), {\rtheta}_2)) + \frac{1}{2}\hat{\rp}_2^\top (I -h \partial_z f(z(2h), {\rtheta}_2)) \\
& \qquad\qquad\qquad\qquad - \hat{\rp}_2^\top (I - h \partial_z f(z(2h), {\rtheta}_2))(I + h \partial_z f(z(2h), {\rtheta}_2)) \bigg\vert \\
& = \bigg\vert \frac{4\hat{\rp}_{0}^\top + 3 \hat{\rp}_{1}^\top - \hat{\rp}_{3}^\top}{4} - \frac{1}{2}\hat{\rp}_1^\top (I + 2h \partial_z f(z(2h), {\rtheta}_2)) \\
& \qquad\qquad\qquad\qquad + \frac{1}{2}\hat{\rp}_2^\top (I -h \partial_z f(z(2h), {\rtheta}_2)) - \hat{\rp}_2^\top \bigg\vert +\calO(h^2) \\
& = \vert \frac{4\hat{\rp}_{0}^\top + \hat{\rp}_{1}^\top - \hat{\rp}_{3}^\top}{4} - h \hat{\rp}_1^\top \partial_z f(z(2h), {\rtheta}_2) -\frac{1}{2}\hat{\rp}_2^\top - \frac{1}{2}h \hat{\rp}_2^\top  \partial_z f(z(2h), {\rtheta}_2) \vert +\calO(h^2)\\
& = \left\vert \frac{4\hat{\rp}_{0}^\top + \hat{\rp}_{1}^\top - \hat{\rp}_{3}^\top}{4} - h \hat{\rp}_1^\top \partial_z f(z(h), {\rtheta}_1) -\frac{1}{2}\hat{\rp}_2^\top - \frac{1}{2}h \hat{\rp}_2^\top  \partial_z f(z(2h), {\rtheta}_2) \right\vert +\calO(h^2)\\
& = \left\vert \frac{4\hat{\rp}_{0}^\top + \hat{\rp}_{1}^\top - \hat{\rp}_{3}^\top}{4} - (\hat{\rp}_0^\top - \frac{1}{2}\hat{\rp}_2^\top) -\frac{1}{2}\hat{\rp}_2^\top - \frac{1}{4}(\hat{\rp}_1^\top - \hat{\rp}_3^\top) \right\vert +\calO(h^2)\\
& = \calO(h^2)\,.
\end{aligned}$$
where in the forth last equation we used the boundedness of $\partial_zf$ and the second last equation we used the boundedness of $\partial_{z,\theta}f$ and $\partial_{zz}f$. 
The $\calO$ notation contains dependence on the boundedness of $\hat{\rp}_l$ and $z$, the regularity of $\theta$ up to its second order derivative, and the regularity of $f$ up to its third order derivative.
This completes the whole proof.
\end{proof}

\subsection{Three terms}\label{sec:average}
The a-priori estimates above are now used to control the three terms in~\eqref{eqn:LMMmain-proof-main-estimate}.
\begin{corollary}[Control of Term I in~\eqref{eqn:LMMmain-proof-main-estimate}]\label{cor:term1}
    Let $\nabla_{\Theta}^\rp E$ defined in~\eqref{eqn:def-grad-p-q-leapfrog}, and $\nabla_{\Theta} E$ defined in~\eqref{eqn:gradient}, then $\vert (L \nabla_{\Theta} E)_l -  (L \nabla_{\Theta}^\rp E)_l \vert=\mathcal{O}(h^2)$.
\end{corollary}
\begin{proof}
Using Lemma~\ref{lem:Delta_z} and Lemma~\ref{lem:Delta_p}, we obtain
$$\begin{aligned}
& \ \ \ \ \ \ \vert (L \nabla_{\Theta} E)_l -  (L \nabla_{\Theta}^\rp E)_l \vert \\
& = \vert \rp_l^\top \partial_\theta f(\rz_l, \theta_l) - \hat{\rp}_l^\top \partial_\theta f(z(lh), \rtheta_l) \vert \\ 
& \leq \vert \rp_l \vert \cdot \vert \partial_\theta f(\rz_l, \rtheta_l) - \partial_\theta f(z(lh), \rtheta_l) 
 \vert + \vert \rp_l - \hat{\rp}_l \vert \cdot \vert \partial_\theta f(z(lh), \rtheta_l) \vert \\
 & = \mathcal{O}(h^2)\,.
\end{aligned}$$
Recall the definition of $\rT$ in~\eqref{def:T_matrix}, it is easy to show $\Vert \rT \Vert_\infty = 2$, and this implies \begin{equation}\label{eqn:autodiff-to-p}
\text{Term I} \leq \Vert \rT \Vert_\infty \cdot \vert (L \nabla_{\Theta} E) - (L \nabla_{\Theta}^\rp E) \vert = \mathcal{O}(h^2) \ ,
\end{equation}
where we used the boundedness of $\hat{\rp}_l$ and $\rp_l$ and the regularity of $f$ up to its second order derivative.
\end{proof}

\begin{corollary}[Control of Term III in~\eqref{eqn:LMMmain-proof-main-estimate}]\label{cor:term3}
Let $\nabla_\Theta^\sfq E$ be defined in~\eqref{eqn:def-grad-p-q-leapfrog}, and $\left.\frac{\delta \calE}{\delta {\theta}}\right|_{\theta(t)}(t)$ defined in~\eqref{eqn:gradient_functional}, then:
$\left|\nabla_\Theta^\sfq E-\left.\frac{\delta \calE}{\delta {\theta}}\right|_{\theta(t)}(t)\right|=\mathcal{O}(h^2)$.
\end{corollary}
\begin{proof}
According to the definition
\begin{equation}\label{eqn:q-to-continuous}
\begin{aligned}
 \left|\left(\nabla_\Theta^\sfq E\right)_l-\left.\frac{\delta \calE}{\delta {\theta}}\right|_{\theta(t)}(lh)\right|&= \vert  \hat{\sfq}_l^\top \partial_\theta f(z(lh), \rtheta_l) - p^\top(lh) \partial_\theta f(z(lh), \rtheta_l) \vert \\
& \leq \vert  \hat{\sfq}_l - p(lh) \vert \cdot \vert \partial_\theta f(z(lh), \rtheta_l) \vert \\
& = \mathcal{O}(h^2) \ ,
\end{aligned}
\end{equation}
where we used the boundedness of $z$, the regularity of the first order derivative of $f$ and Lemma~\ref{lem:averaging-on-p}.
\end{proof}

Finally we control the second term.
\begin{lemma}[Control of Term II in~\eqref{eqn:LMMmain-proof-main-estimate}]\label{lem:averaging-on-grad}
Let $\nabla_\Theta^\rp E$ and $\nabla_\Theta^\sfq E$ defined in~\eqref{eqn:def-grad-p-q-leapfrog}, then:
\begin{equation}
\vert L \rT \nabla_{\Theta}^\rp E - L\nabla_{\Theta}^\sfq E \vert = \calO(h^2) \ ,
\end{equation}
where $\mathcal{O}$ depends on $C_a, C_b, C_c, C_d$, boundedness of $\theta$ and its derivative up to the second order.
\end{lemma}

\begin{proof}
For $l \geq 2$, we have, calling the definition of $\rT$ in~\eqref{def:T_matrix}:
$$\begin{aligned}
& \ \ \ \ \ \ \vert (L\rT \nabla_{\Theta}^\rp E)_l - (L \nabla_{\Theta}^\sfq E)_l \vert \\
& = \frac{1}{4} \vert (\hat{\rp}_{l-1}^\top\partial_\theta f(z((l-1)h), {\rtheta}_{l-1}) + 2\hat{\rp}_{l}^\top\partial_\theta f(z(lh), {\rtheta}_{l}) + \hat{\rp}_{l+1}^\top\partial_\theta f(z((l+1)h), {\rtheta}_{l+1})) \\
& \ \ \ \ - (\hat{\rp}_{l-1}^\top + 2\hat{\rp}_{l}^\top + \hat{\rp}_{l+1}^\top) \partial_\theta f(z(lh), {\rtheta}_{l}) \vert \\
& = \frac{1}{4} \vert \hat{\rp}_{l-1}^\top (\partial_\theta f(z((l-1)h), {\rtheta}_{l-1}) - \partial_\theta f(z(lh), {\rtheta}_{l})) \\
& \ \ \ \ + \hat{\rp}_{l+1}^\top (\partial_\theta f(z((l+1)h), {\rtheta}_{l+1}) - \partial_\theta f(z(lh), {\rtheta}_{l})) \vert \\
& = \frac{1}{4} \vert (\hat{\rp}_{l+1}^\top + 2h \hat{\rp}_l^\top \partial_z f(z(lh), {\rtheta}_{l})) (\partial_\theta f(z((l-1)h), {\rtheta}_{l-1}) - \partial_\theta f(z(lh), {\rtheta}_{l})) \\
& \ \ \ \ + \hat{\rp}_{l+1}^\top (\partial_\theta f(z((l+1)h), {\rtheta}_{l+1}) - \partial_\theta f(z(lh), {\rtheta}_{l})) \vert \\
& \leq \frac{1}{4} \vert \hat{\rp}_{l+1}^\top \vert \cdot \vert  \partial_\theta f(z((l+1)h), {\rtheta}_{l+1}) + \partial_\theta f(z((l-1)h), {\rtheta}_{l-1}) - 2 \partial_\theta f(z(lh), {\rtheta}_{l})\vert \\
& \ \ \ \ + \frac{h}{2} \vert \hat{\rp}_l^\top \vert \cdot \vert \partial_\theta f(z((l-1)h), {\rtheta}_{l-1}) - \partial_\theta f(z(lh), {\rtheta}_{l}) \vert \\
& = \calO(h^2)\,,
\end{aligned}$$
where we use the boundedness of $\hat{\rp}_l$ and the regularity of  $f$ up to its third order derivative, and the boundedness of $\theta$ up to its second order derivative.
Similar derivation is obtained for $l = 1$. For $l = 0$:
$$\begin{aligned}
& \ \ \ \ \ \ \vert (L\rT \nabla_{\Theta}^\rp E)_0 - (L \nabla_{\Theta}^\sfq E)_0 \vert \\
& = \frac{1}{4} \vert (4\hat{\rp}_{0}^\top\partial_\theta f(z(0), {\rtheta}_{0}) + 3\hat{\rp}_{1}^\top\partial_\theta f(z(h), {\rtheta}_{1}) - \hat{\rp}_{3}^\top\partial_\theta f(z(3h), {\rtheta}_{3})) \\
& \ \ \ \ - (4\hat{\rp}_{0}^\top + 3\hat{\rp}_{1}^\top - \hat{\rp}_{3}^\top) \partial_\theta f(z(0), {\rtheta}_{0}) \vert \\
& = \frac{1}{4} \vert  3\hat{\rp}_{1}^\top (\partial_\theta f(z(h), {\rtheta}_{1}) - \partial_\theta f(z(0), {\rtheta}_{0})) - \hat{\rp}_{3}^\top (\partial_\theta f(z(3h), {\rtheta}_{3}) - \partial_\theta f(z(0), {\rtheta}_{0})) \\
& = \frac{1}{4} \vert  3 (\hat{\rp}_{3}^\top + 2h\hat{\rp}_{2}^\top \partial_\theta f(z(2h), {\rtheta}_{2})) (\partial_\theta f(z(h), {\rtheta}_{1}) - \partial_\theta f(z(0), {\rtheta}_{0})) \\
& \ \ \ \ - \hat{\rp}_{3}^\top (\partial_\theta f(z(3h), {\rtheta}_{3}) - \partial_\theta f(z(0), {\rtheta}_{0})) \\
& \leq \frac{1}{4} \vert \hat{\rp}_3^\top \vert \cdot \vert -2 \partial_\theta f(z(0), {\rtheta}_{0}) + 3 \partial_\theta f(z(h), {\rtheta}_{1}) - \partial_\theta f(z(3h), {\rtheta}_{3})  \vert \\
& \ \ \ \ + \frac{h}{2} \vert \hat{\rp}_2^\top \vert \cdot \vert \partial_\theta f(z(h), \rtheta_1) - \partial_\theta f(z(0), \rtheta_0)\vert \\
& = \mathcal{O}(h^2).
\end{aligned}$$
This concludes the lemma.
\end{proof}

\section{Proof of Proposition~\ref{prop:autodiff-p-2-stage-ERK} and Theorem~\ref{theorem:ERK-correction}}\label{sec:appendix_2-stage-ERK}
This section is dedicated to 2-stage ERK ODE-nets. We sketch the proof of Proposition~\ref{prop:autodiff-p-2-stage-ERK} and Theorem~\ref{theorem:ERK-correction} in Section~\ref{appendix_proof-autodiff-p-2-stage-ERK}. The proof for Theorem~\ref{theorem:ERK-correction} calls for some a-priori estimates and these technical controls are provided in Section~\ref{appendix_proof_apriori-2-stage-ERK}.

\subsection{Main proof of Proposition~\ref{prop:autodiff-p-2-stage-ERK} and Theorem~\ref{theorem:ERK-correction}}\label{appendix_proof-autodiff-p-2-stage-ERK}

\begin{proof}[Proof of Proposition~\ref{prop:autodiff-p-2-stage-ERK}]
We first note that auto-differentiation is obtained using chain rule~\cite{Linnainmaa1976, Rumelhart1986}, and thus:
\begin{equation}\label{eqn:E_chain_ERK2}
\partial_{\rz_l} E = (\partial_{\rz_{l+1}} E) (\partial_{\rz_l} \rz_{l+1})\,.
\end{equation}
To proceed, we use induction. Firstly, setting $l = L$, then by \eqref{eqn:def.E} and \eqref{eqn:p-adjoint_2-stage-ERK}, $$\partial_{\rz_L} E = \revadd{\langle} (g(\rz_L) - y) \revadd{,} \nabla g(\rz_L) \revadd{\rangle_{\mu}} = \rp_L^\top \ .$$

Now assume that we have $\rp_{l+1}^\top = \partial_{\rz_{l+1}} E$ for some $l$, we are to show this holds for $\rp_{l}^\top = \partial_{\rz_{l}} E$. Applying chain rule  on the update formula $\rz_{l+\alpha} = \rz_l + \alpha h f(\rz_l, \rtheta_l)$, therefore the Jacobian $\partial_{\rz_l} \rz_{l,\alpha}$ is \begin{equation}\label{eqn:half-stage-grad-2-stage-ERK}
\partial_{\rz_l} \rz_{l+\alpha} = I + \alpha h \partial_z f(\rz_l, \rtheta_l) \,.
\end{equation}
Noticing the update formula for integer time steps is $\rz_{l+1} = \rz_l + (1 - \frac{1}{2\alpha})h f(\rz_l, \rtheta_l) + \frac{1}{2\alpha} h f(\rz_{l+\alpha}, \rtheta_{l+\alpha})$, we obtain \begin{equation*}\label{eqn:integer-stage-grad-2-stage-ERK}
\begin{aligned}
&\partial_{\rz_l} \rz_{l+1} \\
 = &I + (1 - \frac{1}{2\alpha}) h \partial_z f(\rz_l, \rtheta_l) + \frac{1}{2\alpha} h \partial_z f(\rz_{l+\alpha}, \rtheta_{l+\alpha}) \partial_{\rz_l} \rz_{l+\alpha} \\
 = &I + (1 - \frac{1}{2\alpha}) h \partial_z f(\rz_l, \rtheta_l) + \frac{1}{2\alpha} h \partial_z f(\rz_{l+\alpha}, \rtheta_{l+\alpha}) + \frac{h^2}{2}  \partial_z f(\rz_{l+\alpha}, \rtheta_{l+\alpha}) \partial_z f(\rz_l, \rtheta_l) \, , 
\end{aligned}
\end{equation*}
where in the first equality we applied chain rule, and in the second equality we plugged in \eqref{eqn:half-stage-grad-2-stage-ERK}. Plugging this in~\eqref{eqn:E_chain_ERK2}:
\[
\begin{aligned}
\partial_{\rz_l} E &= \rp_{l+1}^\top (\partial_{\rz_l} \rz_{l+1}) \\
& = \rp_{l+1}^\top + h \rp_{l+1}^\top \left(\left(1 - \frac{1}{2\alpha}\right) \partial_z f(\rz_l, \rtheta_l) + \frac{1}{2\alpha}\partial_z f(\rz_{l+\alpha}, \rtheta_{l+\alpha}) \right) \\
& \ \ \ \ \ \ + \frac{h^2}{2}\rp_{l+1}^\top \partial_z f(\rz_{l+\alpha}, \rtheta_{l+\alpha})\partial_z f(\rz_l, \rtheta_l) \\
& = \rp_{l}^\top \, ,
\end{aligned}
\]
where we used inductive hypothesis in the first line, and \eqref{eqn:p-adjoint_2-stage-ERK} in the last equality. The proof completes by induction.
\end{proof}

We proceed with the proof of Theorem~\ref{theorem:ERK-correction}.

\begin{proof}[Proof of Theorem~\ref{theorem:ERK-correction}]
According to definitions~\eqref{eqn:2-stage-ERK-mod} and~\eqref{eqn:gradient_functional}, at integer points:
$$\begin{aligned}
& \ \ \ \ |\overline{\partial}_{\rtheta_{l}} E 
- \frac{\delta \calE}{\delta \theta}\bigg\vert_{\theta(t)}(lh) | \\
& = |\rp_{l}^\top \partial_\theta f(\rz_{l}, \rtheta_{l}) - p^\top(lh)\partial_\theta f(z(lh), \rtheta_{l})| \\
& \leq |\rp_{l} - p(lh)| \cdot |\partial_\theta f(\rz_{l}, \rtheta_{l})| \\
& \ \ \ \ \ \ + |p(lh)| \cdot |\partial_\theta f(\rz_{l}, \rtheta_{l}) - \partial_\theta f(z(lh), \rtheta_{l})|\,,
\end{aligned}$$
and at middle-stages:
$$\begin{aligned}
& \ \ \ \ |\overline{\partial}_{\rtheta_{l+\alpha}} E 
- \frac{\delta \calE}{\delta \theta}\bigg\vert_{\theta(t)}((l+\alpha)h) | \\
& = |\sfq_{l+\alpha}^\top \partial_\theta f(\rz_{l+\alpha}, \rtheta_{l+\alpha}) - p^\top((l+\alpha)h)\partial_\theta f(z((l+\alpha)h), \rtheta_{l+\alpha})| \\
& \leq |\sfq_{l+\alpha} - p((l+\alpha)h)| \cdot |\partial_\theta f(\rz_{l+\alpha}, \rtheta_{l+\alpha})| \\
& \ \ \ \ \ \ + |p((l+\alpha)h)| \cdot |\partial_\theta f(\rz_{l+\alpha}, \rtheta_{l+\alpha}) - \partial_\theta f(z((l+\alpha)h), \rtheta_{l+\alpha})|\,,
\end{aligned}$$
where in both we used triangular inequality. If we can provide upper bound of $p$, $z$, $\rz$ and control the difference terms using $\mathcal{O}(h^2)$, the proof is complete. The boundedness of $p$, $z$, $\rz$ are collected in Lemma~\ref{lem:2-stage-ERK-bound}, and the difference terms are controlled in Corollary~\ref{cor:2-stage-ERK-f-theta-error} and~\ref{cor:middle_stage_q} respectively.
\end{proof}

\subsection{A-priori estimates for 2-stage ERK}\label{appendix_proof_apriori-2-stage-ERK}

We record two very useful a-priori results for 2-stage ERK ODE-nets here. 
The first result on boundedness resembles Lemma~\ref{lem:leapfrog-bound}. It is once again an analog to \revdel{~\cite{DiChLiWr:2022overparameterization}}\revadd{~\cite[Lemmas 14 and 24]{DiChLiWr:2022overparameterization}} and its proof is a straightforward extension that has been omitted here.

\begin{lemma}\label{lem:2-stage-ERK-bound}
Fix some $\alpha \in (0,1)$.
Let $\rp_l$ solve~\eqref{eqn:p-adjoint_2-stage-ERK} and let $\hat{\rp}_l$ solve~\eqref{eqn:p-adjoint_2-stage-ERK} with $\rz_l$ and $\rz_{l+\alpha}$ replaced by $z(lh)$ and $z((l+\alpha)h)$ respectively.
Then $z(t)$, $p(t)$, $\rz_l$, $\rz_{l+\alpha}$, $\rp_l$, and $\hat{\rp}_l$ are all bounded quantities, and the upper bound is independent of $L$. The bound is determined by $C_a, C_b, C_c, C_d$, and the upper bound of $\theta$.
\end{lemma}

The next lemma is a standard numerical ODE result.

\begin{lemma}\label{lem:2-stage-ERK-Delta_z}
Fix some $\alpha \in (0,1)$.
Let $z(t)$ solve~\eqref{eqn:ODE} with $z(lh)$ denoting $z(t=lh)$, and let $\rz_l$, $\rz_{l+\alpha}$ be defined as in the 2-stage ERK case in Proposition~\ref{prop:grad_dis}, then for $l = 0,1,2,\dots,L$:
\begin{equation}
\vert  z(lh) - \rz_l \vert = \mathcal{O}(h^2)\,,
\end{equation} and for $l = 0,1,\dots, L-1$: \begin{equation}
\vert  z((l+\alpha)h) - \rz_{l+\alpha} \vert = \mathcal{O}(h^2)\,.
\end{equation}
Here the $\mathcal{O}$ notation contains dependence of $C_a, C_b, C_d$, the regularity of $\theta$ up to its second order derivative, and boundedness of $z(t)$.
\end{lemma}

\begin{proof}
Denote $\Delta_l^\rz = z(l h) - {\rz}_l$ and $\Delta_{l+\alpha}^\rz = z((l+\alpha) h) - {\rz}_{l+\alpha}$ for all $l$, then recall that $z(0) = x = {\rz}_0$, $\Delta_0^\rz = 0$. 
For any $l = 0,1,\dots,L-1$, by Taylor expansion and triangle inequality,
$$\begin{aligned}
|\Delta_{l+\alpha}^\rz| & = |z((l+\alpha) h) - {\rz}_{l+\alpha}| \\
& = |z(lh) + \alpha h f(z(lh), \rtheta_l) + \mathcal{O}(h^2) - \rz_l - \alpha h f(\rz_l, \rtheta_l)| \\
& \leq |z(lh) - \rz_l| + \alpha h|f(z(lh), \rtheta_l) - f(\rz_l, \rtheta_l)|  + \mathcal{O}(h^2) \\
& \leq |z(lh) - \rz_l| + \alpha C h|z(lh) - \rz_l|  + \mathcal{O}(h^2) \\
& = (1 + \alpha C h)|\Delta_{l}^\rz| + \mathcal{O}(h^2) \,,
\end{aligned}$$
$$\begin{aligned}
\vert \Delta_{l+1}^\rz \vert & = \vert z((l+1)h) - {\rz}_{l+1} \vert \\
& = \vert z(lh) + h\left(1 - \frac{1}{2\alpha}\right) f(z(lh), \rtheta_{l})  + \frac{h}{2\alpha} f(z((l+\alpha)h), \rtheta_{l+\alpha}) + \mathcal{O}(h^3)) \\
& \ \ \ \ \ \ - \rz_{l} - h\left(1 - \frac{1}{2\alpha}\right) f(\rz_l, \rtheta_{l})  - \frac{h}{2\alpha} f(\rz_{l+\alpha}, \rtheta_{l+\alpha}) \vert \\
& \leq \vert z(lh) - {\rz}_{l} \vert  + h|1 - \frac{1}{2\alpha}| \cdot \vert f(z(lh), \rtheta_l) - f({\rz}_l, \rtheta_l)) \vert \\
& \ \ \ \ \ \ + \frac{h}{2\alpha} \vert  f(z(l+\alpha)h, \rtheta_{l+\alpha}) - f(\rz_{l+\alpha}, \rtheta_{l+\alpha}) \vert + \mathcal{O}(h^3) \\
& \leq \vert \Delta_{l}^\rz \vert + C h \vert \Delta_{l}^\rz \vert + Ch \vert \Delta_{l+\alpha}^\rz \vert + \mathcal{O}(h^3) \\
& \leq (1 + (C+1)h + \alpha C^2 h^2) \vert \Delta_{l}^\rz \vert + \mathcal{O}(h^3) \,.
\end{aligned}$$
where the $\mathcal{O}$ notation include the regularity of the $f$ and $\theta$ up to their second order derivative, and $C$ depends on the first order derivative of $f$ and the constant $\alpha$. By induction, in each iteration, $\mathcal{O}(h^3)$ error is added to $\Delta^\rz_{l+1}$, and collectively in $L = 1/h$ steps, $|\Delta_l^\rz| = \calO(h^2)$ for $l = 0, 1, 2, \dots, L$.
Plug this back into the estimation for $|\Delta_{l+\alpha}^\rz|$, and we can conclude that $|\Delta_{l+\alpha}^\rz| = \mathcal{O}(h^2)$ for $l = 0,1,\dots, L-1$.
\end{proof}

The lemma above immediately leads to the following corollary:

\begin{corollary}\label{cor:2-stage-ERK-f-theta-error}
Fix some $\alpha \in (0,1)$.
Let $z(t)$ solve~\eqref{eqn:ODE} with $z(lh)$ denoting $z(t=lh)$, and let $\rz_l$, $\rz_{l+\alpha}$ be defined as in the 2-stage ERK case in Proposition~\ref{prop:grad_dis}, then for $l = 0,1,2,\dots,L-1$:
\begin{equation}
\vert  \partial_\theta f(\rz_l, \theta_l) - \partial_\theta f(z(lh), \theta_l) \vert = \mathcal{O}(h^2)\,,
\end{equation} and \begin{equation}
\vert  \partial_\theta f(\rz_{l+\alpha}, \theta_{l+\alpha}) - \partial_\theta f(z((l+\alpha)h), \theta_{l+\alpha}) \vert = \mathcal{O}(h^2)\,.
\end{equation}
\end{corollary}

\begin{proof}
Both estimate follows from Lemma~\ref{lem:2-stage-ERK-Delta_z} and third order continuous differentiability and bounded derivatives assumptions on $f$.
\end{proof}

The error of $\rp_l$ is characterized by Proposition~\ref{prop:2-stage-ERK-p-error}, whose proof is given below.

\begin{proof}[Proof of Proposition~\ref{prop:2-stage-ERK-p-error}]
Let $\hat{\rp}_l$ solve~\eqref{eqn:p-adjoint_2-stage-ERK} with $\rz_l$ and $\rz_{l+\alpha}$ replaced by $z(lh)$ and $z((l+\alpha)h)$ respectively.
For any $l = 0,1,\dots, L$, $$|\rp_l - p(lh)| \leq |\hat{\rp}_l - p(lh)| + |\rp_l - \hat{\rp}_l|  \leq \mathcal{O}(h^2) \, ,$$ where we applied triangular inequality. The two terms are controlled in Lemma~\ref{lem:2-stage-ERK-delta-p-init} and Lemma~\ref{lem:2-stage-ERK-delta-p-hat}.
\end{proof}

The proof of Proposition~\ref{prop:2-stage-ERK-p-error} relies on the following lemmas.

\begin{lemma}\label{lem:2-stage-ERK-delta-p-init}
Let $\rp_l$ solve~\eqref{eqn:p-adjoint_2-stage-ERK} and let $\hat{\rp}_l$ solve~\eqref{eqn:p-adjoint_2-stage-ERK} with $\rz_l$ and $\rz_{l+\alpha}$ replaced by $z(lh)$ and $z((l+\alpha)h)$ respectively, then for all $l \in \{0,1,\dots, L\}$:
\begin{equation}
\vert \hat{\rp}_l - {\rp}_l \vert = \mathcal{O}(h^2)\,,
\end{equation}
where the $\mathcal{O}$ notation includes $C_a, C_b, C_c, C_d$, the boundedness of $\theta$ and its regularity up to its second order derivative, and boundedness of $z(t)$.
\end{lemma}

This result is expected due to the stability of the updating formula of $\hat{\rp}_l$. We now proceed with rigorous proof.
\begin{proof}
Denote $\Delta_l^\rp = \hat{\rp}_l - \rp_l$ for all $l$, then at final time, we have:
$$\begin{aligned}
\vert \Delta_{L}^\rp \vert & = \vert \hat{\rp}_{L}^\top - {\rp}_{L}^\top \vert \\
& = \vert \revadd{\langle} (g(z(1)) - y(x)) \revadd{,} \nabla g(z(1)) \revadd{\rangle_{\mu}} - \revadd{\langle} (g({\rz}_L) - y(x)) \revadd{,} \nabla g({\rz}_L) \revadd{\rangle_{\mu}} \vert \\
& = \mathcal{O}(h^2)\,,
\end{aligned}$$
where we used Lemma~\ref{lem:2-stage-ERK-Delta_z}, the Lipschitz continuity assumption on $g$ and $\nabla g$, and the boundedness of $\rz_L$.
For $l=0,1,\dots,L-1$ we call~\eqref{eqn:p-adjoint_2-stage-ERK} to get
$$\begin{aligned}
\vert \Delta_{l}^{\rp} \vert & = \vert  \hat{\rp}_{l}^\top - \rp_{l}^\top  \vert \\
& = \vert  \hat{\rp}_{l+1}^\top + h \hat{\rp}_{l+1}^\top \left(\left(1 - \frac{1}{2\alpha}\right) \partial_z f(z(lh), \rtheta_l) + \frac{1}{2\alpha}\partial_z f(z((l+\alpha)h), \rtheta_{l+\alpha}) \right) \\
& \ \ + \frac{h^2}{2}\hat{\rp}_{l+1}^\top \partial_z f((z(l+\alpha)h), \rtheta_{l+\alpha})\partial_z f(z(lh), \rtheta_l) - \rp_{l+1}^\top \\
& \ \ - h \rp_{l+1}^\top \left(\left(1 - \frac{1}{2\alpha}\right) \partial_z f(\rz_l, \rtheta_l) + \frac{1}{2\alpha}\partial_z f(\rz_{l+\alpha}, \rtheta_{l+\alpha}) \right) \\
& \ \ - \frac{h^2}{2}\rp_{l+1}^\top \partial_z f(\rz_{l+\alpha}, \rtheta_{l+\alpha})\partial_z f(\rz_l, \rtheta_l) \vert \\
& \leq \vert \hat{\rp}_{l+1} - \rp_{l+1} \vert + h \vert 1 - \frac{1}{2\alpha}\vert  \cdot \vert \hat{\rp}_{l+1} \vert \cdot \vert \partial_z f(z(lh), {\rtheta}_l) - \partial_z f(\rz_l, {\rtheta}_l) \vert \\
& \ \ + h \vert 1 - \frac{1}{2\alpha}\vert  \cdot \vert \hat{\rp}_{l+1} - \rp_{l+1} \vert \cdot \vert \partial_z f(z(lh), {\rtheta}_l) \vert \\
& \ \ + \frac{h}{2\alpha} \vert \hat{\rp}_{l+1} \vert \cdot \vert \partial_z f(z((l+\alpha)h), {\rtheta}_{l+\alpha}) - \partial_z f(\rz_{l+\alpha}, {\rtheta}_{l+\alpha}) \vert \\
& \ \ + \frac{h}{2\alpha} \vert \hat{\rp}_{l+1} - \rp_{l+1} \vert \cdot \vert \partial_z f(z((l+\alpha)h), {\rtheta}_{l+\alpha}) \vert \\
& \ \ + \frac{h^2}{2} \vert \hat{\rp}_{l+1} \vert \cdot \vert \partial_z f(z((l+\alpha)h), {\rtheta}_{l+\alpha}) \vert \cdot \vert \partial_z f(z(lh), {\rtheta}_{l}) - \partial_z f(\rz_{l}, {\rtheta}_{l}) \vert \\
& \ \ + \frac{h^2}{2} \vert \hat{\rp}_{l+1} \vert \cdot \vert \partial_z f(z((l+\alpha)h), {\rtheta}_{l+\alpha}) - \partial_z f(\rz_{l+\alpha}, {\rtheta}_{l+\alpha}) \vert \cdot \vert \partial_z f(\rz_{l}, {\rtheta}_{l}) \vert \\
& \ \ + \frac{h^2}{2} \vert \hat{\rp}_{l+1} - \rp_{l+1} \vert \cdot \vert \partial_z f(\rz_{l+\alpha}, {\rtheta}_{l+\alpha}) \vert \cdot \vert \partial_z f(\rz_{l}, {\rtheta}_{l}) \vert  \\
& \leq  \vert \Delta_{l+1}^\rp \vert + C_1 h \vert \Delta_{l}^\rz \vert + C_2 h \vert \Delta_{l+1}^\rp \vert + C_3 h \vert \Delta_{l+\alpha}^\rz \vert + C_4 h \vert \Delta_{l+1}^\rp \vert + C_5 h^2 \vert \Delta_{l}^\rz \vert \\
& \ \ + C_6 h^2 \vert \Delta_{l+\alpha}^\rz \vert + C_7 h^2 \vert \Delta_{l+1}^\rp \vert \\
& \leq (1 + C_8 h)\vert \Delta_{l+1}^\rp \vert + C_9 h \vert \Delta_{l}^\rz \vert + C_{10} h \vert \Delta_{l+\alpha}^\rz \vert \\
& \leq (1 + C_8 h)\vert \Delta_{l+1}^\rp \vert + C_{11} h^3 \, ,
\end{aligned}$$
where we have used the triangle inequality, the boundedness of $\hat{\rp}_{l}$ and $\rp_{l}$ and the regularity of $f$ up to its second order derivative. 
Noticing that from $l+1$ and $l$-th step to that $l-1$ we gain an error of $\mathcal{O}(h^3)$. Collectively by iteration using $L = 1/h$, we conclude the proof of the lemma.
\end{proof}

\begin{lemma}\label{lem:2-stage-ERK-delta-p-hat}
Let $p(t)$ solve~\eqref{eqn:adjoint}, and let $\hat{\rp}_l$ solve~\eqref{eqn:p-adjoint_2-stage-ERK} with $\rz_l$ and $\rz_{l+\alpha}$ replaced by $z(lh)$ and $z((l+\alpha)h)$ respectively, then for all $l \in \{0,1,\dots, L\}$:
\begin{equation}
\vert \hat{\rp}_l - p(lh) \vert = \mathcal{O}(h^2)\,,
\end{equation}
where the $\mathcal{O}$ notation includes $C_a, C_b, C_c, C_d$, the boundedness of $\theta$ and its regularity up to its second order derivative, and boundedness of $z(t)$ and $p(t)$.
\end{lemma}

\begin{proof}

First by~\eqref{eqn:adjoint} and product rule, \begin{equation}\label{eqn:adjoint-second-diff}
\begin{aligned}
\frac{d^2 p^\top}{dt^2} & = -\frac{d}{dt}(p^\top \partial_z f(z(t), \theta(t))) \\
& = -(\frac{d}{dt}p^\top)\partial_z f(z(t), \theta(t)) - p^\top \frac{d}{dt} (\partial_z f(z(t), \theta(t))) \\
& = p^\top \partial_z f(z(t), \theta(t)) \partial_z f(z(t), \theta(t)) - p^\top \frac{d}{dt} (\partial_z f(z(t), \theta(t))) \, .
\end{aligned}
\end{equation}

So by triangular inequality and Taylor expansion, for any $l = 0,1,\dots, L-1$, \begin{equation}\label{eqn:2-stage-ERK-Taylorapprox}
\begin{aligned}
& \ \ \ \ |p^\top ((l+1)h) + \frac{h}{2} p^\top ((l+1)h) \partial_z f(z((l+1)h), \rtheta_{l+1}) \\
& \ \ \ \ \ \ + \frac{h}{2}p^\top ((l+1)h) \partial_z f(z(lh), \rtheta_{l}) \\
& \ \ \ \ \ \ + \frac{h^2}{2}p^\top ((l+1)h) \partial_z f(z((l+1)h), \rtheta_{l+1})\partial_z f(z((l+1)h), \rtheta_{l+1}) - p^\top(lh)| \\
& = |p^\top ((l+1)h) + h p^\top ((l+1)h) \partial_z f(z((l+1)h), \rtheta_{l+1}) \\
& \ \ \ \ \ \ + \frac{h^2}{2}p^\top ((l+1)h) \partial_z f(z((l+1)h), \rtheta_{l+1})\partial_z f(z((l+1)h), \rtheta_{l+1}) \\
& \ \ \ \ \ \ - \frac{h^2}{2} p^\top ((l+1)h) \frac{\partial_z f(z((l+1)h), \rtheta_{l+1}) - \partial_z f(z(lh), \rtheta_{l})}{h} - p^\top(lh)| \\
& = |p^\top ((l+1)h) - h \frac{d}{dt}p^\top((l+1)h) + \frac{h^2}{2} (\frac{d^2}{dt^2}p^\top((l+1)h) + \mathcal{O}(h)) - p^\top(lh)| \\
& \leq |p^\top ((l+1)h) - h \frac{d}{dt}p^\top((l+1)h) + \frac{h^2}{2} \frac{d^2}{dt^2}p^\top((l+1)h) - p^\top(lh)| + \mathcal{O}(h^3) \\
& = \mathcal{O}(h^3) \, ,
\end{aligned}
\end{equation}
where we used a linear approximation to $\frac{d}{dt}(\partial_z f(z(t), \theta(t)))$ in the second equality, triangular inequality in the first inequality, and second order Taylor approximation,~\eqref{eqn:adjoint} and~\eqref{eqn:adjoint-second-diff} in the last line.

Denote $\Delta_l^{\hat{\rp}} = p(lh) - \hat{\rp}_l$.
Then by our final time condition, $\Delta_L^{\hat{\rp}} = p^\top(1) - \hat{\rp}_L^\top = 0$. For any $l = 0,1,\dots, L-1$, using triangular inequality,
\begin{equation}\label{eqn:2-stage-ERK-delta-p-interim}
\begin{aligned}
&| \Delta_l^{\hat{\rp}} | \\
= & |p^\top (lh) - \hat{\rp}_l^\top | \\
\leq & \mathcal{O}(h^3) + \big|p^\top ((l+1)h) + \frac{h}{2} p^\top ((l+1)h) \partial_z f(z((l+1)h), \rtheta_{l+1}) \\
& \ \ \ \  + \frac{h}{2}p^\top ((l+1)h) \partial_z f(z(lh), \rtheta_{l}) \\
& \ \ \ \  + \frac{h^2}{2}p^\top ((l+1)h) \partial_z f(z((l+1)h), \rtheta_{l+1})\partial_z f(z((l+1)h), \rtheta_{l+1}) - \hat{\rp}_l^\top| \\
= & \mathcal{O}(h^3) + |\left(p^\top ((l+1)h) - \hat{\rp}_{l+1}^\top\right) \\
& \ \ \ \  + \frac{h}{2} \left(p^\top ((l+1)h) - \hat{\rp}_{l+1}^\top \right) \partial_z f(z((l+1)h), \rtheta_{l+1}) \\
& \ \ \ \  + \frac{h}{2}\left(p^\top ((l+1)h) - \hat{\rp}_{l+1}^\top\right) \partial_z f(z(lh), \rtheta_{l}) \\
& \ \ \ \  + \frac{h^2}{2} \left( p^\top ((l+1)h) - \hat{\rp}_{l+1}^\top \right)\partial_z f(z((l+1)h), \rtheta_{l+1})\partial_z f(z((l+1)h), \rtheta_{l+1}) \\
& \ \ \ \  + \hat{\rp}_{l+1}^\top + \frac{h}{2} \hat{\rp}_{l+1}^\top \partial_z f(z((l+1)h), \rtheta_{l+1}) + \frac{h}{2}\hat{\rp}_{l+1}^\top \partial_z f(z(lh), \rtheta_{l}) \\
& \ \ \ \  + \frac{h^2}{2}\hat{\rp}_{l+1}^\top \partial_z f(z((l+1)h), \rtheta_{l+1})\partial_z f(z((l+1)h), \rtheta_{l+1}) - \hat{\rp}_l^\top\big| \\
\leq & \mathcal{O}(h^3) + |\Delta_{l+1}^{\hat{\rp}}| \\
& \ \ \ \ + \frac{h}{2} |\Delta_{l+1}^{\hat{\rp}} | \left(|\partial_z f(z((l+1)h), \rtheta_{l+1})| + |\partial_z f(z(lh), \rtheta_{l})|\right) \\
& \ \ \ \ + \frac{h^2}{2} |\Delta_{l+1}^{\hat{\rp}} | \cdot \big|\partial_z f(z((l+1)h), \rtheta_{l+1})\big| \cdot \big| \partial_z f(z(lh), \rtheta_{l})\big| \\
& \ \ \ \  + \big|\hat{\rp}_{l+1}^\top + \frac{h}{2} \hat{\rp}_{l+1}^\top \partial_z f(z((l+1)h), \rtheta_{l+1}) + \frac{h}{2}\hat{\rp}_{l+1}^\top \partial_z f(z(lh), \rtheta_{l}) \\
& \ \ \ \  + \frac{h^2}{2}\hat{\rp}_{l+1}^\top \partial_z f(z((l+1)h), \rtheta_{l+1})\partial_z f(z((l+1)h), \rtheta_{l+1}) - \hat{\rp}_l^\top\big| \\
\leq & \mathcal{O}(h^3) + (1 + C h)|\Delta_{l+1}^{\hat{\rp}}| \\
& \ \ \ \  + \big|\hat{\rp}_{l+1}^\top + \frac{h}{2} \hat{\rp}_{l+1}^\top \partial_z f(z((l+1)h), \rtheta_{l+1}) + \frac{h}{2}\hat{\rp}_{l+1}^\top \partial_z f(z(lh), \rtheta_{l}) \\
& \ \ \ \  + \frac{h^2}{2}\hat{\rp}_{l+1}^\top \partial_z f(z((l+1)h), \rtheta_{l+1})\partial_z f(z((l+1)h), \rtheta_{l+1}) - \hat{\rp}_l^\top\big|\, ,
\end{aligned}
\end{equation}
where in the first inequality we used triangular inequality and~\eqref{eqn:2-stage-ERK-Taylorapprox}, in the second equality we subtracted and then added $\hat{\rp}_{l+1}^\top$ in order to create the difference $\Delta_{l+1}^{\hat{\rp}}$, in the third inequality we again applied the triangular inequality, and in the last inequality we utilized Lemma~\ref{lem:2-stage-ERK-Delta_z}. 

To bound the last term of~\eqref{eqn:2-stage-ERK-delta-p-interim}, we compute:
$$\begin{aligned}
& \ \ |\hat{\rp}_{l+1}^\top + \frac{h}{2} \hat{\rp}_{l+1}^\top \partial_z f(z((l+1)h), \rtheta_{l+1}) + \frac{h}{2}\hat{\rp}_{l+1}^\top \partial_z f(z(lh), \rtheta_{l}) \\
& \ \ \ \ + \frac{h^2}{2}\hat{\rp}_{l+1}^\top \partial_z f(z((l+1)h), \rtheta_{l+1})\partial_z f(z((l+1)h), \rtheta_{l+1}) - \hat{\rp}_l^\top| \\
& = |\hat{\rp}_{l+1}^\top + \frac{h}{2} \hat{\rp}_{l+1}^\top \partial_z f(z((l+1)h), \rtheta_{l+1}) + \frac{h}{2}\hat{\rp}_{l+1}^\top \partial_z f(z(lh), \rtheta_{l}) \\
& \ \ \ \ + \frac{h^2}{2}\hat{\rp}_{l+1}^\top \partial_z f(z((l+1)h), \rtheta_{l+1})\partial_z f(z((l+1)h), \rtheta_{l+1}) - \hat{\rp}_{l+1}^\top \\
& \ \ \ \ \ \ - h(1-\frac{1}{2\alpha})\hat{\rp}_{l+1}^\top \partial_z f(z(lh),\rtheta_l) - \frac{h}{2\alpha}\partial_z f(z((l+\alpha)h), \rtheta_{l+\alpha}) \\
& \ \ \ \ - \frac{h^2}{2}\hat{\rp}_{l+1}^\top \partial_z f(z((l+\alpha)h), \rtheta_{l+\alpha}) \partial_z f(z(lh), \rtheta_{l})|\\
& \leq \text{Term I}+\text{Term II}
\end{aligned}
$$
where $\text{Term I}$ and $\text{Term II}$ are respectively:
\begin{equation}
\begin{aligned}
\text{Term I}= & \frac{h^2}{2}\left|\hat{\rp}_{l+1}^\top \left(\frac{\partial_z f(z((l+1)h), \rtheta_{l+1})-\partial_z f(z(lh), \rtheta_{l})}{h}\right.\right.\\
&\left.\left. - \frac{\partial_z f(z((l+\alpha)h), \rtheta_{l+\alpha}) - \partial_z f(z(lh), \rtheta_{l})}{\alpha h}\right)\right|
\end{aligned}
\end{equation}
and
\begin{equation}
\begin{aligned}
\text{Term II}=&\frac{h^2}{2}\left|\hat{\rp}_{l+1}^\top \left(\partial_z f(z((l+1)h), \rtheta_{l+1})\partial_z f(z((l+1)h), \rtheta_{l+1})\right.\right.\\
&- \left.\left.\partial_z f(z((l+\alpha)h), \rtheta_{l+\alpha}) \partial_z f(z(lh), \rtheta_{l}) \right)\right|
\end{aligned}
\end{equation}
It is easy to see that the two factors in Term I are both $\frac{d}{dt}\partial_z f(z(lh), \theta(lh)) + \mathcal{O}(h)$, so they cancel out, leaving another $\mathcal{O}(h)$, making $\text{Term I}=\mathcal{O}(h^3)$. To analyze $\text{Term II}$, we also notice that the difference term contribute one extra $\mathcal{O}(h)$, making the whole term $\text{Term II}=\mathcal{O}(h^3)$. Details are heavily repeated and thus not shown, and Lemma~\ref{lem:2-stage-ERK-bound} is called for the boundedness of $\hat{\rp}_{l+1}$ and $z$.

Now plugging these controls back into~\eqref{eqn:2-stage-ERK-delta-p-interim}, we get $$|\Delta_l^{\hat{\rp}}| \leq (1 + Ch)|\Delta_{l+1}^{\hat{\rp}}| + \mathcal{O}(h^3) \ .$$
By induction on $l$, we obtain the desired result.
\end{proof}

The validity of the middle-stage interpolation $\sfq_{l+\alpha}$ defined in~\eqref{eqn:q-def-2-stage-ERK} is due to the following corollary of Proposition~\ref{prop:2-stage-ERK-p-error}.

\begin{corollary}\label{cor:middle_stage_q}
    Let $\sfq_{l+\alpha}$ defined in~\eqref{eqn:q-def-2-stage-ERK} and $p(t)$ solve~\eqref{eqn:adjoint}, then:
    $$|\sfq_{l+\alpha} - p((l+\alpha)h)|=\mathcal{O}(h^2)\,.$$
\end{corollary}
\begin{proof}
By definition:
\begin{equation}\label{eqn:2-stage-ERK-q-mid-bound}
\begin{aligned}
&|\sfq_{l+\alpha} - p((l+\alpha)h)| \\
 \leq & |((1 - \alpha)\rp_l + \alpha \rp_{l+1}) - ((1 - \alpha)p(lh) + \alpha p((l+1)h))| \\
 + & \mathcal{O}(h^2) \\
 \leq & (1 - \alpha) |\rp_l - p(lh)| + \alpha |\rp_{l+1} - p((l+1)h)| + \mathcal{O}(h^2) \\
\leq &\mathcal{O}(h^2) \, ,
\end{aligned}
\end{equation} where we used Taylor expansion in the first inequality, triangular inequality in the second one, and Proposition~\ref{prop:2-stage-ERK-p-error} in the third one.

\end{proof}

\section{Additional Figures}\label{sec:appendix_additional-figures}

This section contains supplementary figures that provide additional insights into the experiments presented in the main text. 

\begin{figure}[ht]
\begin{center}
    \includegraphics[width=0.475\textwidth]{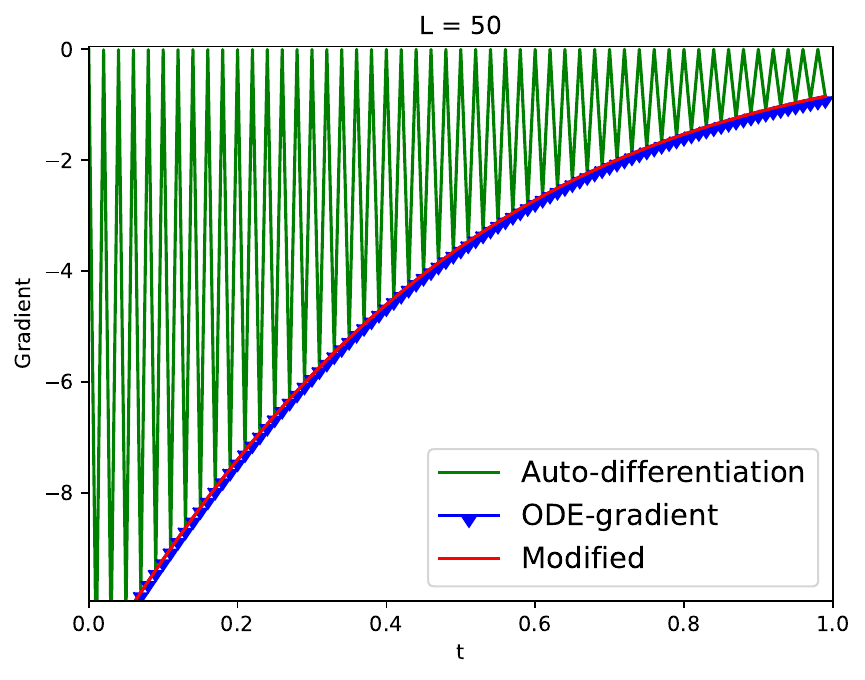}
    \includegraphics[width=0.475\textwidth]{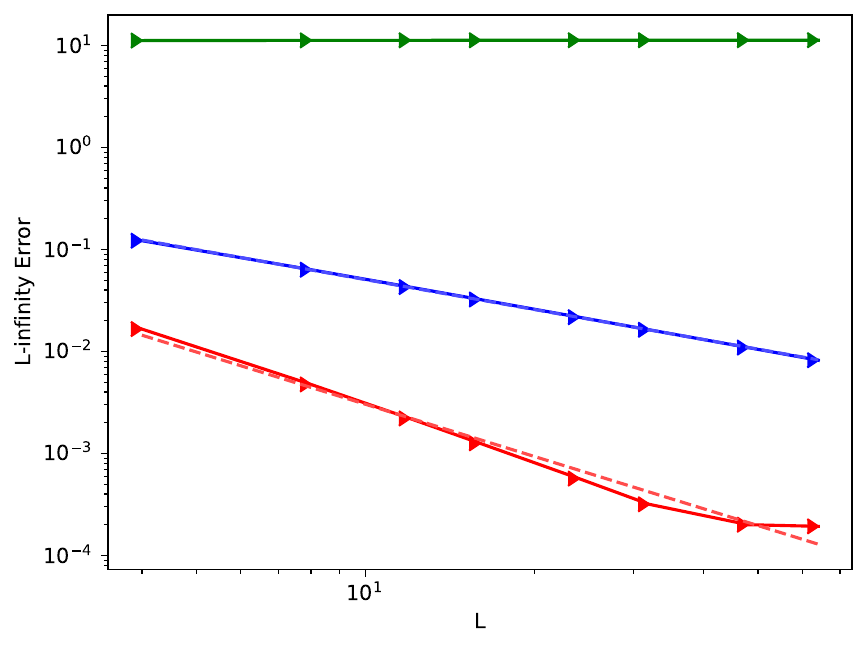}
\caption{Gradient computation for the Midpoint Method.
    The left panel compares $L \nabla_\Theta E$
      (green) with $\NN_\Theta$ formulated in~\eqref{eqn:midpoint}
      with $\frac{\delta\calE}{\delta\theta}$ (blue). $L \nabla_\Theta E$
      oscillates wildly, recovering the true gradient only at half-steps. Our modification (red) yields gradients that match the
      ground-truth. The panel on the right shows convergence with respect to $h$. The $L_\infty$ norm of the
      gradient difference stays as a constant even when $h$ is small, while our modified gradient converges to the ground-truth with a rate of $h^{1.70}$, confirming the theorem. The gradient from using forward Euler obtains the rate of $0.98$ (red and blue).}
\label{fig:failure_auto_midpoint}
\end{center}
\end{figure}

\begin{figure}[ht]
\begin{center}
    \includegraphics[width=0.475\textwidth]{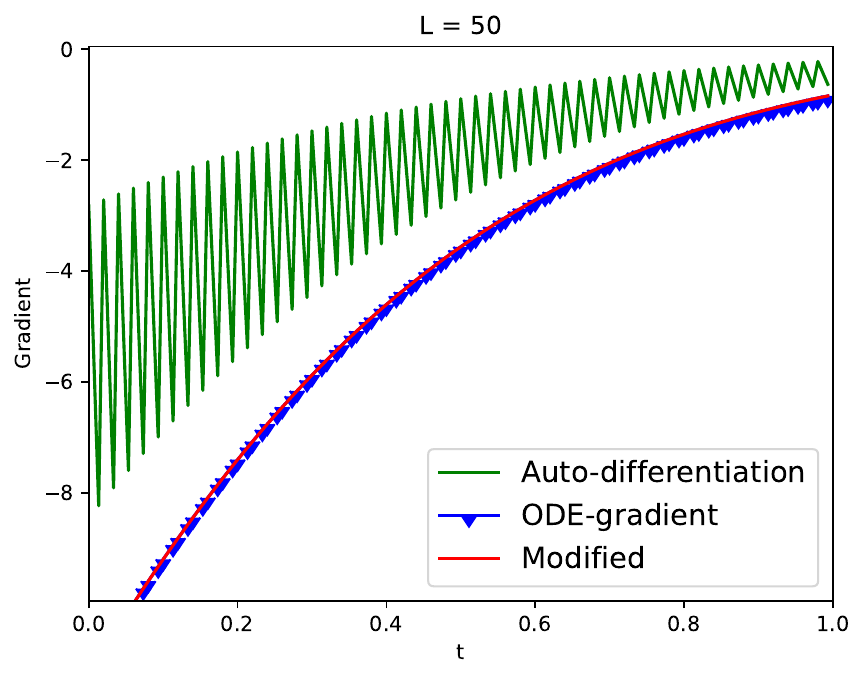}
    \includegraphics[width=0.475\textwidth]{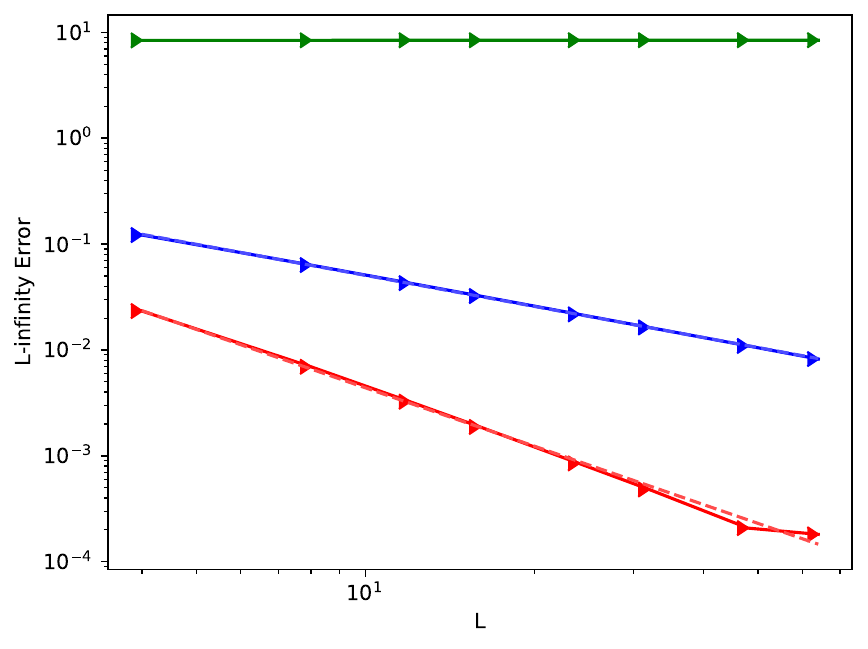}
\caption{Gradient computation for the Ralston Method.
    The left panel compares $L \nabla_\Theta E$
      (green) with $\NN_\Theta$ being defined as the Ralston method
      with $\frac{\delta\calE}{\delta\theta}$ (blue). The output of auto-differentiation $L \nabla_\Theta E$
      oscillates wildly, and is far away from the true gradient. Our modification (red) yields gradients that match the
      ground-truth. The right panel shows convergence. The $L_\infty$ error of the output of auto differentiation stays as a constant. The convergence rates for the modified gradient and
      that computed from forward Euler type discretization are $1.84$
      and $0.98$ respectively (red and blue).}
\label{fig:failure_auto_Ralston}
\end{center}
\end{figure}

\begin{figure}[ht]
\begin{center}
\includegraphics[width=0.75\textwidth]{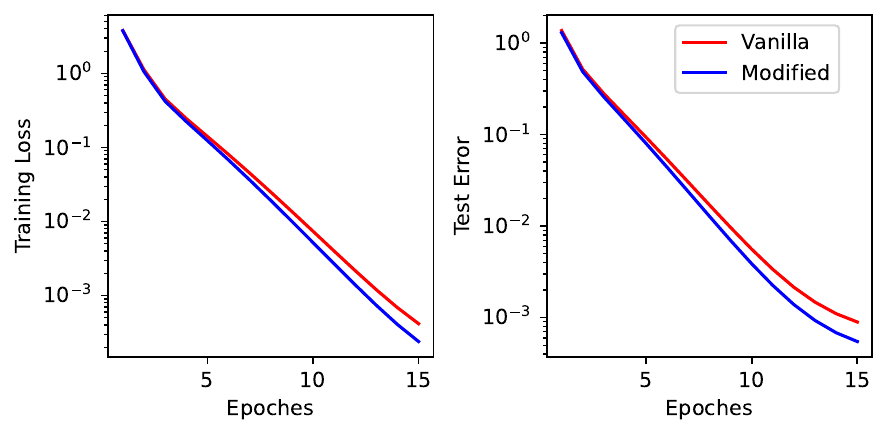} \\
\includegraphics[width=0.75\textwidth]{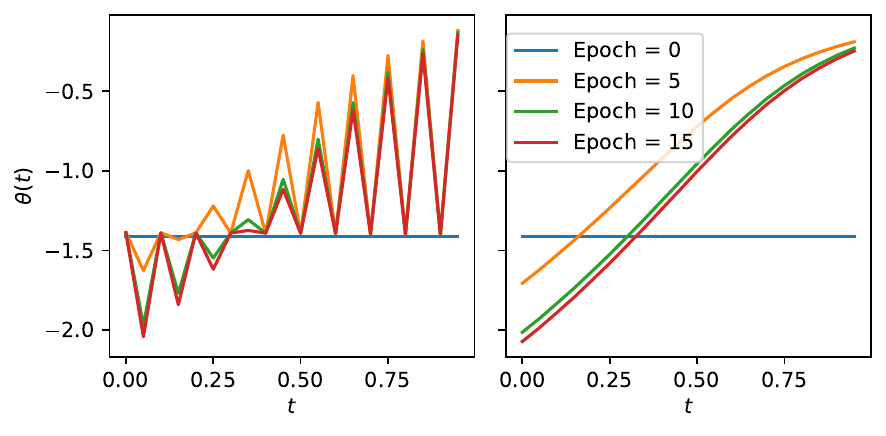} \\
\includegraphics[width=0.6\textwidth]{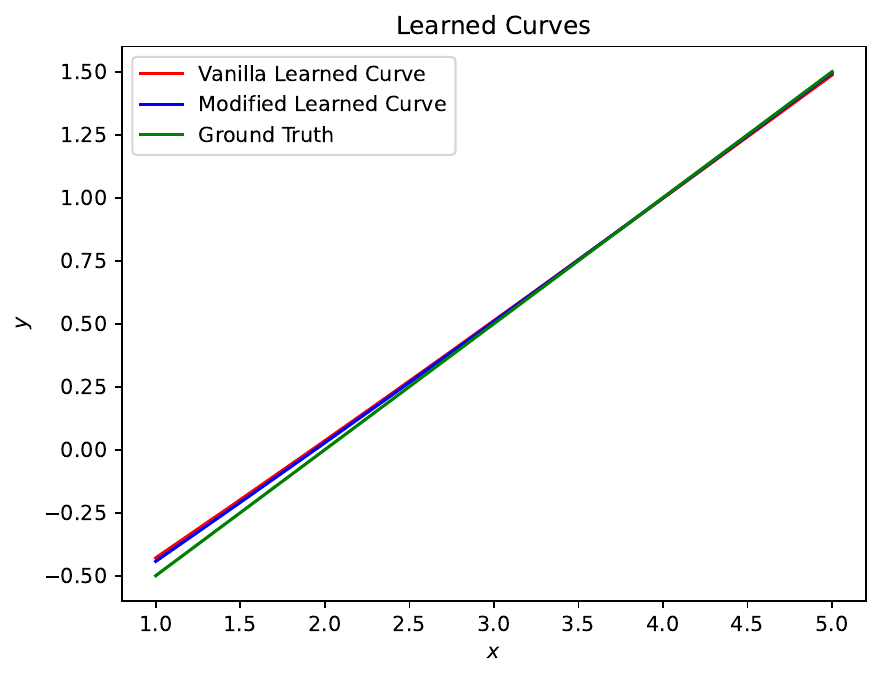}
\caption{
For Midpoint method. 
The top row shows the training loss and test error over epochs. 
The middle row compares $\Theta$ outputs from vanilla \revadd{(unmodified)} auto-differentiation and our modified gradients\revdel{. The}\revadd{, 
and the} bottom plot presents the final reconstruction. 
\revadd{Quantitatively, the corrected gradients achieve much lower final errors:
training loss decreases from $4.2\times10^{-4}$ to $2.4\times10^{-4}$, 
and test error from $9.0\times10^{-4}$ to $5.5\times10^{-4}$. 
These improvements confirm the effectiveness of the proposed correction.}}
\label{fig:Midpoint_Correction}
\end{center}
\end{figure}

\begin{figure}[ht]
\begin{center}
\includegraphics[width=0.75\textwidth]{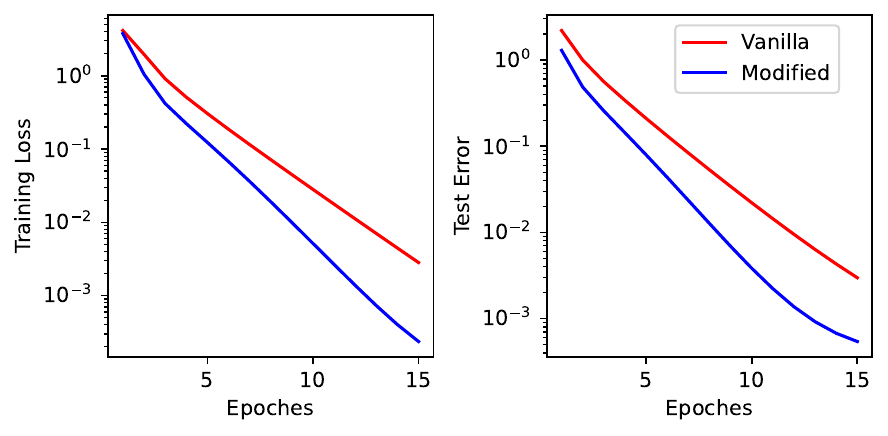} \\
\includegraphics[width=0.75\textwidth]{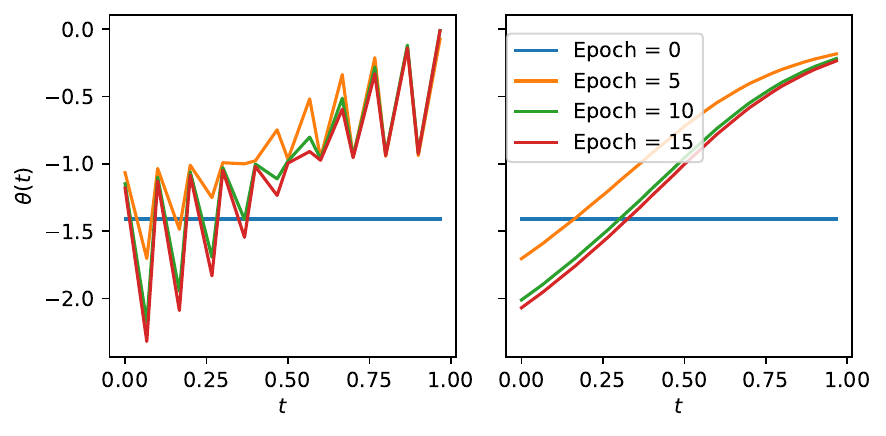} \\
\includegraphics[width=0.6\textwidth]{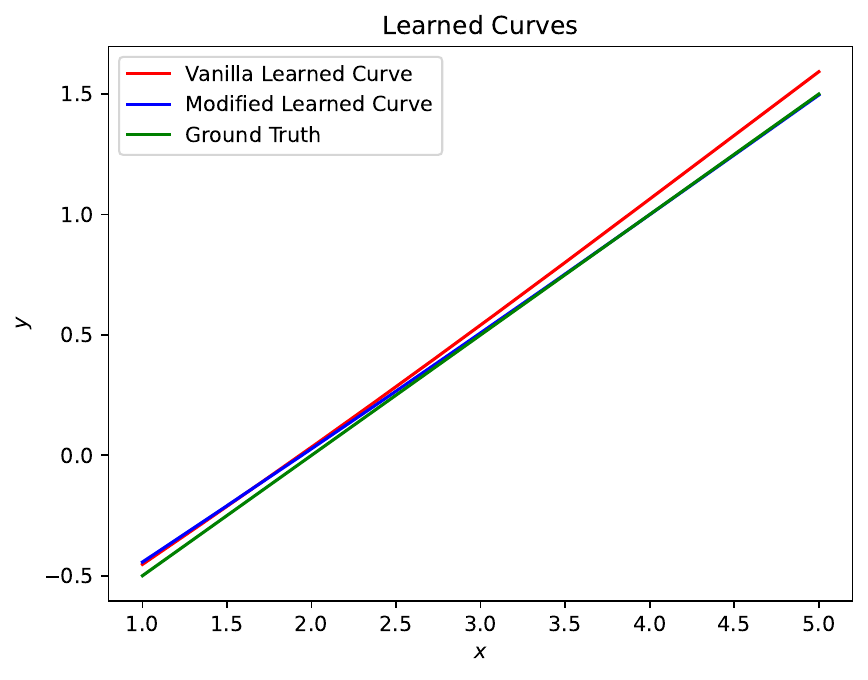}
\caption{For Ralston Method. 
The top row shows the training loss and test error over epochs. 
The middle row compares $\Theta$ outputs from vanilla \revadd{(unmodified)} auto-differentiation and our modified gradients\revdel{. The}\revadd{, 
and the} bottom plot presents the final reconstruction. 
\revadd{Quantitatively, the corrected gradients achieve much lower final errors:
training loss decreases from $2.8\times10^{-3}$ to $2.4\times10^{-4}$, 
and test error from $3.0\times10^{-3}$ to $5.4\times10^{-4}$. 
These improvements confirm the effectiveness of the proposed correction.}}
\label{fig:Ralston_Correction}
\end{center}
\end{figure}

\begin{figure}[ht]
\begin{center}
\includegraphics[width=0.8\textwidth]{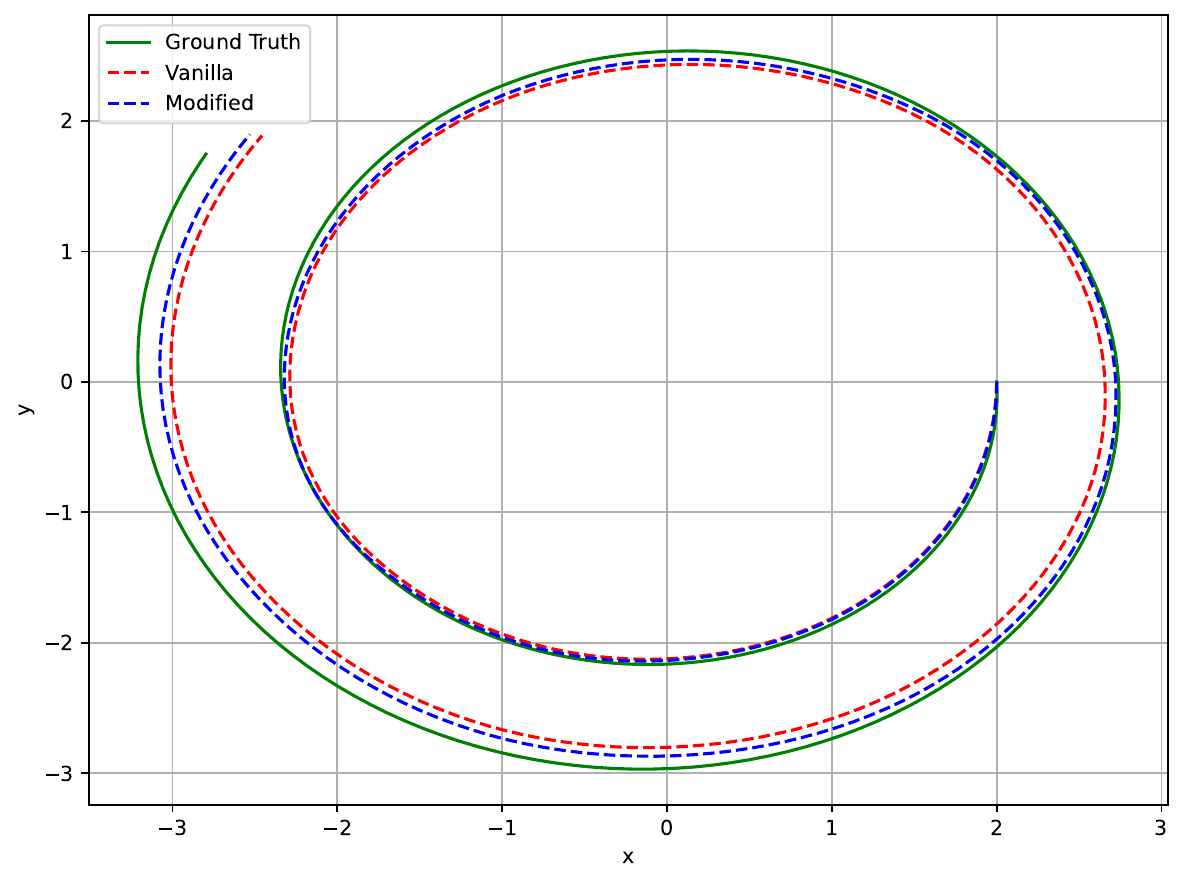}
\caption{For Midpoint Method. The figure compares reconstructions of a spiral learned using auto-differentiation gradients (red) and our modified gradients (blue). Our modification achieves results that align more closely with the ground truth (green).
\revadd{Quantitatively, the trajectory Root Mean Square Error (RMSE) drops from $0.23$ to $0.21$, 
the final-point error from $0.32$ to $0.25$, 
and the maximum deviation from $0.32$ to $0.28$. 
These results confirm that the corrected gradients yield trajectories that 
closely follow the ground-truth dynamics.}}
\label{fig:Midpoint_Spiral}
\end{center}
\end{figure}

\begin{figure}[ht]
\begin{center}
\includegraphics[width=0.8\textwidth]{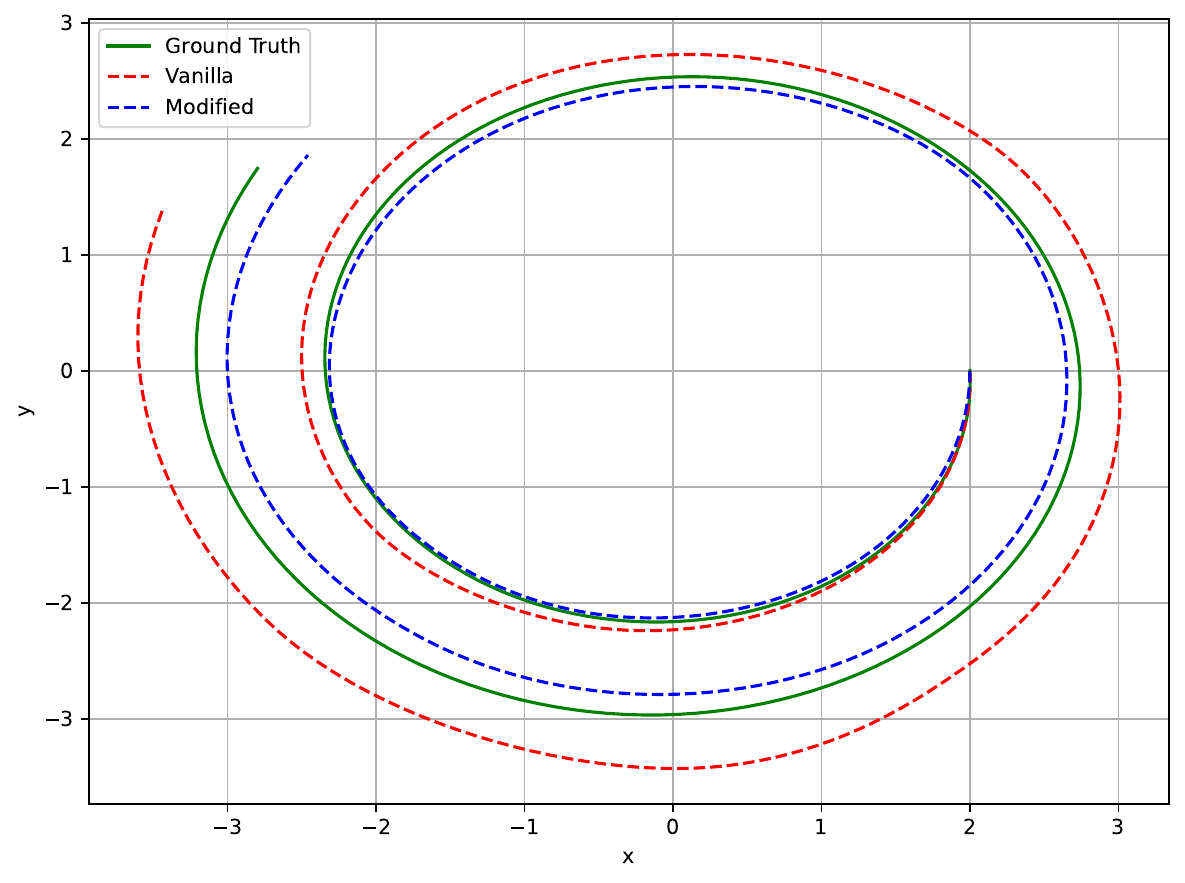}
\caption{For Ralston Method. The figure compares reconstructions of a spiral learned using auto-differentiation gradients (red) and our modified gradients (blue). Our modification achieves results that align more closely with the ground truth (green).
\revadd{Quantitatively, the trajectory Root Mean Square Error (RMSE) drops from $0.40$ to $0.23$, 
the final-point error from $0.79$ to $0.31$, 
and the maximum deviation from $0.79$ to $0.31$. 
These results confirm that the corrected gradients yield trajectories that 
closely follow the ground-truth dynamics.}}
\label{fig:Ralston_Spiral}
\end{center}
\end{figure}

\bibliographystyle{siamplain}
\bibliography{siap_rev_references}

\end{document}